\DeclareMathOperator{\dom}{dom}
\newtheorem{theorem}{Theorem}
\newtheorem{lemma}{Lemma}
\newtheorem{remark}{Remark}
\newtheorem{assumption}{Assumption}
\DeclareMathOperator*{\argmin}{argmin}
\DeclareMathOperator*{\diag}{diag}
\newcommand{\nn}{\nonumber}
\newcommand{\spsb}{SPSB~}
\newcommand{\slsb}{SLSB~}
\newcommand{\decsps}{DecSPS~}
\newcommand{\spsmax}{$\text{SPS}_{\max}$~}
\title{BiSLS/SPS: Auto-tune Step Sizes for \\ Stable Bi-level Optimization}
\author[1]{Chen Fan}
\author[2]{Gaspard Chon\'e-Ducasse}
\author[1,3]{Mark Schmidt}
\author[1]{Christos Thrampoulidis}
\affil[1]{University of British Columbia}
\affil[2]{Ecole Normale Sup\'erieure}
\affil[3]{Canada CIFAR AI Chair (Amii)}
\begin{document}

\setlength{\intextsep}{0pt}%
\setlength{\columnsep}{12pt}%

\maketitle

\begin{abstract}
The popularity of bi-level optimization (BO) in deep learning has spurred a growing interest in studying gradient-based BO algorithms.
However, existing algorithms involve two coupled learning rates that can be affected by approximation errors when computing hypergradients, making careful fine-tuning necessary to ensure fast convergence. To alleviate this issue, we investigate the use of recently proposed adaptive step-size methods, namely stochastic line search (SLS) and stochastic Polyak step size (SPS), for computing both the upper and lower-level learning rates. First, we revisit the use of SLS and SPS in single-level optimization without the additional interpolation condition that is typically assumed in prior works. For such settings, we investigate new variants of SLS and SPS that improve upon existing suggestions in the literature and are simpler to implement. Importantly, these two variants can be seen as special instances of general family of methods with an envelope-type step-size. This unified envelope strategy allows for the extension of the algorithms and their convergence guarantees to BO settings. Finally, our extensive experiments demonstrate that the new algorithms, which are available in both SGD and Adam versions, can find large learning rates with minimal tuning and converge faster than corresponding vanilla SGD or Adam BO algorithms that require fine-tuning. 

\end{abstract}

\section{Introduction}
Bi-level optimization (BO) has found its applications in various fields of machine learning such as hyperparameter optimization \citep{franceschi2017forward, grazzi2020iteration,lorraine2020optimizing,shaban2019truncated}, adversarial training \citep{zhang2022revisiting}, data distillation \citep{borsos2020coresets, zhou2022dataset}, neural architecture search \citep{liu2019darts,ren2021comprehensive}, neural-network pruning \citep{zhang2023advancing}, and meta-learning \citep{finn2017modelagnostic,rajeswaran2019metalearning,fan2021signmaml}. Specifically, BO is used widely for problems that exhibit a hierarchical structure of the following form: 
\begin{align}
    &\min_{x \in X} F(x) = \mathbb E_{\phi}[f(x,y^*(x);\phi)] \qquad \text{s.t.} \qquad y^*(x) = \argmin_{y \in Y} \mathbb E_{\psi}[g(x,y;\psi)] \label{eq:bilevel_obj}. 
\end{align}
Here, the solution to the lower-level objective $g$ becomes the input to the upper-level objective $f$, and in \eqref{eq:bilevel_obj} the upper-level variable $x$ is fixed when optimizing the lower-level variable $y$. To solve such bi-level problems using gradient-based methods requires computing the hypergradient of $F$, which based on the chain rule  is given as \citep{ghadimi2018approximation}:
\begin{align}
    \nabla F(x) = \nabla_x f(x,y^*(x)) - \nabla_{xy}^2 g(x,y^*(x)) [\nabla_{yy}^2 g(x,y^*(x))]^{-1}\nabla_y f(x,y^*(x)) \label{eq:hyperg}.
\end{align}

In practice, the closed-form solution $y^*(x)$ can be difficult to obtain, and one strategy is to run a few steps of (stochastic) gradient descent on $g$ with respect to $y$ to get an approximation $\bar{y}$, and use $\bar{y}$ in places of $y^*(x)$. We  denote the stochastic hypergradient based on $\bar{y}$ as $h_f(x, \bar{y})$ and the stochastic gradient of $g$ with respect to $y$ as $h_g$. This leads to a general gradient-based framework for solving bi-level optimization \citep{ghadimi2018approximation, hong2022twotimescale, chen2021tighter}. At each iteration $k$, run T (can be one or more) steps of SGD on $y$ with a step size $\beta$, $y^{k,t+1} = y^{k,t} - \beta h_g^{k,t}$, 
then run one step on $x$ using the approximated hypergradient:
\begin{align}
    x^{k+1} = x^{k} - \alpha h_f(x^k, y^{k+1}), \quad \text{where} \quad y^{k+1} = y^{k,T}. \label{eq:frame}
\end{align}
Based on this framework, a series of stochastic algorithms have been developed to achieve the optimal or near-optimal rate of their deterministic counterparts \cite{dagréou2022framework,dagréou2023lower}. These algorithms can be broadly divided into single-loop ($T = 1$) or double-loop ($T > 1$) categories \citep{ji2022bilevel}. 

\begin{wrapfigure}[13]{r}{0.60\textwidth}
       \begin{minipage}{0.30\textwidth}
         \centering
         \includegraphics[width=0.95\linewidth]{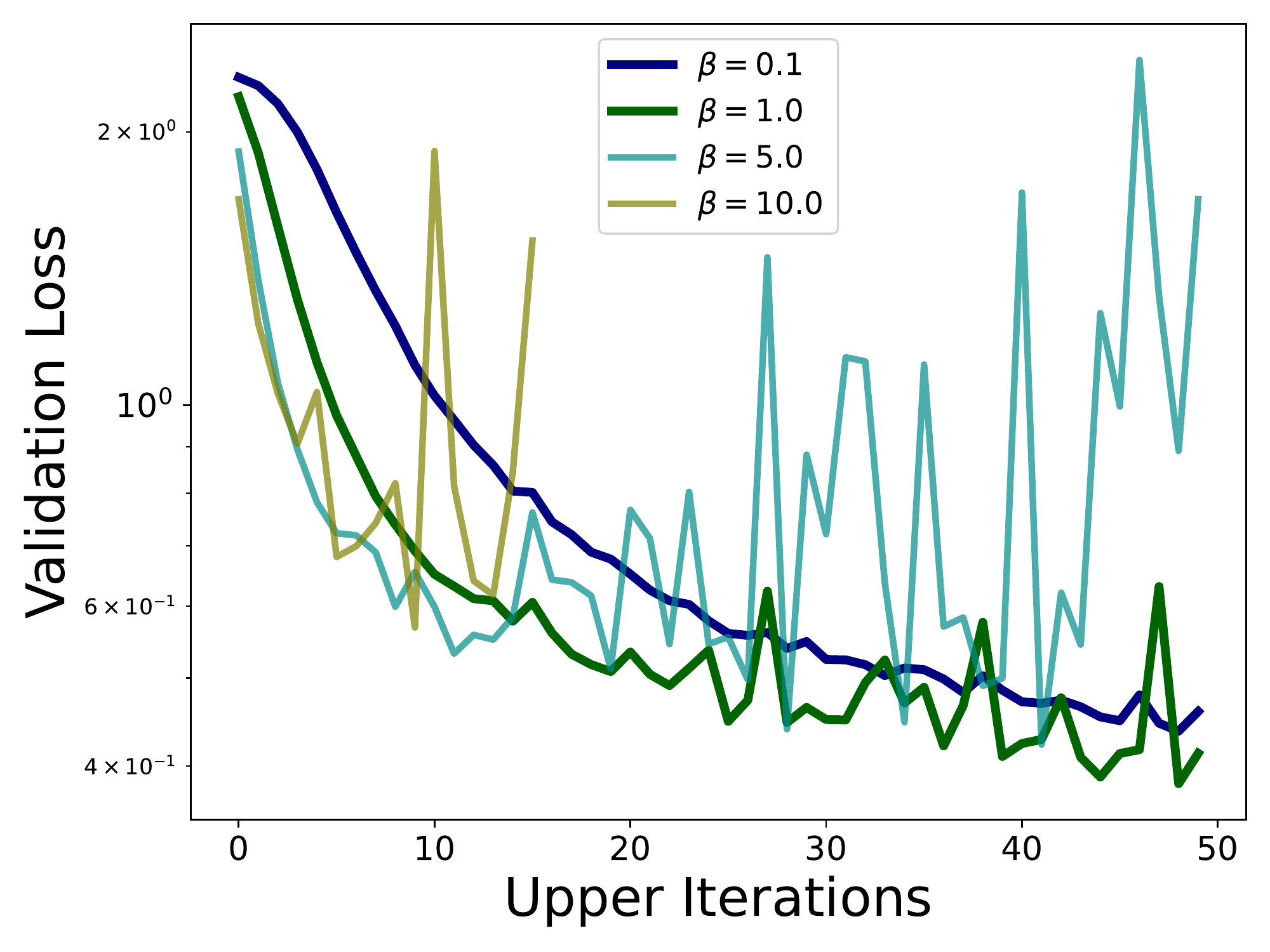}
         \label{fig:div_lower_step}
       \end{minipage}\hfill
       \begin{minipage}{0.30\textwidth}
         \centering
         \includegraphics[width=0.95\linewidth]{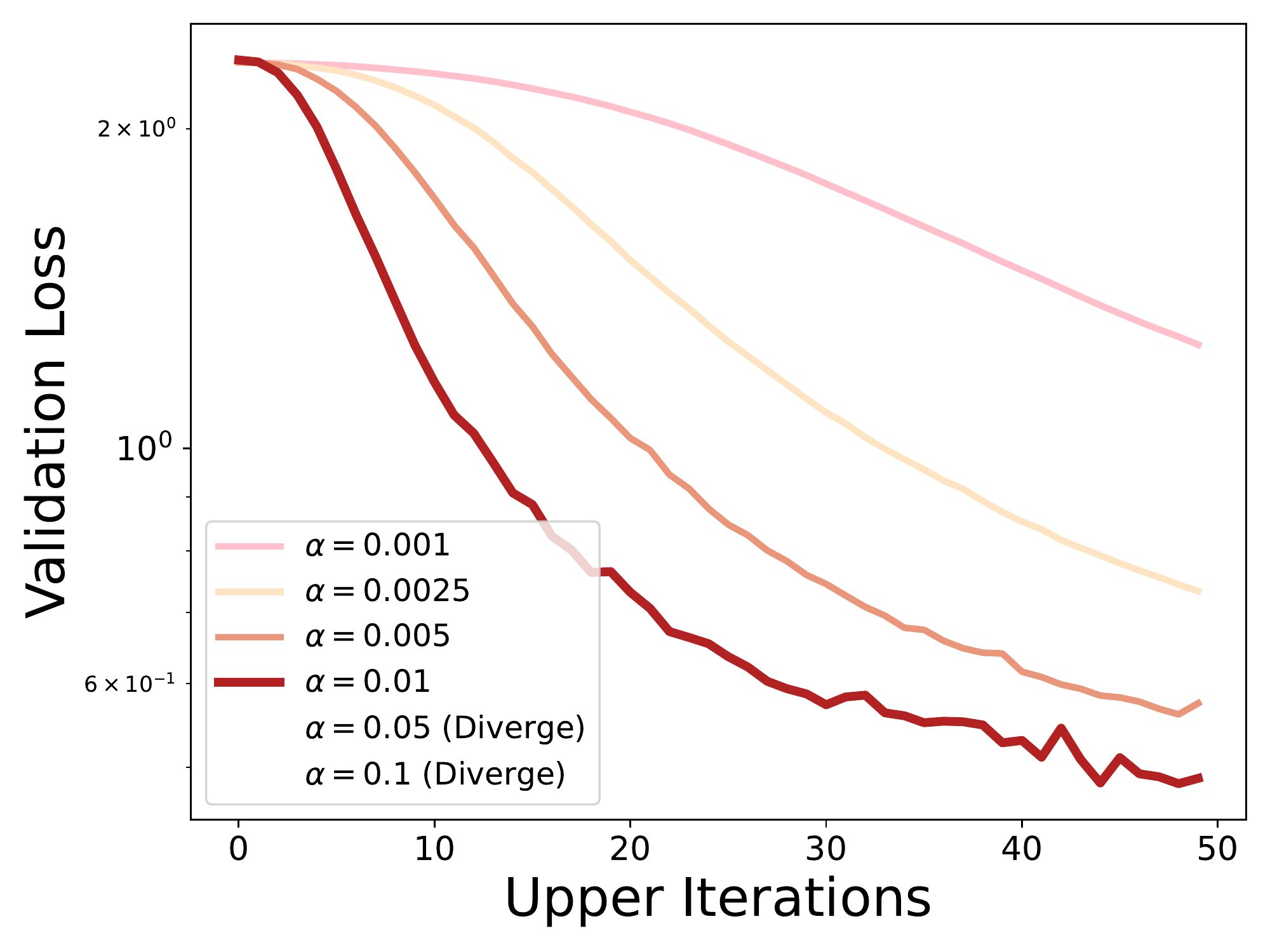}
         \label{fig:div_upper_step}
       \end{minipage}
        \caption{Results based on hyper-representation learning task (see Sec \ref{sec:exp} for details). Validation loss  against upper-level iterations for different values of $\beta$ (left, $\alpha = 0.005$) and $\alpha$ (right, $\beta = 0.01$). Unless carefully tuned, vanilla SGD-based methods for BO are very unstable.}
    \label{fig:demo_1}
\end{wrapfigure}


Unlike minimizing the single-level finite-sum (convex) problem 
\begin{align}
	F(x): = \min_{x\in \mathcal{C}}\frac{1}{N}\sum_{i=1}^{N} f_i(x), \label{eq:single_obj}  
\end{align}
where only one learning rate is involved when using SGD, bi-level optimization involves tuning both the lower and upper-level learning rates ($\beta$ and $\alpha$ respectively). This poses a significant challenge due to the potential correlation between these learning rates \citep{hong2022twotimescale}. Thus, as observed in Figure \ref{fig:demo_1}, algorithm divergence can occur when either $\alpha$ or $\beta$ is large.  While there is considerable literature on achieving faster rates in bi-level optimization \cite{khanduri2021nearoptimal, chen2022singletimescale, dagréou2022framework, dagréou2023lower}, only a few studies have focused on stabilizing its training and automating the tuning of $\alpha$ and $\beta$. \\ 

This work addresses the question: \textbf{Is it possible to utilize large $\alpha$ and $\beta$ without manual tuning?}
In doing so, we explore the use of stochastic adaptive-step size methods, namely stochastic Polyak step size (SPS) and stochastic line search (SLS), which utilize gradient information to adjust the learning rate at each iteration \cite{vaswani2021painless, loizou2021stochastic}. These methods have been demonstrated to perform well in interpolation settings with strong convergence guarantees \cite{vaswani2021painless, loizou2021stochastic}. However, applying them to bi-level optimization (BO) introduces significant challenges, as follows. BO requires tuning two correlated learning rates (for lower and upper-level). The bias in the stochastic approximation of the hypergradient complicates the practical performance and convergence analysis of SLS and SPS. Other algorithmic challenges arise for both algorithms. For SLS, verifying the stochastic Armijo condition at the upper-level involves evaluating the objective at a new $(x,y^*(x))$ pair, while $y^*(x)$ is only approximately known; For SPS, most existing variants guarantee good performance only in interpolating settings, which are typically not satisfied for the upper-level objective in BO \cite{ji2021bilevel}. Before presenting our solutions to the challenges above in Sec \ref{sec:contributions}, we first review the most closely related literature. 
\subsection{Related Work}

\paragraph{Gradient-Based Bi-level Optimization}
Penalty or gradient-based approaches have been used for solving bi-level optimization problems \citep{falk1995bilevel,vicente1994descent,ishizuka1992double}. Here we focus our discussions on stochastic gradient-based methods as they are closely related to this work. For double-loop algorithms, an early work (BSA) by \citet{ghadimi2018approximation} has derived the sample complexity of $\phi$ in achieving an $\epsilon$-stationary point to be $\mathcal{O}(\epsilon^{-2})$, but require the number of lower-level steps to satisfy $T \sim \mathcal{O}(\epsilon^{-1})$. Using a warm start strategy (stocBiO), \citet{ji2021bilevel} removed this requirement on $T$. However, to achieve the same sample complexity, the batch size of stocBiO grows as $\mathcal{O}(\epsilon^{-1})$. \citet{chen2021tighter} removed both requirements on $T$ and batch size by using the smoothness properties of $y^*(x)$ and setting the step sizes $\alpha$ and $\beta$ at the same scale. 
For single-loop algorithms, a pioneering work by \citet{hong2022twotimescale} gave a sample complexity of $\mathcal{O} (\epsilon^{-2.5})$, provided $\alpha$ and $\beta$ are on two different scales (TTSA). By making corrections to the $y$ variable update (STABLE), \citet{chen2022singletimescale} improved the rate to $\mathcal{O}(\epsilon^{-2})$. However, extra matrix projections required by STABLE can incur high computation cost \citep{chen2022singletimescale, chen2021tighter}. By incorporating momentum into the updates of $x$ and $y$ (SUSTAIN), \citet{khanduri2021nearoptimal} further improved the rate to $\mathcal{O}(\epsilon^{-1.5})$ \citep{cutkosky2020momentumbased}. Besides these single or double-loop algorithms, a series of works have drawn ideas from variance reduction to achieve faster convergence rates for BO. For example, \citet{yang2021provably} designed the VRBO algorithm based on SPIDER \citep{fang2018spider}. \citet{dagréou2022framework, dagréou2023lower} designed the SABA and SRBA algorithms based on SAGA and SARAH respectively, and demonstrate that they can achieve the optimal rate of $\mathcal{O} (\epsilon^{-1})$ \citep{defazio2014saga,nguyen2017sarah}. \citet{huang2023biadam} proposes to use Adam-type step sizes in BO. However, it introduces three sequences of learning rates $(\alpha_k, \beta_{k}, \eta_{k})$ that require tuning, which limits its practical usage. To our knowledge, none of these works have explicitly addressed the fundamental problem of how to select $\alpha$ and $\beta$ in bi-level optimization. In this work, we focus on the alternating SGD framework (T can be $1$ or larger), and design efficient algorithms that find large $\alpha$ and $\beta$ without tuning, while ensuring the stability of training. 

\begin{wrapfigure}[26]{R}{0.48\textwidth}
\begin{minipage}{0.48\textwidth}
\begin{algorithm}[H]
\caption{BiSLS-Adam/SGD}\label{BiSLS-SGD}
\begin{algorithmic}[1] 
\Require $x^0$, $y^0$, $K$, $T$, $\delta$, $\alpha_{b,0}$, $\beta_{b,0}$, $\eta$, $w \in (0,1)$
\Ensure $x$ 
\For{$k = 0, 1, \dots,  K-1$}
\State $y^{k,0} = y^k$
\For{$t = 0, 1,  \dots, T-1$}
\State $\beta_{b,k}^t \gets$ reset($\beta$, $\beta_{b,0}$, $\eta$, opt)
\Comment{see Algorithm \ref{reset}}
\State $\beta \gets$ line-search based on \eqref{eq:line_search} starting from $\beta_{b,k}^t$
\State $y^{k,t+1} = y^{k,t} \,-\, \beta \, h_g^{k,t}$, 
\EndFor
\State $y^{k+1} = y^{k,T-1}$;$\hat{x}^k=x^k$; $\hat{y}^{k+1}=y^{k+1}$ 
\State $\alpha \gets$ reset($\alpha$, $\alpha_{b,0}$, $\eta$, opt)
\While{\eqref{eq:armijo} based on ($\hat{x}^k$, $\hat{y}^{k+1}$, $\alpha$, $\delta$) does not hold.}
\State $\alpha = \alpha * w$
\State $\hat{x}^{k} = x^k \,-\, \alpha \, h_f(x^k,y^{k+1})$ or
\State $\hat{x}^{k} = x^k - \alpha A_k^{-1} h_f(x^k,y^{k+1})$
\State $\hat{y}^{k+1} = y^{k+1} \,-\, \beta \, h_g(\hat{x}^k, y^{k+1}) \label{eq:inline_alg_y}$
\EndWhile
\State $x^{k+1} = x^k \,-\, \alpha \, h_f^k$
\EndFor
\end{algorithmic}
\end{algorithm}
\end{minipage}
\end{wrapfigure}

\begin{figure}[t!]
       \begin{minipage}{0.24\textwidth}
         \centering
         \includegraphics[width=1.0\linewidth]{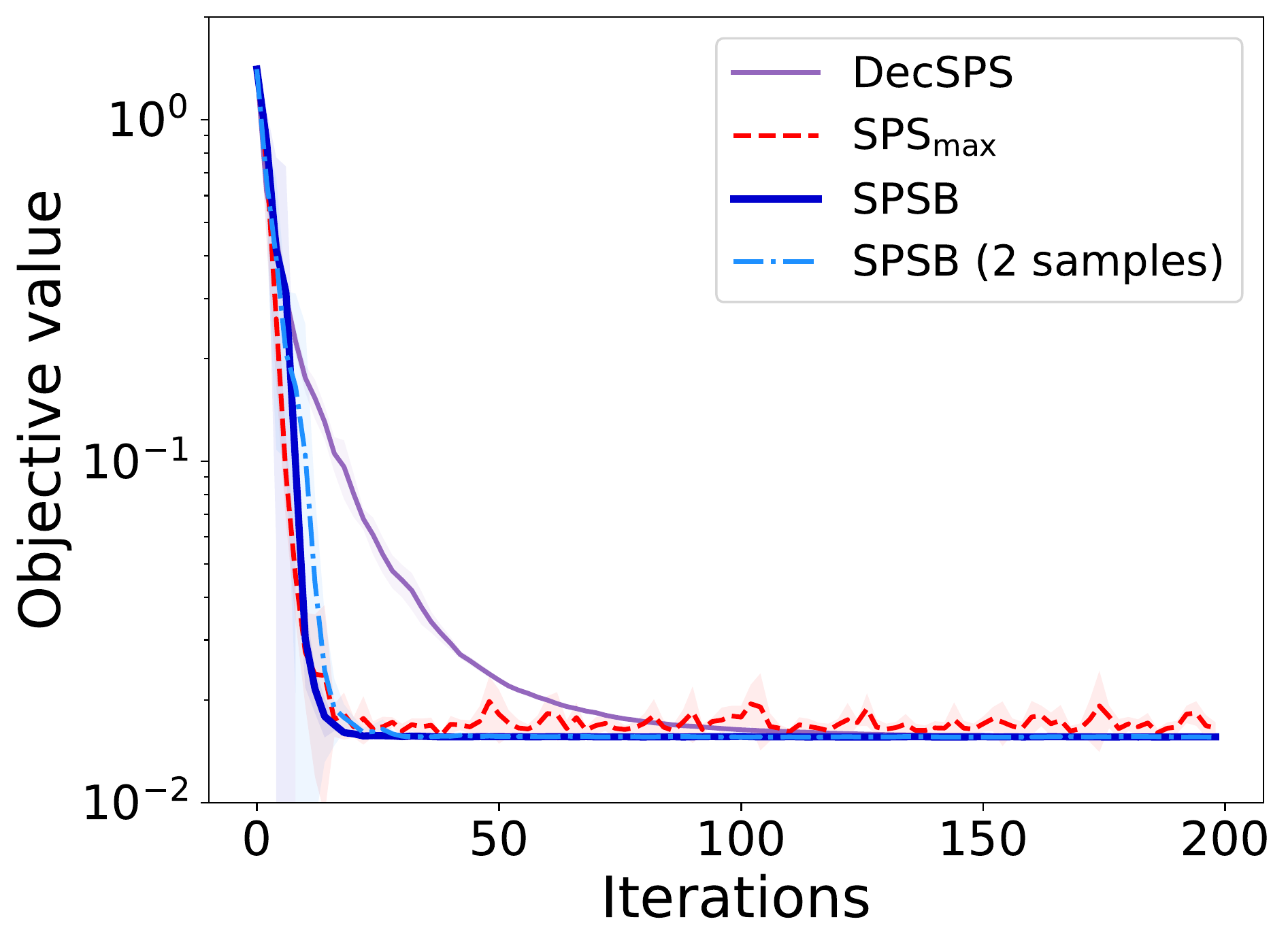}
         \label{fig:q1}
       \end{minipage} 
       \begin{minipage}{0.24\textwidth}
         \centering
         \includegraphics[width=1.0\linewidth]{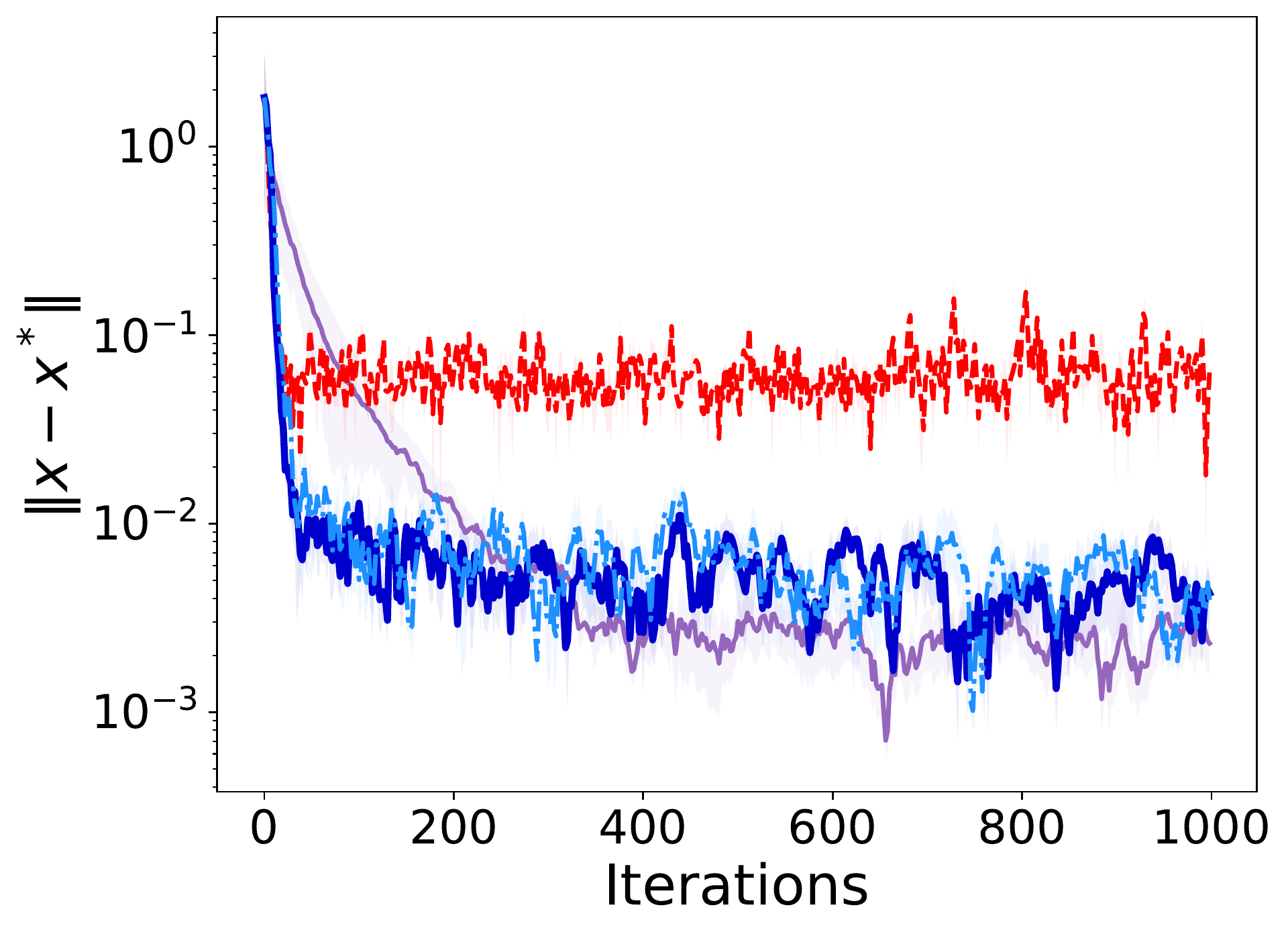}
         \label{fig:q2}
       \end{minipage} 
       \begin{minipage}{0.24\textwidth}
         \centering
         \includegraphics[width=1.0\linewidth]{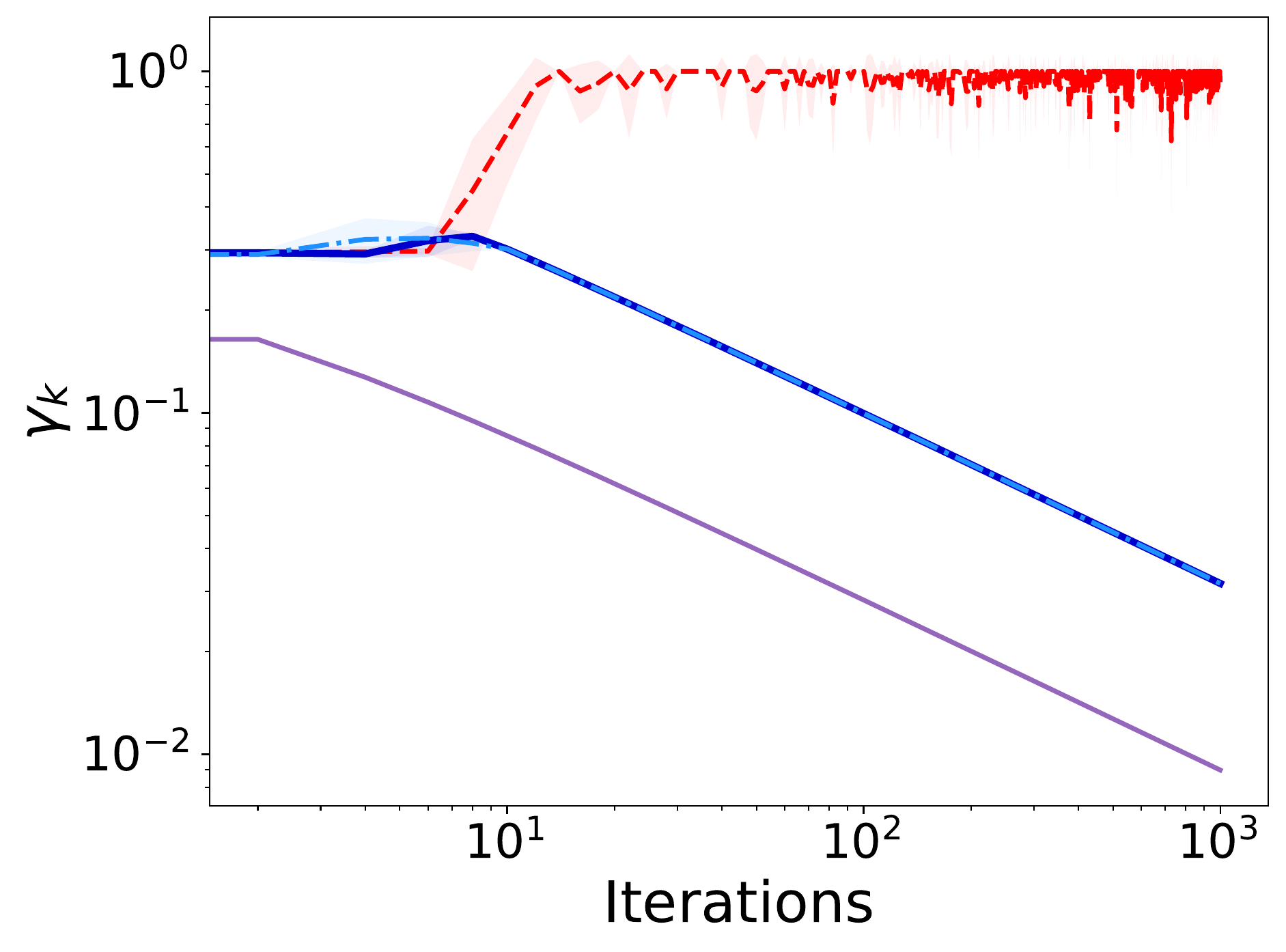}
         \label{fig:q3}
       \end{minipage} 
       \begin{minipage}{0.24\textwidth}
         \centering
         \includegraphics[width=1.0\linewidth]{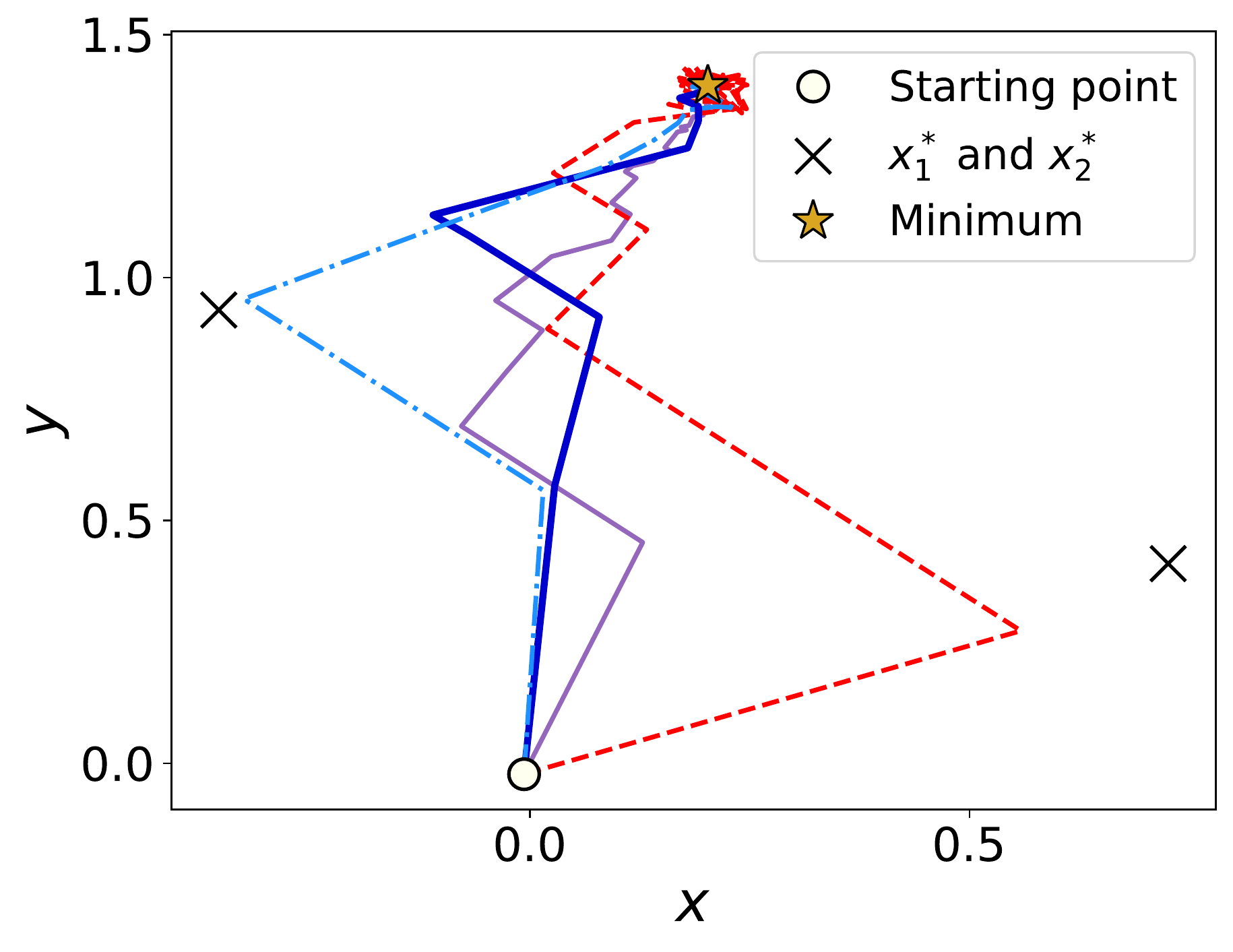}
         \label{fig:q4}
       \end{minipage} 
       \centering
        \caption{Experiments on quadratic functions adapted from \cite{loizou2021stochastic}. The objective is the sum of two-dimensional functions $f_i = \frac{1}{2}(x-x_i^*)^T H_i (x-x_i^*)$, where $H_i$ is positive definite and $i=1,2$ (see Appendix B for more details). From left to right, we show: the objective value, distance to optimum, step size, and iterate trajectories.}
        \label{fig:Quadratics}
\end{figure}
\paragraph{Adaptive Step Size} Adaptive step-size such as Adam has found  great success in modern machine learning, and different variants have been proposed \cite{kingma2014adam, reddi2019convergence, ward2021adagrad, loshchilov2019decoupled, luo2019adaptive}. Here, we limit our discussions on two adaptive step sizes that are most relevant to this work. The Armijo line search is a classic way for finding step sizes for gradient descent \cite{wright2006numerical}. \citet{vaswani2021painless} extends it to the stochastic setting (SLS) and demonstrates that the algorithm works well with minimal tuning required under interpolation, where the model fits the data perfectly. This method is adaptive to local smoothness of the objective, which is typically difficult to predict a priori. However,  the theoretical guarantee of SLS in the non-interpolating regime is lacking. In fact, the results in Figure \ref{fig:demo_2} suggest that SLS can perform poorly for convex losses when interpolation is not satisfied. Besides SLS, another adaptive method derived from the Polyak step size is proposed by \citet{loizou2021stochastic} with the name stochastic Polyak step size (SPS). \citet{loizou2021stochastic} further places an upper bound on the step size resulting in the $\text{SPS}_{\max}$ variant. Similar to SLS, the algorithm performs well when the model is over-parametrized. Without interpolation, the algorithm converges to a neighborhood of the solution whose size depends on this upper bound.

In a later work, \citet{orvieto2022dynamics} make the SPS converge to the exact solution by ensuring the step size and its upper bound are both non-increasing (\decsps). However, enforcing monotonicity may result in the step size being smaller than decaying-step SGD and losing the adaptive features of SPS (see Figure \ref{fig:Quadratics}, \ref{fig:demo_2}). In this work, we propose new versions of SLS and SPS that do not require monotonicity and extend them into the alternating SGD bi-level optimization framework \eqref{eq:frame}. 

\section{Summary of Contributions} \label{sec:contributions}
We discuss our main contributions in this section, which is organized as follows. First, we discuss our variants of SPS and SLS, and unify them under the umbrella of ``envelope-type step-size''. Then, we extend the envelope-type step size to the bi-level setting. Finally, we discuss our bi-level line-search algorithms based on Adam and SGD. 
\paragraph{Converging \spsb and \slsb by Envelope Approach} 
We first propose simple variants of SLS and SPS that converge in the non-interpolating setting while not requiring the step size to be monotonic. To this end, we introduce a new stochastic Polyak step size (SPSB). For comparison, we also recall the step-sizees of \spsmax and \decsps. For all methods, the iterate updates are given as $x_{k+1} = x_k - \gamma_k \nabla f_{i_k}(x^k)$ where $i_k$ is sampled uniformly from $[n]=\{1,\ldots,n\}$ at each iteration $k$. The step-sizes $\gamma_k$ are then defined as follows: 
 \begin{align}
    &\text{\spsmax \cite{loizou2021stochastic}: } \quad \gamma_k = \min \{ \frac{f_{i_k}(x^k)- f_{i_k}^*}{c \lVert \nabla f_{i_k}(x^k)\rVert^2}, \gamma_{b,0}\} \label{eq:sps_max} \\ 
    &\text{\decsps \cite{orvieto2022dynamics}:} \quad \gamma_{0} = \bar{\gamma} \quad \gamma_k = \frac{1}{c_k}\min \{ \frac{f_{i_k}(x^k)- l_{i_k}^*}{ \lVert \nabla f_{i_k}(x^k)\rVert^2}, c_{k-1} \gamma_{k-1} \} \quad \forall k \geq 1 \label{eq:decsps} \\ 
    &\textbf{\spsb (ours): } \quad \gamma_k = \min \{ \frac{f_{i_k}(x^k)- l_{i_k}^*}{c_k \lVert \nabla f_{i_k}(x^k)\rVert^2}, \gamma_{b,k} \}, \label{eq:spsb}
 \end{align}
where $f_i^* = \inf_x f_i(x)$, $\bar{\gamma} = \frac{1}{c_0} \min \{ \frac{f_{i_0}(x^0)- l_{i_0}^*}{ \lVert \nabla f_{i_0}(x^0)\rVert^2 }, c_0 \gamma_{b,0}\}$ , $c_k$ is non-decreasing, $\gamma_{b,k}$ is non-increasing, and $l_i^* \leq f_i^*$ is any lower bound.


 \begin{figure}[t!]
        \begin{subfigure}{0.33\textwidth}
         \centering
         \includegraphics[width=1.0\linewidth]{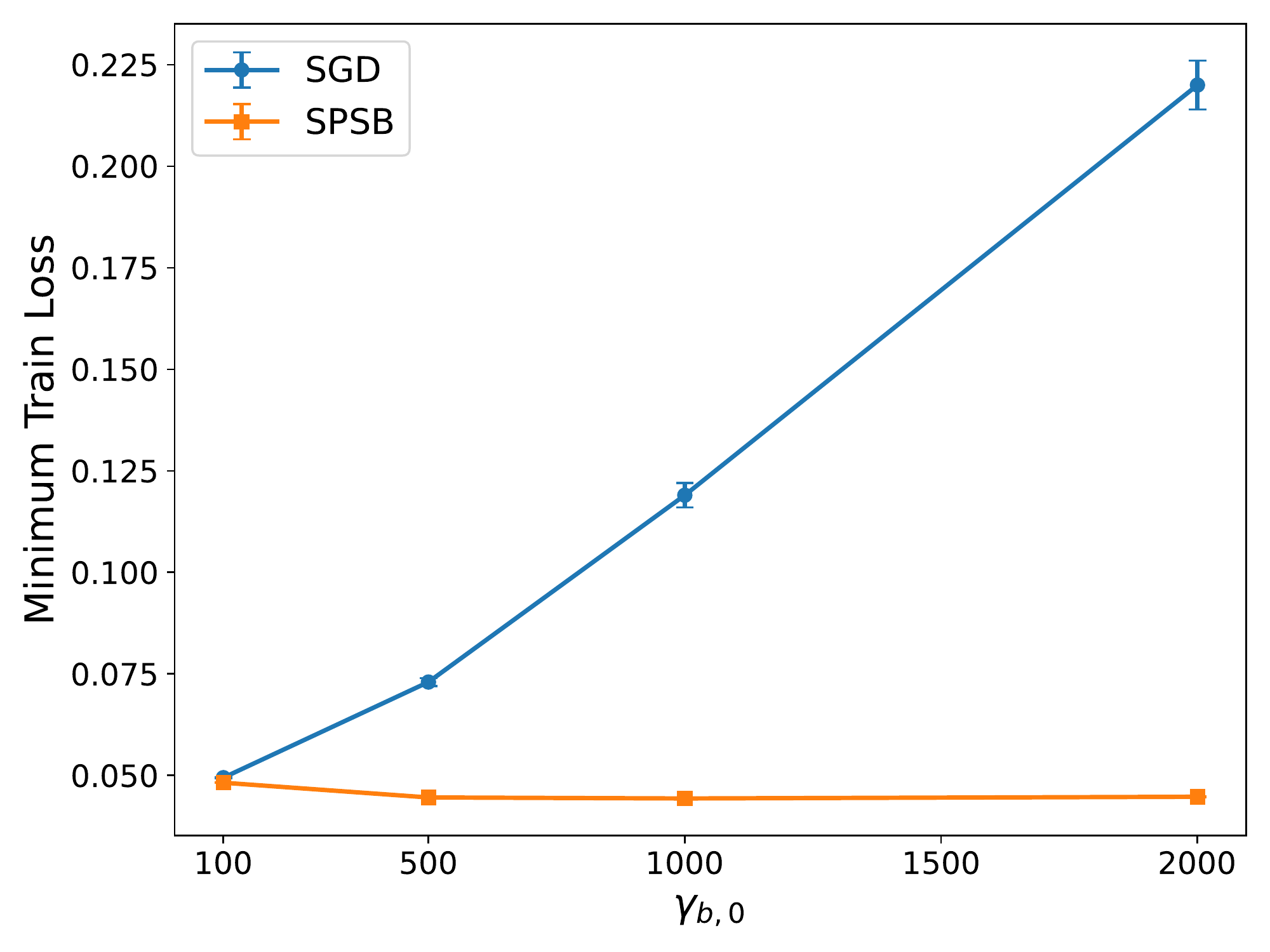}
         \caption{Minimum train loss vs. $\gamma_{b,0}$}
         \label{fig:gammab_convex_demo}
       \end{subfigure} 
       \begin{subfigure}{0.33\textwidth}
         \centering
         \includegraphics[width=1.0\linewidth]{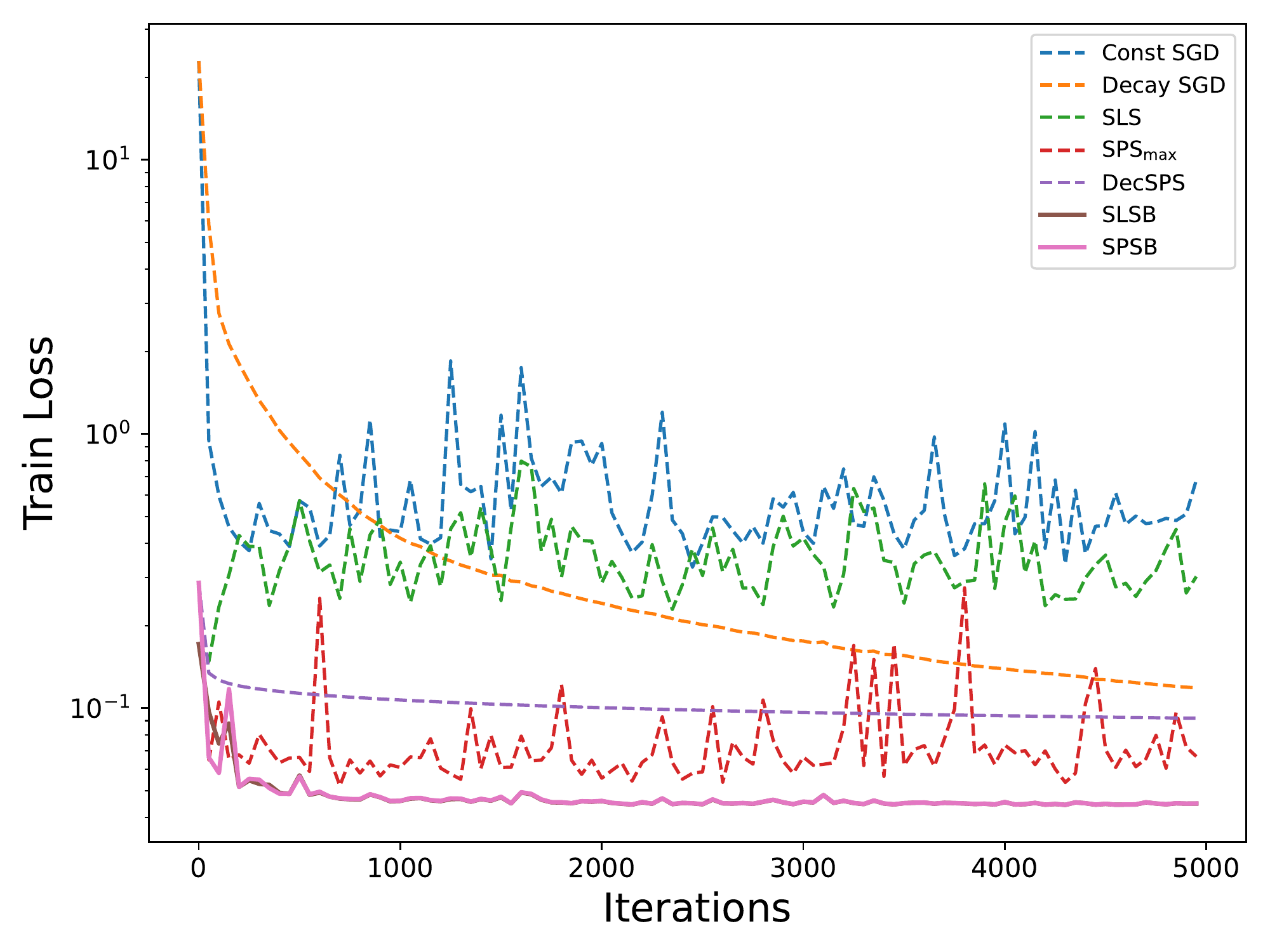}
        \caption{Train loss vs. iterations}
         \label{fig:train_loss_convex_demo}
       \end{subfigure}
       \begin{subfigure}{0.33\textwidth}
         \centering
         \includegraphics[width=1.0\linewidth]{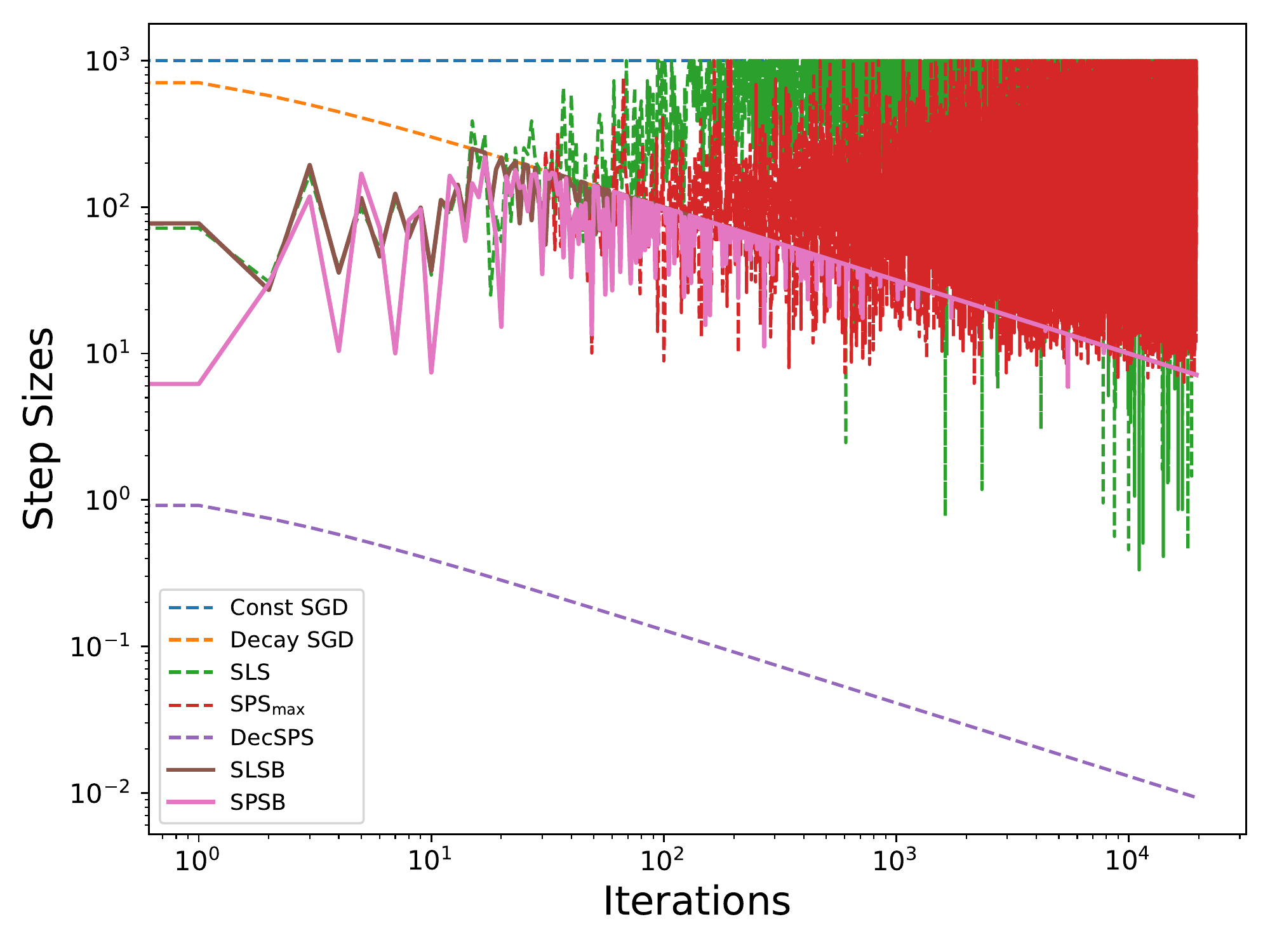}
        \caption{Step size vs. iterations}
         \label{fig:step_size_convex_demo}
       \end{subfigure}
    \caption{Binary linear classification on w8a dataset using logistic loss \cite{chang2011libsvm}. (a) Minimum train loss of decaying-step SGD and SPSB for different $\gamma_{b,0}$'s.
    (b)(c) Train loss and step size against iterations, respectively. We choose $c=1$ and $\bar{c}=1$ for \spsmax and SLS respectively; $c_k = \sqrt{k+1}$ for \decsps; $c_k = 1$ and $\gamma_{b,k}=\frac{\gamma_{b,0}}{\sqrt{k+1}}$ for \spsb; $\bar{c} = 0.1$ and $\gamma_{b,k}=\frac{\gamma_{b,0}}{\sqrt{k+1}}$ for \slsb; $\gamma_{b,k} = \frac{\gamma_{b,0}}{\sqrt{k+1}}$ for decaying-step SGD. For (b) and (c), we set $\gamma_{b,0} = 1000$ for all algorithms.
    }
    \label{fig:demo_2}
\end{figure}

Unlike \spsmax in which $\gamma_{b,0}$ is a constant, our upper bound $\gamma_{b,k}$ is non-increasing. Also, unlike \decsps in which both the step size and the upper bound are non-increasing (this is because $\gamma_k \leq \frac{c_{k-1}}{c_k} \gamma_{k-1}$ and $\min \{\frac{1}{2cL_{\max}}, \frac{c_0 \gamma_{b,0}}{c_k}\} \leq \gamma_k \leq \frac{c_0\gamma_{b,0}}{c_k}$ \cite[Lemma 1]{orvieto2022dynamics}), we simplify the recursive structure and do not require the step-size to be monotonic. 
\begin{wrapfigure}{R}{0.2\textwidth}
\begin{minipage}[t]{0.2\textwidth}
\begin{algorithm}[H]
\caption{reset}\label{reset}
\begin{algorithmic}[1]
\Require $p$, $q$, $\eta \geq 1$, opt
\Ensure $p$
\If{opt = 1}
\State $p \gets q$
\ElsIf{opt = 2}
\State $p \gets p$
\ElsIf{opt = 3}
\State $p \gets \eta \cdot p$
\EndIf
\end{algorithmic}
\end{algorithm}
\vfill
\end{minipage}    
\end{wrapfigure}
As we empirically observe in Figure \ref{fig:demo_2}, the step size of \decsps is similar to that of decaying SGD and in fact can be much smaller. Interestingly, the resulting performance of \decsps is worse than \spsmax despite \spsmax eventually becoming unstable once the iterates get closer to the neighborhood of a solution and the step-size naturally behaves erratically. This is not unexpected due to small gradient norms (note the division by gradient-norm in \eqref{eq:sps_max}) and dissimilarity between samples in the non-interpolating scenario. Moreover, note that the adaptivity of SPS in the early stage seems to be lost in  \decsps due to the monotonicity of the latter. On the other hand, \spsb not only takes advantage of the large SPS steps that leads to fast convergence, but also stays regularized due to the non-increasing upper bound $\gamma_{b,k}$ in \eqref{eq:spsb}. These observations are further supported by the experiments on quadratic functions given in Figure \ref{fig:Quadratics}, where we observe the fast convergence of \spsb and the instability of \spsmax. Furthermore, \spsb is significantly more robust to the choice of $\gamma_{b,0}$ than decaying-step SGD as shown by Figure \ref{fig:gammab_convex_demo}. 
Motivated by the good practical performance of \spsb, we take a similar approach for SLS. The SLS proposed and analyzed by \citet{vaswani2021painless} starts with $\gamma_{b,0}$ and in each iteration $k$ finds the largest $\gamma_k \leq \gamma_{b,0}$ that satisfies:
 \begin{align}
    f_{i_k}(x_k - \gamma_k \nabla f_{i_k}(x_k)) \leq f_{i_k}(x_k) - \bar{c} \cdot \gamma_k \lVert \nabla f_{i_k}(x_k) \rVert^2, \quad 0<\bar{c}<1. \label{eq:line_search}
 \end{align}
 To ensure its convergence without interpolation, we replace $\gamma_{b,0}$ with appropriate non-increasing sequence $\gamma_{b,k}$. 
 We name this variant of SLS as \slsb. Interestingly, the empirical performance and step size of \slsb are similar to those of \spsb (see Figure \ref{fig:demo_2}). This can be explained by observing that the step sizes of \spsb and \slsb share similar envelope structures, as follows (see Lemma \ref{lem:bounds} in Appendix A): 
 \begin{align}
      &\text{\spsb: } \quad \min\{\frac{1}{2 c L_{\max}},\gamma_{b,k} \} \leq \gamma_k = \min \{ \frac{f_{i_k}(x^k)-l_{i_k}^*}{c \lVert \nabla f_{i_k}(x^k)\rVert^2} , \gamma_{b,k}\}, \quad 0 < c, \nn \\
     &\text{\slsb: } \quad \min\{\frac{2(1-\bar{c})}{L_{\max}},\gamma_{b,k} \} \leq \gamma_k \leq \min \{ \frac{f_{i_k}(x^k)-l_{i_k}^*}{\bar{c} \lVert \nabla f_{i_k}(x^k)\rVert^2} , \gamma_{b,k}\}, \quad 0<\bar{c}<1.\nn
 \end{align}
Therefore, we unify their analysis based on the following generic \emph{envelope-type step size}:
\begin{align}
    \gamma_k =  \min \{ \max \{\gamma_{l,k}, \tilde{\gamma}_k\}, \gamma_{b,k} \}, \quad  \gamma_{l,k} = \min\{ w, \gamma_{b,k}\}, \label{eq:envelop}
\end{align}
where $\omega > 0$, $\gamma_{b,k}$ is non-increasing, and  $\tilde{\gamma}_k$ satisfies $\gamma_{l,k} := \min \{\omega, \gamma_{b,k} \} \leq \tilde{\gamma}_k \leq \gamma_{b,k}$. 
We show that this envelope-type step size converges at a rate $\mathcal{O}(\frac{1}{\sqrt{K}})$ and $\mathcal{O}(\frac{1}{K})$ for convex and strongly-convex losses respectively.
\paragraph{Envelope Step Size for Bi-level Optimization (BiSPS)}
We extend the use of envelope-type step sizes to the bi-level setting. 
The step sizes for upper and lower-level objectives of our general envelope-type method are: 
\begin{align}
    &\text{Upper: }\alpha_k = \min \{ \max \{\alpha_{l,k}, \tilde{\alpha}_k\}, \alpha_{b,k} \} \quad \text{hence} \quad \alpha_{l,k} \leq \tilde{\alpha}_k \leq \alpha_{b,k} \label{eq:bisps_u} \\
     &\text{Lower: } \beta_{k,t} = \min \{\frac{ g(x^k,y^{k,t};\psi)-g(x^k,y^*_{x^k,\psi};\psi)}{p \lVert \nabla_y g(x^k,y^{k,t};\psi)\rVert^2}, \beta_{b,k}\}  \quad \forall t, \label{eq:bisps_l} 
\end{align}
where $y^*_{x^k,\psi}$ is the minimizer of the function $g(x^k,\cdot;\psi)$, and $\alpha_{l,k}$, $\alpha_{b,k}$, and $\beta_{b,k}$ are three non-increasing sequences. Note that $\beta_{b,k}$ is fixed over the lower-level iterations for a given $k$, therefore this is equivalent to running $T$ steps of \spsmax to minimize the function $g$ at each upper iteration $k$. However, the decrease in the upper bound $\beta_{b,k}$ with $k$ is crucial to guarantee the overall convergence of the algorithm (see Theorem \ref{thm:bilevel}). Starting from the general step-size rules in \eqref{eq:bisps_u} and \eqref{eq:bisps_l}, our bi-level extension of SPS (BiSPS) follows by setting $\alpha_k$ in the form of SPS computed using stochastic hypergradient $h_f^k$. That is,
\begin{align}
      \bar{\alpha}_k = \frac{f(x^k, y^{k+1}; \phi) - l_{f(\cdot,y^{k+1};\phi)}^*}{p \lVert h_f^k\rVert^2}, \quad \alpha_{l,k} = \frac{\alpha_{l,0}}{\sqrt{k+1}}, \quad \alpha_{b,k} = \frac{\alpha_{b,0}}{\sqrt{k+1}}, 
      \label{eq:bisps}  
\end{align} 
where $\alpha_{l,0} \leq \alpha_{b,0}$ and $l_{f(\cdot,y^{k+1};\phi)}^*$ is a lower bound for $\inf_x f(x,y^{k+1};\phi)$. For computing $h_f^k$, we can take a similar approach as previous works \cite{ghadimi2018approximation, hong2022twotimescale, chen2021tighter} that use a Neumann series approximation
\begin{align}
    h_f^k = \nabla_x f(x^k, y^{k+1};\phi) - \nabla_{xy} g(x^k,y^{k+1};\psi_0)\bigl[\frac{N}{L_g}\prod_{j=1}^{\bar{N}}(I-\nabla^2_{yy} g(x^k,y^{k+1};\psi_j)) \bigr] \nabla_y f(x^k,y^{k+1};\phi), \label{eq:hyper_sto}
\end{align}
where $\bar{N}$ is sampled uniformly from $[N]$ and $N$ is the total number of samples. 
For BiSPS, we use the same sample for $f(x^k,y^{k+1};\phi)$ and $\nabla_x f(x^k,y^{k+1};\phi)$ when evaluating $\bar{\alpha}_k$ in \eqref{eq:bisps}. Interestingly, we also empirically observe that using independent samples for computing $\bar{\alpha}_k$ and $h_f^k$ results in similar performance as using the same sample. 
The optimal rate of SGD for non-convex bi-level optimization is $\mathcal{O}(\frac{1}{\sqrt{K}})$ without a growing batch size \cite{chen2021tighter}. 
 We show that BiSPS can obtain the same rate (see Theorem \ref{thm:bilevel}) by taking the envelope-type step-size of the form \eqref{eq:bisps_u} and \eqref{eq:bisps_l}.  
We implement BiSPS according to \eqref{eq:bisps} and observe that it has better performances over decaying-step SGD with less variations across different values of $\alpha_{b,0}$ (see Figure \ref{fig:demo_3} and note that decaying-step SGD is of the form $\frac{\alpha_{b,0}}{\sqrt{k+1}}$).
\paragraph{Stochastic Line-Search Algorithms for Bi-level Optimization} 
The challenge of extending SLS to bi-level optimization is rooted in the term $y^*(x)$. In fact, we realize that some of the bi-level objectives are of the form $F(x) = f(y^*(x))$. That is, $f$ does not have an explicit dependence on $x$ as in the data hyper-cleaning task \cite{ji2021bilevel}. This implies that when SLS takes a potential step on $x$, the approximation of $y^*(x)$ (i.e, $\bar{y}(x)$) also needs to be updated, otherwise there is no change in function values. 
Moreover, the use of approximation $\bar{y}(x)$ and the stochastic estimation error in hypergradient would not gaurantee a step size can be always found. To this end, we modify the Armijo line-search rule to be:  
\begin{equation} \label{eq:armijo} 
 \begin{aligned} 
     &\text{BiSLS-SGD: } \quad f\bigl(x^k - \alpha_k h_f^k, \hat{y}^{k+1}(x^k - \alpha_k h_f^k)\bigr) \leq f(x^k,y^{k+1}) - p \alpha_k \lVert h_f^k \rVert^2 + \delta, \\
    &\text{BiSLS-Adam: } \quad f\bigl(x^k - \alpha_k A_k^{-1} h_f^k, \hat{y}^{k+1}(x^k - \alpha_k  A_k^{-1} h_f^k)\bigr) \leq f(x^k,y^{k+1}) - p \alpha_k \lVert h_f^k \rVert_{A_k^{-1}}^2 + \delta, 
 \end{aligned}
 \end{equation}
 \begin{wrapfigure}[16]{r}{0.60\textwidth}
       \begin{minipage}{0.30\textwidth}
         \centering
        \includegraphics[width=0.95\linewidth]{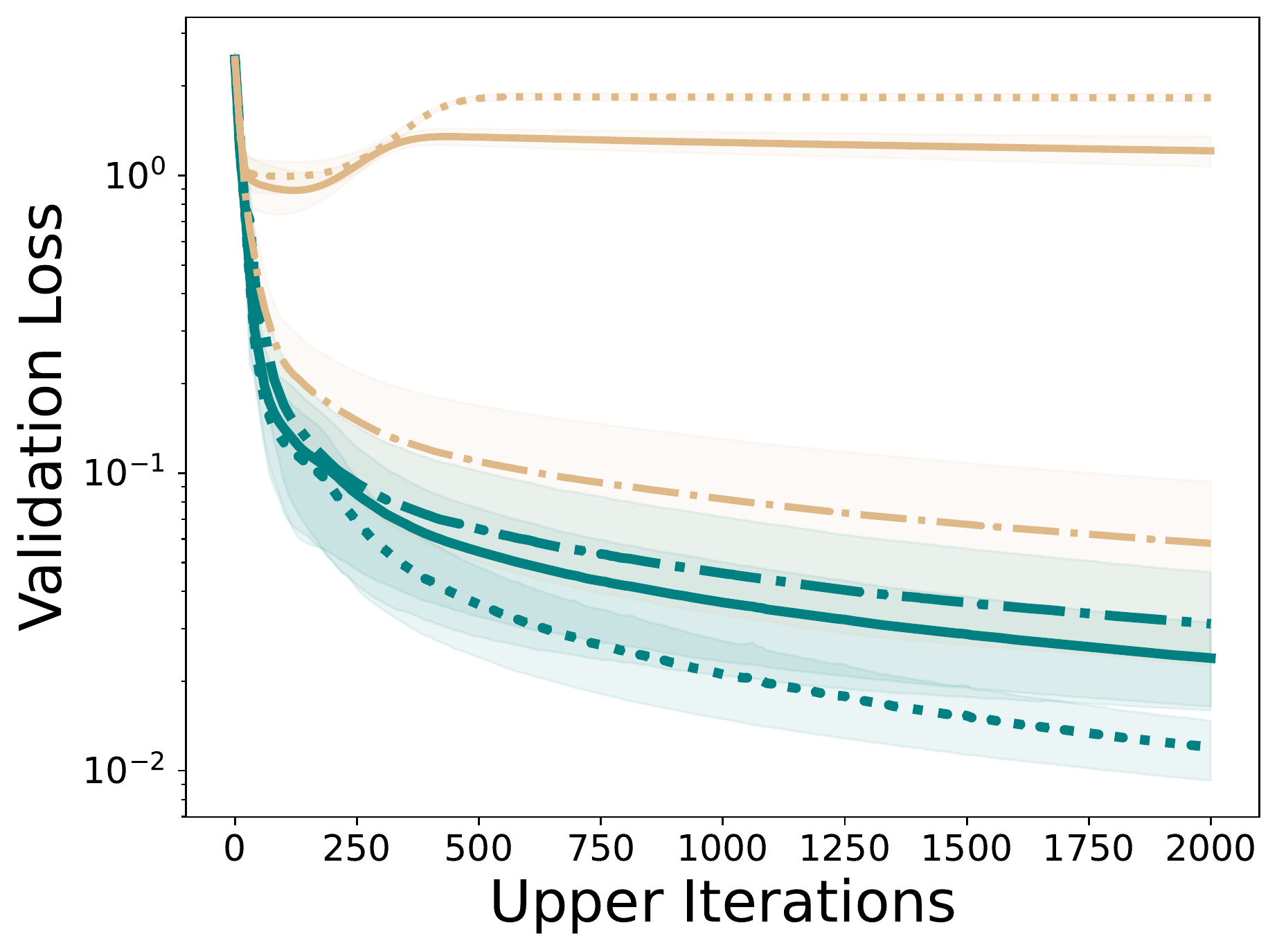}
         \label{fig:Envelope}
       \end{minipage}\hfill
       \begin{minipage}{0.30\textwidth}
         \centering
         \includegraphics[width=0.95\linewidth]{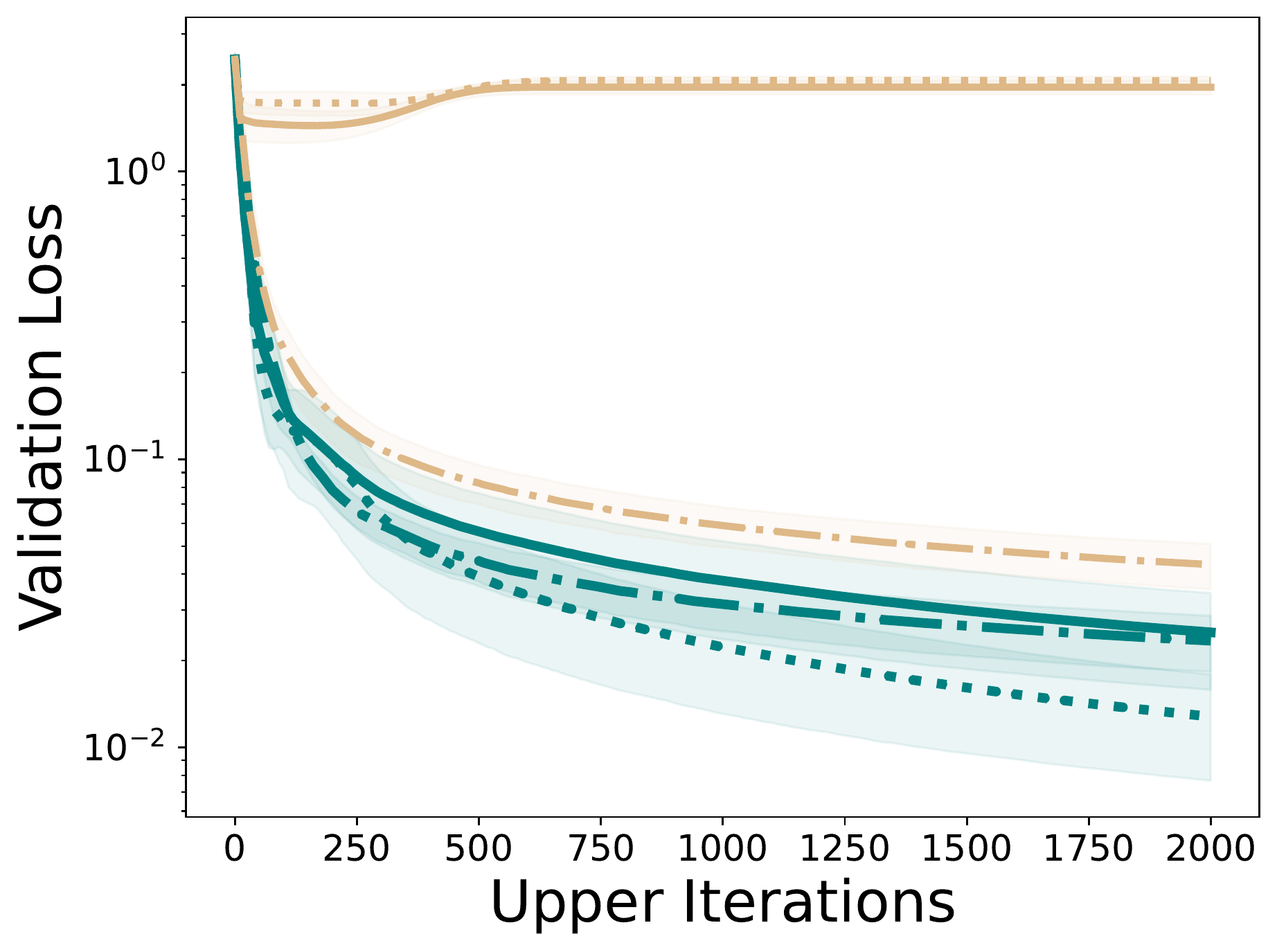}
         \label{fig:Envelope_Neumann}
       \end{minipage}
    \centering
    \includegraphics[width=0.80\linewidth]{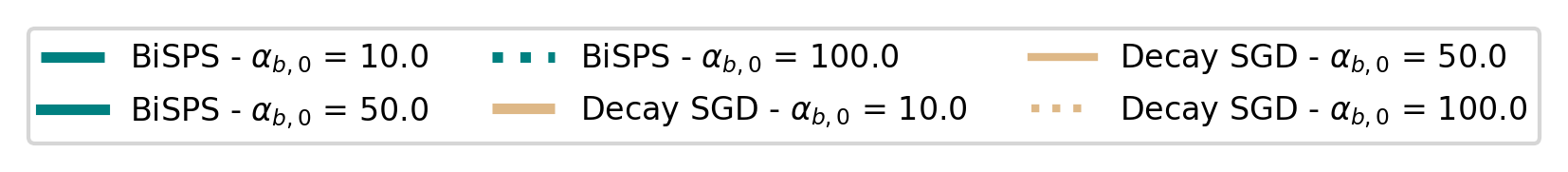}
    \caption{Results on data distillation experiments adapted from \citet{lorraine2020optimizing} (see Sec \ref{sec:exp} for details). We compare BiSPS and decaying-step SGD for different values of $\alpha_{b,0}$ where Hessian inverse in \eqref{eq:hyperg} is computed based on the Identity matrix (left) or Neumann series (right). The lower-level learning rate is fixed at $10^{-4}$.}
    \label{fig:demo_3}
\end{wrapfigure}
where $p,\delta > 0$ and $A_k$ is a positive definite matrix such that $A_k^2 = G_k$. 
 \begin{figure}[t!]
        \begin{subfigure}{0.33\textwidth}
         \centering
         \includegraphics[width=1.0\linewidth]{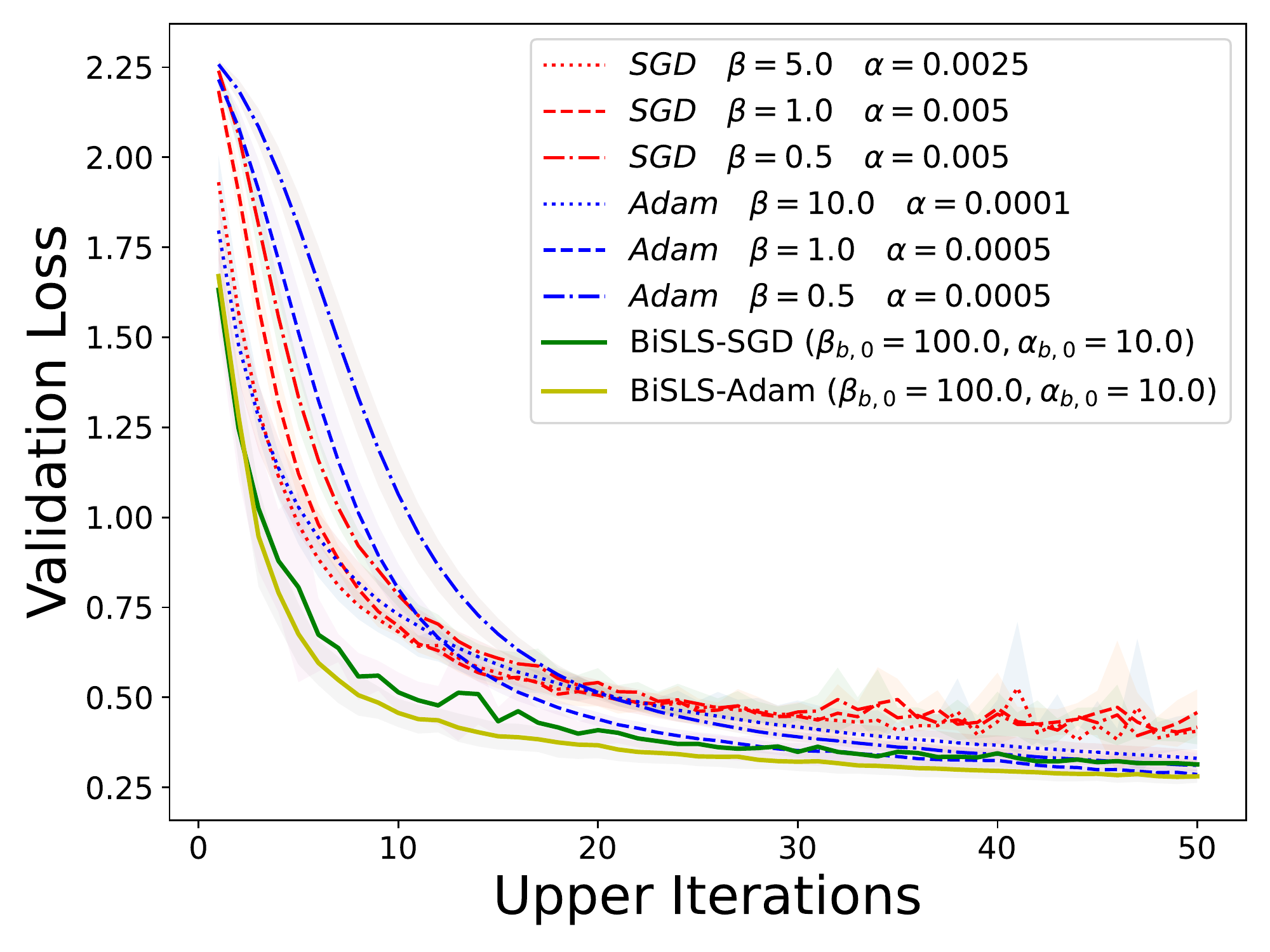}
         \caption{Validation Loss}
         \label{fig:hyper_train_all}
       \end{subfigure} 
       \begin{subfigure}{0.33\textwidth}
         \centering
         \includegraphics[width=1.0\linewidth]{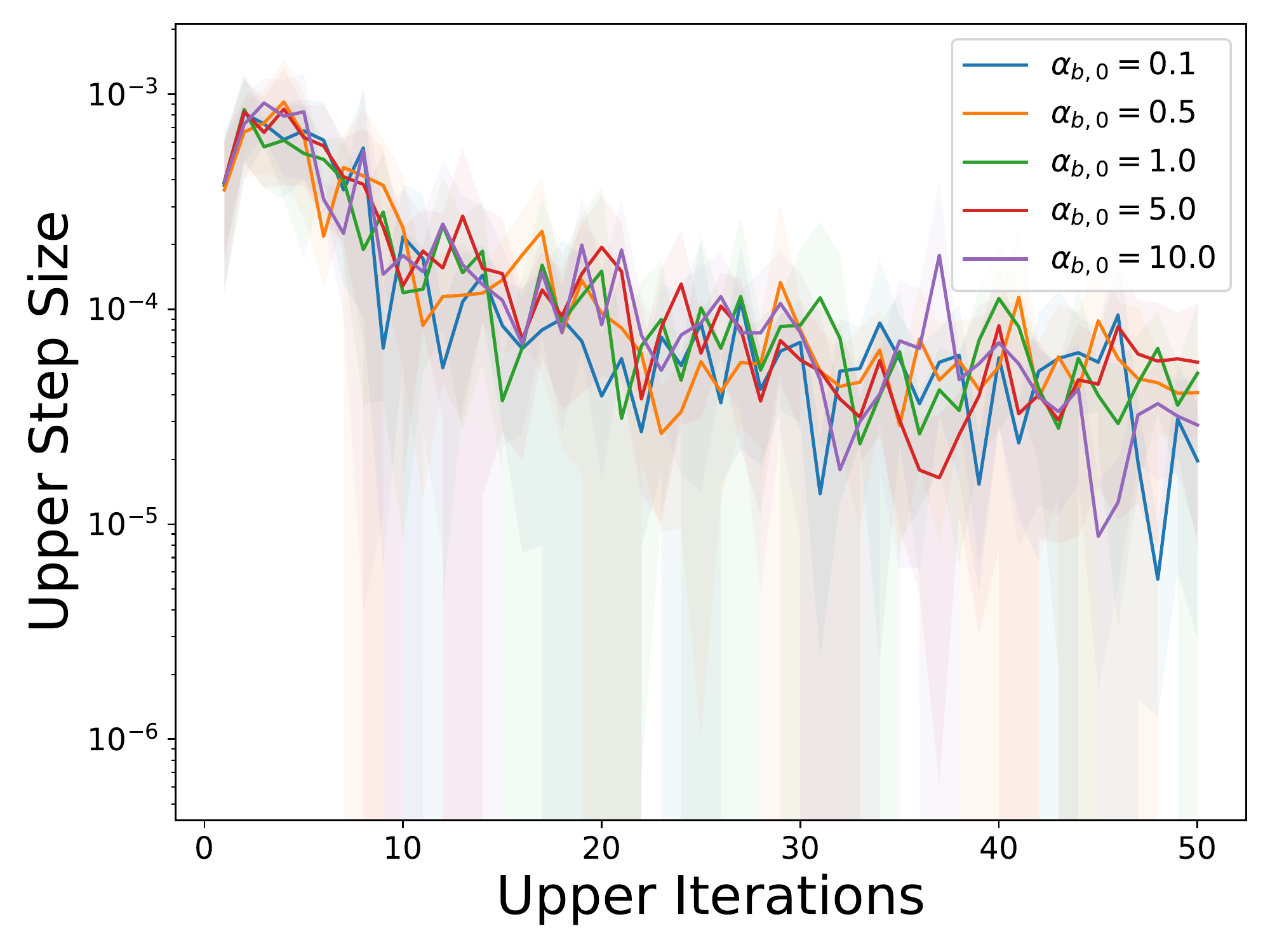}
        \caption{Upper-level step sizes}
         \label{fig:upper_step}
       \end{subfigure}
       \begin{subfigure}{0.33\textwidth}
         \centering
         \includegraphics[width=1.0\linewidth]{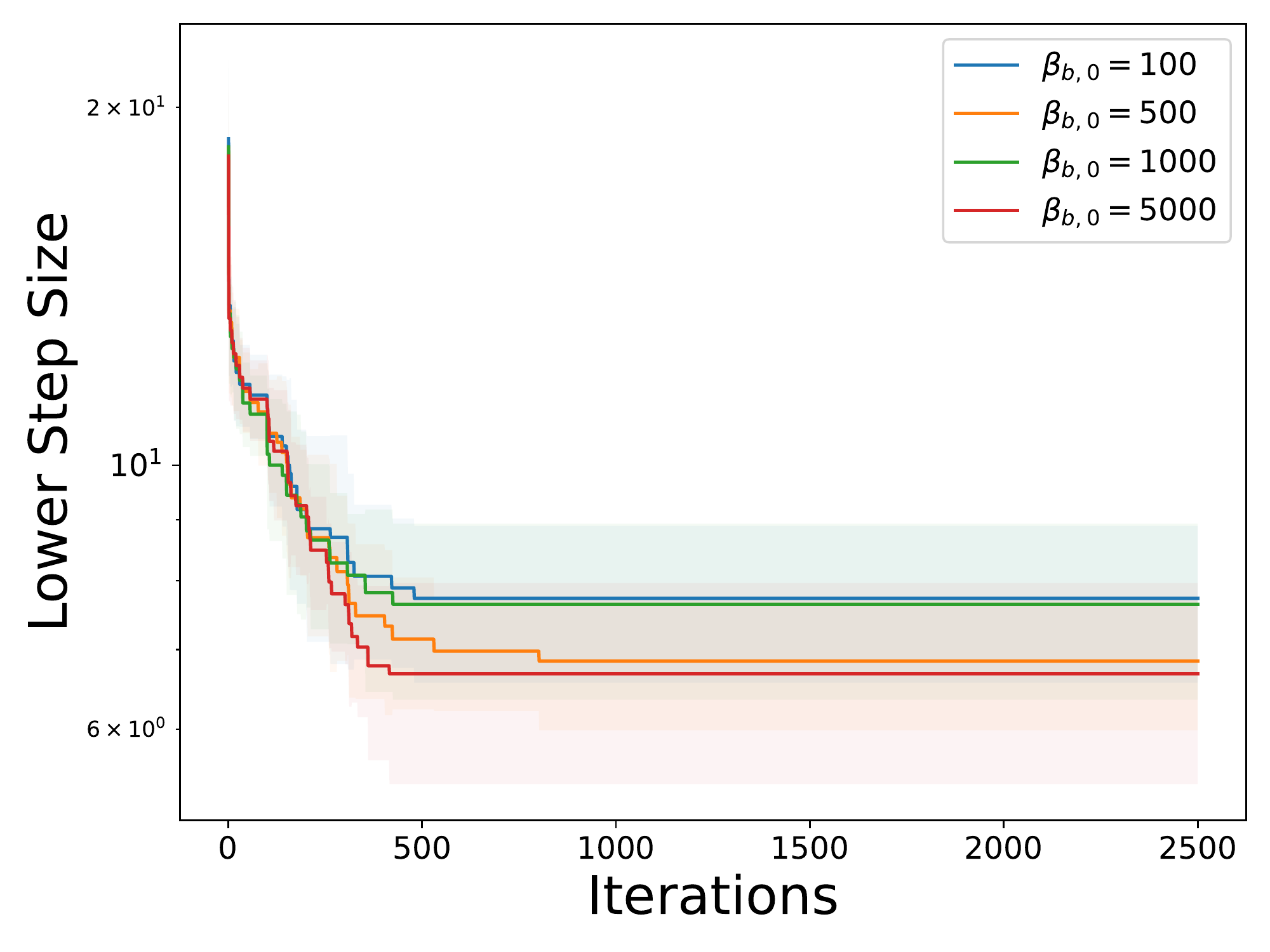}
        \caption{Lower-level step sizes}
         \label{fig:lower_step}
       \end{subfigure}
    \caption{Results on hyper-representation learning task (see Sec \ref{sec:exp} for details). (a) Validation loss against upper-level iterations for comparing BiSLS-Adam/SGD to fine-tuned Adam/SGD. (b)(c) Upper (left) and lower-level (right) learning rates found by BiSLS-Adam. For the fine-tuned Adam, the optimal lower and upper-level learning rates are $\mathcal{O}(1)$ and $\mathcal{O}(10^{-4})$, respectively. BiSLS-Adam outperforms fine-tuned Adam/SGD with a starting point that is $5$ orders of magnitude larger than the optimal step size. 
    }
    \label{fig:demo_4}
\end{figure}

Similar to the single-level Adam case, the matrix $G_k$ in the bi-level setting is defined as $G_k = (\beta_2 G_{k-1} + (1-\beta_2) \diag(h_f^k {h_f^k}^T))/(1 - \beta_2^k)$ \cite{kingma2014adam, vaswani2020adaptive}. 
Moreover, BiSLS-Adam takes the following steps for updating the variable $x$: $x^{k+1} = x^k - \alpha_k A_k^{-1} m_k$ where $m^{k+1} = \beta_1 m^k - (1-\beta_1) h_f^k$. 
The details are given in Algorithms \ref{BiSLS-SGD} and \ref{reset}.  
We denote the search starting point for the upper-level as $\alpha_{b,k}$ at iteration $k$, and denote it as $\smash {\beta_{b,k}^t}$ at step $t$ within iteration $k$ for the lower-level. We remark  the following key benefits of resetting $\alpha_{b,k}$ and $\beta_{b,k}^t$ (by using Algorithm \ref{reset}) to larger values with reference to $\alpha_k$ and $\beta_{k}^t$ (respectively) at each step: (1) we avoid always searching from $\alpha_{b,0}$ or $\beta_{b,0}^0$, thus reducing computation cost and (2) preserving an overall non-increasing (not necessarily monotonic) trend for $\alpha_{b,k}$ and $\smash{\beta_{b,k}^t}$ (improving training stability). We found that different values of $\eta$ all give good empirical performances (see Appendix B). 
The key algorithmic challenge we are facing is that during the backtracking process, for any candidate $\alpha_k$, we need to compute $\hat{x}^k := x^k - \alpha_k h_f^k$ and approximate $\smash{y^*(\hat{x}^k)}$ with $\smash{\hat{y}^{k+1}}$ (see Algorithm \ref{BiSLS-SGD}).
To limit the cost induced by this nested loop, we limit the number of steps to obtain $\smash{\hat{y}^{k+1}}$ to be 1.

Moreover, $\delta$ in \eqref{eq:armijo} plays the role of a safeguard that ensures a step size can be found. We set $\delta$ to be small to avoid finding unrealistically large learning rates while tolerating some error in the hypergradient estimation (see Appendix B for experiments on the sensitivity of $\delta$). In practice, we found that simply setting $\delta=0$ works well. In Figure \ref{fig:hyper_train_all}, we observe that BiSLS-Adam outperforms fine-tuned Adam or SGD. Surprisingly, its training is stable even when the search starting point $\alpha_{b,0}$ is $5$ orders of magnitude larger than a fine-tuned learning rate ($\mathcal{O}(10^{-4})$). Importantly, BiSLS-Adam finds large upper and lower-level learning rates in early phase (see Figure \ref{fig:upper_step}, \ref{fig:lower_step}) for different values of $\alpha_{b,0}$ and $\beta_{b,0}$ that span $3$ orders of magnitudes. Interestingly, the learning rates naturally decay with training (also see Figure \ref{fig:cost_upper} and \ref{fig:cost_lower}).   
In essence, BiSLS is a \textbf{truly adaptive (no knowledge of initialization required) and \textbf{robust (different initialization works)} method that finds large $\alpha$ and $\beta$ without tuning}. In the next section, we give the convergence results of the envelope-type step size. 
\section{Convergence Results} \label{sec:converg}
\subsection{Envelope-type step size for single-level optimization}
We first state the assumptions, which are standard in the literature, that will be used for analyzing single-level problems. Assumption \ref{ass:smooth} is on the Lipschitz continuity of $f$ and $f_i$ in Problem \ref{eq:single_obj}. 
\begin{assumption} \label{ass:smooth}
The individual function $f_i$ is convex and $L_i$-smooth such that $\lVert \nabla f_i(x) - \nabla f_i (x') \rVert \leq L_i \lVert x - x'\rVert, \forall i, \forall x\in \dom{f}$ and the overall function $f$ is $L$-smooth. We denote $L_{\max} \triangleq \textstyle\max_{i} L_i$.  
Furthermore, we assume there exists $l_i^*$ such that $l_i^* \leq f_i^*:=\inf_{x} f_i(x), \forall i$, and $f$ is lower bounded by $f^*$ obtained by some $x^*$ such that $f^* = f(x^*)$. 
\end{assumption}
The following bounded gradient assumption is also used in the analysis of convex problems \cite{shalev2007pegasos, nemirovski2009robust}. 
\begin{assumption} \label{ass:G_lip}
    There exists $G > 0$ such that $\lVert \nabla f_i(x) \rVert^2 \leq G, \forall i$.
\end{assumption}
We first state the theorem for the envelope-type step size defined in \eqref{eq:envelop} for convex functions.
\begin{theorem} \label{thm:convex}
    Suppose Assumption \ref{ass:smooth}, \ref{ass:G_lip} hold, each $f_i$ is convex, 
    $C = \dom f$, $\gamma_k$ is independent of the sample $\nabla f_k(x^k)$, and choose $\gamma_{b,k} = \frac{\gamma_{b,0}}{\sqrt{k+1}}$. Then, the envelope-type step size in \eqref{eq:envelop}  achieves the following rate, 
    \begin{align}
        \mathbb E[f(\bar{x}^K) - f(x^*)] \leq \frac{\lVert x^0 - x^* \rVert^2}{2 \gamma_{l,K-1} K} + \frac{\gamma_{b,0}^2 G^2 \log(K)}{2 \gamma_{l,K-1} K},\nn
    \end{align}
    where $\gamma_{l,K-1} = \min \{ \omega, \frac{\gamma_{b,0}}{\sqrt{K}}\}$ and $\bar{x}^K = \frac{1}{K} \sum_{k=0}^K x^k$.  
\end{theorem}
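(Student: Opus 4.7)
The plan is to adapt the textbook SGD convergence analysis for convex objectives to the envelope step size, exploiting the sandwich $\gamma_{l,k}\le\gamma_k\le\gamma_{b,k}$ in a two-sided way: use the lower envelope $\gamma_{l,k}$ whenever a factor multiplying the suboptimality gap appears, and the upper envelope $\gamma_{b,k}$ whenever a factor multiplying the variance-like $\|\nabla f_{i_k}(x^k)\|^2$ appears. This asymmetric use of the two envelopes is the crux of the argument, and it is what lets the step size remain adaptive inside the envelope without hurting the rate.

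First, I would start from the standard one-step identity
\begin{align*}
\|x^{k+1}-x^*\|^2 = \|x^k-x^*\|^2 - 2\gamma_k \langle \nabla f_{i_k}(x^k),\, x^k-x^*\rangle + \gamma_k^2\,\|\nabla f_{i_k}(x^k)\|^2.
\end{align*}
Taking conditional expectation given the history up to step $k$, and using the assumption that $\gamma_k$ is independent of the sampled gradient $\nabla f_{i_k}(x^k)$, the cross term becomes $2\gamma_k\langle \nabla f(x^k), x^k-x^*\rangle$. Convexity of $f$ then gives $\langle \nabla f(x^k), x^k-x^*\rangle \ge f(x^k)-f(x^*)\ge 0$, while Assumption \ref{ass:G_lip} bounds $\mathbb{E}\|\nabla f_{i_k}(x^k)\|^2\le G$.

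Second, I would insert the envelope bounds. On the suboptimality term I replace $\gamma_k$ by $\gamma_{l,k}$ from below (legal because $f(x^k)-f(x^*)\ge 0$), and on the squared-gradient term I replace $\gamma_k^2$ by $\gamma_{b,k}^2$ from above, yielding
\begin{align*}
2\gamma_{l,k}\,\mathbb{E}[f(x^k)-f(x^*)] \;\le\; \mathbb{E}\|x^k-x^*\|^2 - \mathbb{E}\|x^{k+1}-x^*\|^2 + \gamma_{b,k}^2\,G.
\end{align*}
Since $\gamma_{b,k}$ is non-increasing and $\omega$ is a constant, the sequence $\gamma_{l,k}=\min\{\omega,\gamma_{b,k}\}$ is also non-increasing, so $\gamma_{l,k}\ge \gamma_{l,K-1}$ for every $k\le K-1$. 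Summing from $k=0$ to $K-1$, telescoping the distance terms, and dropping the nonnegative $\mathbb{E}\|x^K-x^*\|^2$ gives
\begin{align*}
2\gamma_{l,K-1}\sum_{k=0}^{K-1}\mathbb{E}[f(x^k)-f(x^*)] \;\le\; \|x^0-x^*\|^2 + G\sum_{k=0}^{K-1}\gamma_{b,k}^2.
\end{align*}

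Third, I would plug in $\gamma_{b,k}=\gamma_{b,0}/\sqrt{k+1}$, bound the harmonic-type sum by $\sum_{k=0}^{K-1}\tfrac{1}{k+1}\le 1+\log K$ (absorbing the constant into the logarithmic factor for $K\ge 2$), apply Jensen's inequality to pass to the averaged iterate $\bar{x}^K$, and divide through by $2\gamma_{l,K-1}K$ to recover the stated bound with $\gamma_{l,K-1}=\min\{\omega,\gamma_{b,0}/\sqrt{K}\}$.

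The main obstacle I anticipate is conceptual rather than computational: making the two-sided use of the envelope rigorous without losing the adaptivity. In particular, replacing $\gamma_k$ by $\gamma_{l,k}$ on the gap side is only valid because $f(x^k)-f(x^*)\ge 0$ pointwise, and the independence of $\gamma_k$ from $\nabla f_{i_k}(x^k)$ is needed to make the cross term unbiased; without that assumption, the Polyak-type choice of $\tilde\gamma_k$ couples to the sample and a bias term would have to be controlled separately. Aside from this, the remaining steps are routine telescoping and a harmonic-sum estimate.
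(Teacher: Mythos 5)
Your proposal is correct and follows essentially the same route as the paper's proof: the standard one-step expansion, the asymmetric use of the envelope ($\gamma_{l,k}$ on the suboptimality term, $\gamma_{b,k}$ on the squared-gradient term) enabled by the independence assumption, telescoping with $\gamma_{l,k}\geq\gamma_{l,K-1}$, the harmonic-sum bound, and Jensen's inequality. The only (immaterial) differences are that the paper applies convexity of the individual $f_{i_k}$ before taking expectations rather than convexity of $f$ after, and your $1+\log K$ bound on the harmonic sum is slightly more careful than the paper's.
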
 
 We were not able to give a convergence result that uses the same sample for computing the step size and the gradient. However, we empirically observe that the performance is  similar when using either one or two independent samples per iteration (see Figure \ref{fig:Quadratics} and Appendix B). When two independent samples $i_k$ and $j_k$ are used per iteration, the first computes the gradient sample $\nabla f_{i_k}(x^k)$, and the other computes the step-size $\gamma_k$. For example, for \spsb this gives $\gamma_k = \min \{\frac{f_{j_k}(x^k)-l^*_{j_k}}{c_k \lVert \nabla f_{j_k} (x^k) \rVert^2},\gamma_{b,k}\}$. This type of assumption has been used in several other works for analyzing adaptive step sizes \cite{li2019convergence, vaswani2021painless, loizou2021stochastic}. Under this assumption, we specialize the results of Theorem \ref{thm:convex} to \spsb and \slsb, where $\gamma_{l,K-1} = \min \{ \frac{1}{2 c L_{\max}}, \frac{\gamma_{b,0}}{\sqrt{K}}\}$ and $\gamma_{l,K-1} = \min \{ \frac{2(1-\bar{c})}{L_{\max}}, \frac{\gamma_{b,0}}{\sqrt{K}}\}$ respectively. 
Concretely, for $K \geq \gamma_{b,0}^2 L_{\max}^2$, \slsb and \spsb with $\gamma_{b,k}=\frac{\gamma_{b,0}}{\sqrt{k+1}}$ and $c = \bar{c} = \frac{1}{2}$ achieve the following rate:
   $\mathbb E[f(\bar{x}^K ) - f(x^*)] \leq \frac{\lVert x^0 - x^* \rVert^2}{2 \gamma_{b,0} \sqrt{K}} + \frac{\gamma_{b,0} G^2 \log(K)}{2 \sqrt{K}}.$ 
Next, we state the result for the envelop-type step size when $f$ is $\mu$-strongly convex.
\begin{theorem} \label{thm:s_conv}
    Suppose a $\mu$-strongly convex function $f$ satisfying Assumptions \ref{ass:smooth} and \ref{ass:G_lip}, assume $\mathcal{C}$ is a closed and convex set, and $\gamma_k$ is independent of the sample $\nabla f_k(x^k)$. Then an envelope-type step size as in \eqref{eq:envelop} with $\gamma_{b,k} = \frac{\gamma_{b,0}}{k+1}$, $\gamma_{b,0} \geq \frac{1}{\mu}$, and $\omega \mu < 1$
    achieves the following rate
    \begin{align}
        \mathbb E[f(\bar{x}_K) - f(x^*)] \leq \frac{\mu k_0}{2(K-k_0)} \bigl ( e^{-k_0 \mu \omega} \lVert x_0 - x^* \rVert^2 + \gamma_{b,0}^2 G^2 \bigr ) + \frac{\gamma_{b,0} G^2 \log K}{2(K-k_0)}, \nn
    \end{align}
    where $\bar{x}_K = \frac{1}{K-k_0} \sum_{k=k_0}^{K-1} x^k$ and $k_0 = \max \{ 1, \lceil \gamma_{b,0} / \omega \rceil -1 \}$. 
\end{theorem}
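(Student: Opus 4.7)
The plan is to exploit the two-regime structure of the envelope step size: by definition of $k_0$, the decaying upper bound $\gamma_{b,k} = \gamma_{b,0}/(k+1)$ crosses $\omega$ exactly at $k = k_0$. For $k < k_0$ one has $\gamma_{b,k} \geq \omega$, so $\gamma_{l,k} = \omega$ and the envelope only guarantees $\omega \leq \gamma_k \leq \gamma_{b,k}$; for $k \geq k_0$ the upper bound squeezes in, forcing $\gamma_{l,k} = \gamma_{b,k}$ and hence $\gamma_k = \gamma_{b,k}$ deterministically. I would treat the initial stretch as a burn-in producing a contraction on $\mathbb E\lVert x^k - x^*\rVert^2$, and the remainder as standard strongly-convex SGD with a $1/k$ step size, combining through Jensen's inequality applied to the tail average $\bar{x}_K$ of length $K - k_0$.

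The main technical workhorse is the projected-SGD one-step inequality. Non-expansiveness of $\Pi_{\mathcal{C}}$ combined with $\gamma_k$ being independent of $\nabla f_{i_k}(x^k)$, strong convexity for lower-bounding $\langle \nabla f(x^k), x^k - x^*\rangle$ by $f(x^k) - f(x^*) + \tfrac{\mu}{2}\lVert x^k - x^*\rVert^2$, Assumption \ref{ass:G_lip} for the gradient-noise bound $G^2$, and the deterministic envelope $\gamma_{l,k} \leq \gamma_k \leq \gamma_{b,k}$ yield
\begin{align}
\mathbb E\lVert x^{k+1} - x^*\rVert^2 \leq (1 - \mu \gamma_{l,k})\, \mathbb E\lVert x^k - x^*\rVert^2 - 2\gamma_{l,k}\, \mathbb E[f(x^k) - f(x^*)] + \gamma_{b,k}^2 G^2. \nonumber
\end{align}
For $k < k_0$, dropping the non-negative suboptimality-gap term and using $\gamma_{l,k} = \omega$ produces a linear contraction; unrolled and combined with $(1 - \mu\omega)^{k_0} \leq e^{-k_0\mu\omega}$ and a convergent geometric noise sum, this gives $\mathbb E\lVert x^{k_0} - x^*\rVert^2 \leq e^{-k_0\mu\omega}\lVert x_0 - x^*\rVert^2 + C\gamma_{b,0}^2 G^2$. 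For $k \geq k_0$, substituting $\gamma_{l,k} = \gamma_{b,k} = \gamma_{b,0}/(k+1)$ and dividing by $2\gamma_{b,k}$ rewrites the recursion as a weighted telescoping inequality with coefficient $a_k = (k+1)/(2\gamma_{b,0})$ on $\mathbb E\lVert x^k - x^*\rVert^2$ and noise term $\gamma_{b,0} G^2/(2(k+1))$. Summing from $k_0$ to $K-1$ and re-indexing, the middle $b_k := \mathbb E\lVert x^k - x^*\rVert^2$ coefficients collapse into residuals $\tfrac{1}{2\gamma_{b,0}} - \tfrac{\mu}{2}$, and only the initial coefficient $\tfrac{k_0+1}{2\gamma_{b,0}} - \tfrac{\mu}{2} \leq \tfrac{\mu k_0}{2}$ survives, together with the harmonic tail $\sum_{k=k_0}^{K-1} \tfrac{1}{k+1} \leq \log K$. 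Applying Jensen to $\bar{x}_K$ and inserting the burn-in bound on $\mathbb E\lVert x^{k_0} - x^*\rVert^2$ delivers the stated inequality.

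The main obstacle is making the tail telescoping collapse. A $1/k$ step size does not in general give a clean telescoping under strong convexity, and the condition $\gamma_{b,0} \geq 1/\mu$ is doing double duty: it forces the intermediate-coefficient residual $\tfrac{1}{2\gamma_{b,0}} - \tfrac{\mu}{2}$ to be non-positive so that the middle distance terms cancel, and simultaneously implies $\tfrac{k_0+1 - \mu\gamma_{b,0}}{2\gamma_{b,0}} \leq \tfrac{k_0}{2\gamma_{b,0}} \leq \tfrac{\mu k_0}{2}$, matching the pre-factor in front of the burn-in error in the theorem. A secondary care-point is the burn-in aggregation of noise, where $\sum_{j<k_0}(1-\mu\omega)^{k_0-1-j}\gamma_{b,j}^2$ picks up a constant that, under $\omega\mu < 1$, is absorbed into the $\gamma_{b,0}^2 G^2$ term of the final bound.
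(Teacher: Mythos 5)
Your proposal follows essentially the same route as the paper's proof: the same one-step inequality from strong convexity, independence of $\gamma_k$, and the bounded-gradient assumption; the same split at $k_0$ into a linear-contraction burn-in (yielding $e^{-k_0\mu\omega}\lVert x^0-x^*\rVert^2+\gamma_{b,0}^2G^2$) and a tail where $\gamma_k=\gamma_{b,k}$ deterministically; and the same weighted telescoping in which $\gamma_{b,0}\geq 1/\mu$ kills the intermediate coefficients and bounds the leading one by $\mu k_0$, with the harmonic sum giving the $\log K$ term before Jensen is applied to $\bar{x}_K$. The only cosmetic difference is that you invoke non-expansiveness of the projection explicitly, which the paper's written derivation omits while acknowledging the projection in its remark; this is a correct reconstruction of the paper's argument.
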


We can again apply the result of Theorem \ref{thm:s_conv} to \spsb and \slsb with 
$\gamma_{b,k} = \frac{\gamma_{b,0}}{k+1}$, $\gamma_{b,0} \geq \frac{1}{\mu}$, $\omega = 1/L_{\max}$, and $c = \bar{c} = \frac{1}{2}$ to get an explicit rate:
      $  \mathbb E[f(\bar{x}_K) - f(x^*)] \leq \frac{\mu k_0}{2(K-k_0)} \bigl ( e^{\frac{-k_0 \mu}{L_{\max}}} \lVert x_0 - x^* \rVert^2 + \gamma_{b,0}^2 G^2 \bigr ) + \frac{\gamma_{b,0} G^2 \log K}{2(K-k_0)}$,
    where $k_0 = \max \{ 1, \lceil \gamma_{b,0} L_{\max} \rceil -1 \}$. 
\begin{remark} Under the envelop-type step size framework and the assumption of two independent samples,  \slsb and \spsb share the same convergence rates of $\mathcal{O}(\frac{1}{\sqrt{K}})$ and $\mathcal{O}(\frac{1}{K})$ as SGD with decaying step-size for convex and strongly-convex losses respectively. For the latter, a projection operation is required to stay in the closed and convex set $\mathcal{C}$. These rates are not surprising because of the structure of the envelope step-size in \eqref{eq:envelop}. Indeed, the proof is similar to the standard proof of analogous rate for SGD with decaying step-size. Nonetheless, we include it here for completeness.   
\end{remark}
\subsection{Envelope-type step size for bi-level optimization}
We start with recalling standard assumptions in BO \cite{ji2021bilevel, hong2022twotimescale, ghadimi2018approximation, chen2021tighter}. We denote $z = [x;y]$ and recall the bi-level problem in \eqref{eq:bilevel_obj}. The first assumption is on the lower-level objective $g$.

\begin{assumption}\label{bi_ass_g}
    The function $g(x,y)$ is $\mu_g$ strongly convex in $y$ for any given $x$. Moreover, $\nabla g$ is Lipschitz continuous: $\lVert \nabla g(x_1,y_1) - \nabla g(x_2,y_2)\rVert \leq L_g \lVert z_1 - z_2 \rVert$ (also assume that this holds true for each sampled function $g(x,y;\psi)$), and $\nabla^2 g$ is Lipschitz continuous: $\lVert \nabla^2 g (x_1,y_1) - \nabla^2 g (x_2,y_2)\rVert \leq L_{G}\lVert z_1 - z_2 \rVert$. We further assume that $\lVert \nabla^2_{xy} g(x,y)\rVert \leq C_{g}$, and the condition number is defined as $\kappa = \frac{L_g}{\mu_g}$. 
\end{assumption}
Next, we state the assumptions on the upper objective $f$. 
\begin{assumption}\label{bi_ass_f}
    The function $f$ and its gradients are Lipschitz continuous. That is: $\lVert f(x_1,y_1) - f(x_2,y_2)\rVert \leq L_1 \lVert z_1 - z_2 \rVert$ and $\lVert \nabla f (x_1,y_1) - \nabla f (x_2,y_2)\rVert \leq L_{f,1} \lVert z_1 - z_2 \rVert$. We also assume that $\lVert \nabla_y f(x,y) \rVert \leq C_f$.
\end{assumption}
Furthermore, we make the following standard assumptions on the estimates of $\nabla f$, $\nabla g$, and $\nabla^2 g$.  
\begin{assumption} \label{bi_ass3}
    The stochastic gradients are unbiased: $\mathbb E_{\phi}[\nabla f(x,y;\phi)] = \nabla f(x,y)$, $\mathbb E_{\psi}[\nabla g(x,y;\psi)] = \nabla g(x,y)$, and $\mathbb E_{\psi}[\nabla^2 g(x,y;\psi)] = \nabla^2 g(x,y)$. The variances of $\nabla f(x,y;\phi)$ and $\nabla^2 g(x,y;\psi])$ are bounded: $\mathbb E_{\phi}[\lVert \nabla f(x,y;\phi) - \nabla f(x,y)\rVert^2] \leq \sigma_f^2$ and $\mathbb E_{\psi}[\lVert \nabla^2 g(x,y;\psi) - \nabla^2 g(x,y)\rVert^2] \leq \sigma_G^2$. 
\end{assumption}
Finally, we introduce the bounded optimal function value assumption in \eqref{eq:ass_g_func}, which is used specifically for analyzing step size of the form \eqref{eq:bisps_l} in the bi-level setting: 
\begin{align}
    & \mathbb E_{\psi}[g(x,y^*(x);\psi)-g(x,y^*_{x,\psi};\psi)] \leq \sigma_g^2, \forall x, \label{eq:ass_g_func} \\ 
    &\mathbb{E} [\lVert \nabla_y g(x,y) - \nabla_y g(x,y;\phi)\rVert^2] \leq \sigma_g^2, \forall x,y \label{eq:ass_g_var}, 
\end{align}
 where $y^{*}(x) = \min_{y}g(x,y)$ and $y^{*}_{x,\psi} = \min_{y}g(x,y;\psi)$ for a given $x$ (recall that at any iteration $k$, the lower-level steps in BiSPS are \spsmax with an upper bound $\beta_{b,k}$; furthermore, $\beta_{b,k}$ is non-increasing w.r.t. upper iteration $k$). The one-variable analogous assumption of \eqref{eq:ass_g_func} has been used in the analysis of \spsmax \cite{loizou2021stochastic}. Here, we extend it to a two-variable function. Unlike the bounded variance assumption \eqref{eq:ass_g_var}, which needs to hold true for all $x$ and $y$, we require \eqref{eq:ass_g_func} to hold at $y^*(x)$ for any given $x$. 
As mentioned previously, the closed form solution $y^*(x)$ is difficult to obtain. Thus, we define the following expression by replacing $y^*(x)$ with $y$ in \eqref{eq:hyperg}: 
\begin{align}
    \bar{\nabla} f(x,y) &= \nabla_x f(x,y) - \nabla_{xy}^2 g(x,y) [\nabla_{yy}^2 g(x,y)]^{-1}\nabla_y f(x,y).   \label{eq:hyper_g_app}
\end{align}
A stochastic Neumann series in \eqref{eq:hyper_sto} approximates \eqref{eq:hyper_g_app} with $x$ and $y$ being $x^k$ and $y^{k+1}$ (respectively), also recall that $y^{k+1}$ is an approximation of $y^*(x^k)$ by running $T$ lower-level SGD steps to minimize $g$ w.r.t. $y$ for a fixed $x^{k}$. 
Based on Assumptions \ref{bi_ass_g}, \ref{bi_ass_f}, and \ref{bi_ass3}, we have the following results to be used in the analysis \cite{ghadimi2018approximation}: $\lVert \nabla F(x_1) - \nabla F(x_2) \rVert  \leq L_F \lVert x_1 - x_2\rVert$, $\lVert y^*(x_1) - y^*(x_2) \rVert \leq L_y \lVert x_1 - x_2\rVert$, and $\lVert \bar{\nabla} f(x,y^*(x)) - \bar{\nabla} f(x,y) \rVert \leq L_f \lVert y^*(x) - y\rVert$. 
Furthermore, 
the bias in the stochastic hypergradient in \eqref{eq:hyper_sto} (denoted as $B$) decays exponentially with $N$ 
and its variance is bounded, i.e. $\mathbb E[\lVert h_f^k - \mathbb E[h_f^k] \rVert^2] \leq \tilde{\sigma}_f^2$ (see Appendix A for details) \cite{hong2022twotimescale}.

Now, we state our main theorem based on step size of the form \eqref{eq:bisps_u} and \eqref{eq:bisps_l}.  
\begin{theorem}\label{thm:bilevel}
    Suppose $f$ and $g$ satisfy Assumptions \ref{bi_ass_g}, \ref{bi_ass_f}, and \ref{bi_ass3}, learning-rate upper bounds $\alpha_{b,k} = \frac{\alpha_{b,0}}{\sqrt{k+1}}$ and $\alpha_{l,k} = \frac{\alpha_{l,0}}{\sqrt{k+1}}$ with $\alpha_{b,0}$ and $\alpha_{l,0}$ satisfying $\frac{1}{L_F + 4 L_y^2} \geq \frac{\alpha_{b,0}^2}{\alpha_{l,0}}$ and $\alpha_{l,0} \leq \alpha_{b,0} $.
    Further assume that $\alpha_k$ is independent of the stochastic hypergradient $h_f^k$, and each sampled function $g(x,y;\psi)$ is convex. Then under the Assumption \eqref{eq:ass_g_func} with $p \geq \frac{1}{2}$, $C_{k} = \min \{\frac{1}{2pL_{g}}, \beta_{b,k} \}$, $T \geq \frac{\log(\alpha_{b,0}L_f^2 + 2)}{-\log(1-\mu_g C_{K-1})}$, and  $\beta_{b,k} = \frac{\beta_{b,0}}{k+1}$, BiSPS achieves the rate:
    \begin{align}
          \frac{1}{K} \sum_{k=0}^{K-1} \mathbb E[\lVert \nabla F(x^k)\rVert^2] &\leq \mathcal{\tilde{O}} (\frac{\kappa^3}{\sqrt{K}}+ \frac{\kappa^2 \log K}{\sqrt{K}}).
    \end{align}
\end{theorem}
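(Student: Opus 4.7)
\textbf{Plan for proving Theorem \ref{thm:bilevel}.} My approach is to combine an $L_F$-smoothness descent lemma on $F$ with the envelope structure of $\alpha_k$, then separately control the lower-level tracking error $\|y^{k+1}-y^*(x^k)\|^2$ using the SPS contraction on the strongly convex $g(x^k,\cdot)$. The key technical maneuver is that the envelope $\alpha_{l,k}\leq\alpha_k\leq\alpha_{b,k}$ allows me to use the \emph{lower} end to keep a negative $\|\nabla F(x^k)\|^2$ descent term, while absorbing all noise, bias, and lower-level mismatch terms into the \emph{upper} end $\alpha_{b,k}$ (or $\alpha_{b,k}^2$), which are summable in the sense that $\sum_{k=0}^{K-1}\alpha_{b,k}\lesssim\sqrt{K}$ and $\sum_k\alpha_{b,k}^2\lesssim\log K$.

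\textbf{Upper-level descent.} Applying $L_F$-smoothness gives
\begin{equation*}
F(x^{k+1})\leq F(x^k)-\alpha_k\langle\nabla F(x^k),h_f^k\rangle+\tfrac{L_F}{2}\alpha_k^2\|h_f^k\|^2.
\end{equation*}
Since $\alpha_k\perp h_f^k$ given the filtration, I condition on $\mathcal{F}_k$, write $\mathbb{E}_k[h_f^k]=\bar\nabla f(x^k,y^{k+1})+B_k$ with $\|B_k\|$ exponentially small in $N$ (Neumann), decompose
$\bar\nabla f(x^k,y^{k+1})-\nabla F(x^k)=\bar\nabla f(x^k,y^{k+1})-\bar\nabla f(x^k,y^*(x^k))$
whose norm is $\leq L_f\|y^{k+1}-y^*(x^k)\|$, and apply Young's inequality. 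This yields, after also using $\mathbb{E}_k\|h_f^k\|^2\leq\tilde\sigma_f^2+\|\mathbb{E}_k h_f^k\|^2$, a bound of the form
\begin{equation*}
\mathbb{E}[F(x^{k+1})]\leq\mathbb{E}[F(x^k)]-\tfrac{\alpha_{l,k}}{2}\mathbb{E}\|\nabla F(x^k)\|^2+c_1\alpha_{b,k}\bigl(L_f^2\mathbb{E}\|y^{k+1}-y^*(x^k)\|^2+\|B_k\|^2\bigr)+c_2\alpha_{b,k}^2\tilde\sigma_f^2.
\end{equation*}
Here I used $\alpha_k\geq\alpha_{l,k}$ on the descent term and $\alpha_k\leq\alpha_{b,k}$ on each error term.

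\textbf{Lower-level tracking.} For each fixed upper iterate $x^k$, the inner loop is exactly $T$ steps of $\mathrm{SPS}_{\max}$ with cap $\beta_{b,k}$ on the $\mu_g$-strongly convex objective $g(x^k,\cdot)$. The envelope lemma gives $\beta_{k,t}\geq C_k=\min\{1/(2pL_g),\beta_{b,k}\}$. Applying the standard SPS analysis (as in Loizou et al.) with the two-variable variant of Assumption \eqref{eq:ass_g_func}, I get a one-step contraction
\begin{equation*}
\mathbb{E}\|y^{k,t+1}-y^*(x^k)\|^2\leq(1-\mu_g C_k)\,\mathbb{E}\|y^{k,t}-y^*(x^k)\|^2+2\beta_{b,k}\sigma_g^2,
\end{equation*}
which unrolls to $\mathbb{E}\|y^{k+1}-y^*(x^k)\|^2\leq(1-\mu_g C_k)^T\|y^k-y^*(x^k)\|^2+O(\beta_{b,k}\sigma_g^2/(\mu_g C_k))$. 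The drift between successive upper iterations is handled by $\|y^k-y^*(x^k)\|\leq\|y^k-y^*(x^{k-1})\|+L_y\alpha_{k-1}\|h_f^{k-1}\|$, expanded into the recursion, and the choice $T\geq\log(\alpha_{b,0}L_f^2+2)/(-\log(1-\mu_g C_{K-1}))$ forces the contraction factor $(1-\mu_g C_k)^T$ to be at most $1/(\alpha_{b,0}L_f^2+2)$, so that the joint recursion in $F(x^k)$ and $\|y^k-y^*(x^{k-1})\|^2$ is stable. This is exactly where the condition $\alpha_{b,0}^2/\alpha_{l,0}\leq 1/(L_F+4L_y^2)$ enters: it makes the coefficient on $\|y^k-y^*(x^{k-1})\|^2$ in the merged potential strictly smaller than $(1-\mu_gC_k)^{-T}$.

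\textbf{Telescoping and rate.} Summing the upper-level inequality from $k=0$ to $K-1$, plugging in the lower-level bound, and dividing through yields
\begin{equation*}
\frac{1}{2}\sum_{k=0}^{K-1}\alpha_{l,k}\,\mathbb{E}\|\nabla F(x^k)\|^2\leq F(x^0)-F^*+c_3\sum_{k=0}^{K-1}\alpha_{b,k}\bigl(\beta_{b,k}\sigma_g^2/(\mu_g C_k)+\|B_k\|^2\bigr)+c_4\sum_{k=0}^{K-1}\alpha_{b,k}^2\tilde\sigma_f^2.
\end{equation*}
With $\alpha_{l,k}=\alpha_{l,0}/\sqrt{k+1}$ and $\alpha_{b,k}=\alpha_{b,0}/\sqrt{k+1}$, $\beta_{b,k}=\beta_{b,0}/(k+1)$, the prefactor $\sum_k\alpha_{l,k}$ grows like $\sqrt{K}$ and the noise sums grow like $\log K$; dividing gives the advertised $\tilde{\mathcal{O}}(1/\sqrt{K})$ bound. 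Tracking the $\kappa=L_g/\mu_g$ dependence through $L_F,L_y,L_f$ (which from prior BO analyses contribute $\kappa^3$ in the leading term and $\kappa^2$ in the $\log K$ term) gives the stated $\kappa$-scaling.

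\textbf{Main obstacle.} The nontrivial step is combining the upper descent inequality with the lower-level contraction into a single Lyapunov potential, because the lower-level error at iteration $k$ depends on $\|x^k-x^{k-1}\|\sim\alpha_{k-1}\|h_f^{k-1}\|$, which couples back into the hypergradient noise at step $k$. Closing this loop requires using the smallness of $(1-\mu_gC_k)^T$ to absorb the $\alpha_{k-1}^2\|h_f^{k-1}\|^2$ drift term into the $L_F\alpha_k^2\|h_f^k\|^2$ slot from the previous iteration, which is exactly what the step-size condition $\alpha_{b,0}^2/\alpha_{l,0}\leq 1/(L_F+4L_y^2)$ and the lower bound on $T$ are designed to guarantee.
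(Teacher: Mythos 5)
Your overall architecture matches the paper's: an $L_F$-smoothness descent inequality in which the envelope is used asymmetrically ($\alpha_{l,k}$ retained on the $-\|\nabla F(x^k)\|^2$ term, $\alpha_{b,k}$ on every error term), a Lyapunov potential coupling $F(x^k)-F(x^*)$ with the lower-level tracking error, the drift $\|y^*(x^{k+1})-y^*(x^k)\|\le L_y\alpha_k\|h_f^k\|$ absorbed by the condition $\alpha_{b,0}^2/\alpha_{l,0}\le 1/(L_F+4L_y^2)$ (which in the paper's Lemma \ref{lem:bi_descent} makes the net coefficient of $\mathbb{E}\|\bar h_f^k\|^2$ nonpositive), and the lower bound on $T$ forcing $(\alpha_{b,0}L_f^2+2)(1-\mu_g C_{K-1})^T\le 1$ so that the tracking error telescopes. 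This is precisely Lemmas \ref{lem:bi_descent} and \ref{lem:lemma_g_func} followed by the summation in the paper's proof.

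There is, however, one step that as written would fail to deliver the rate: your handling of the accumulated lower-level noise. You unroll the inner contraction and bound $\sum_{j=0}^{T-1}(1-\mu_g C_k)^j$ by $1/(\mu_g C_k)$, arriving at an additive error of order $\beta_{b,k}\sigma_g^2/(\mu_g C_k)$. Since $\beta_{b,k}=\beta_{b,0}/(k+1)\to 0$, for all large $k$ one has $C_k=\beta_{b,k}$, and your additive error becomes the \emph{constant} $2\sigma_g^2/\mu_g$. Worse, this error enters the merged recursion with coefficient $(\alpha_{b,k}L_f^2+2)$ — the additive $2$ comes from the $\|a+b\|^2\le 2\|a\|^2+2\|b\|^2$ expansion of $\|y^{k+1}-y^*(x^{k+1})\|^2$ inside the potential — not with a vanishing prefactor $\alpha_{b,k}$ as in your telescoped display. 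The lower-level noise therefore contributes $\Theta(K)$ to the sum, hence $\Theta(\sqrt{K})$ after dividing by $\sum_k\alpha_{l,k}\sim\sqrt{K}$, and no convergence follows. The paper instead bounds the geometric sum by $T$, yielding the additive term $2\beta_{b,k}T\sigma_g^2$ in Lemma \ref{lem:lemma_g_func}; then $\sum_k\beta_{b,k}=O(\log K)$, which is also exactly why $\beta_{b,k}$ must decay like $1/(k+1)$ rather than $1/\sqrt{k+1}$. With that single replacement (take the $\min\{T,1/(\mu_g C_k)\}$ branch equal to $T$), your argument closes and coincides with the paper's.
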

\begin{remark}We further give the convergence result under the  bounded variance assumption \eqref{eq:ass_g_var} in Appendix A.
    Theorem \ref{thm:bilevel} shows that BiSPS matches the optimal rate of SGD up to a logarithmic factor without a growing batch size. We notice that the assumption \eqref{eq:ass_g_func} largely simplifies the expression on $T$ and does not require an explicit upper bound on $\beta_{b,0}$. As in the single-level case, whether using one sample or two samples (which makes upper-level step-size independent of gradient) gives similar empirical performances (see Appendix B). Note that the independence assumption is only needed for the upper-level. Thus, the two-sample requirement of theorem does not apply to the lower-level problem. This is useful from computational standpoint as typical bi-level algorithms 
    run multiple lower-level updates for each upper-level iteration. 
\end{remark}

\section{Additional Hyper-Representation and Data Distillation Experiments} \label{sec:exp}
\textbf{Hyper-representation learning: }The experiments are performed on MNIST dataset using LeNet \cite{lecun1998gradient,sow2022convergence}. We use conjugate gradient method for solving system of equations when computing the hypergradient \cite{grazzi2020iteration}. The upper and lower-level objectives are to optimize the embedding layers and the classifier (i.e. the last layer of the neural net), respectively (see Appendix B for details). For constant-step SGD and Adam, we tune the lower-level learning rate $\beta \in \{10.0, 5.0, 1.0, 0.5, 0.1, 0.05, 0.01\}$. For the upper-level learning rate, we tune $\alpha \in \{0.001, 0.0025, 0.005, 0.01, 0.05, 0.1\}$ for SGD, and $\alpha \in \{10^{-5}, 5 \cdot 10^{-5}, 10^{-4}, 5 \cdot 10^{-4}, 0.001, 0.01\}$ for Adam (recall that $\delta$ in \eqref{eq:armijo} is set to 0). Based on the results of Figure \ref{fig:hyper_more}, we make the following key observations: \textbf{\textit{\ding{172}} line-search at the upper-level is essential for achieving the optimal performance (Figure \ref{fig:hyper_train_spe}); \textit{\ding{173}} BiSLS-Adam/SGD not only converges fast but also generalizes well (Figure \ref{fig:val_acc}); \textit{\ding{174}} BiSLS-Adam/SGD is highly robust to search starting points $\alpha_{b,0}$ and $\beta_{b,0}$ (Figure \ref{fig:cost_upper}, \ref{fig:cost_lower}). It addresses the fundamental question of how to tune $\alpha$ and $\beta$ in bi-level optimization} (see Appendix B for additional results on search cost). 
\begin{figure}[t!]
    \centering
    \begin{subfigure}{0.24\textwidth}
        \includegraphics[width=\textwidth]{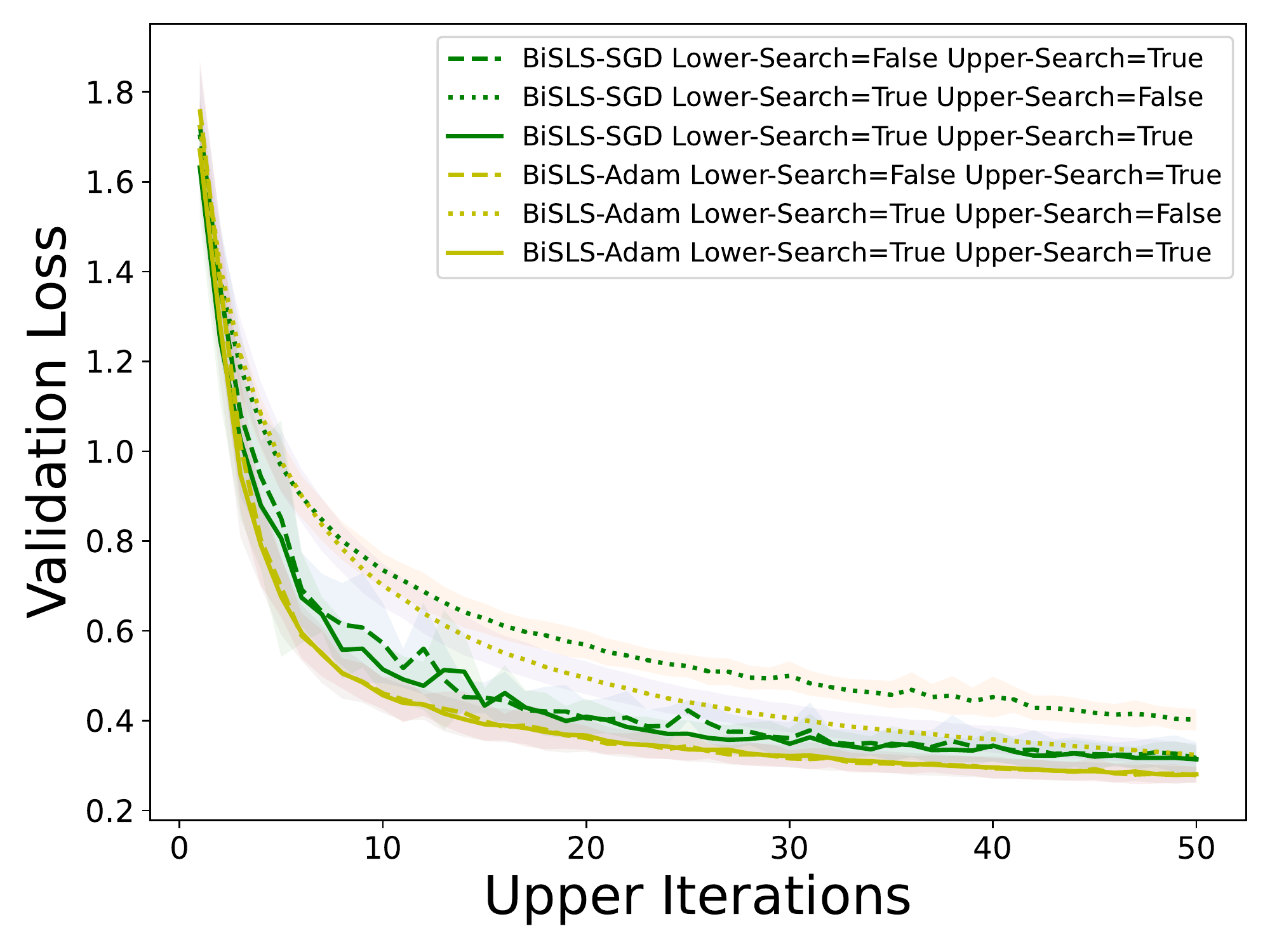}
        \caption{Validation Loss}
        \label{fig:hyper_train_spe}
    \end{subfigure}
    \hfill
    \begin{subfigure}{0.24\textwidth}
        \includegraphics[width=\textwidth]{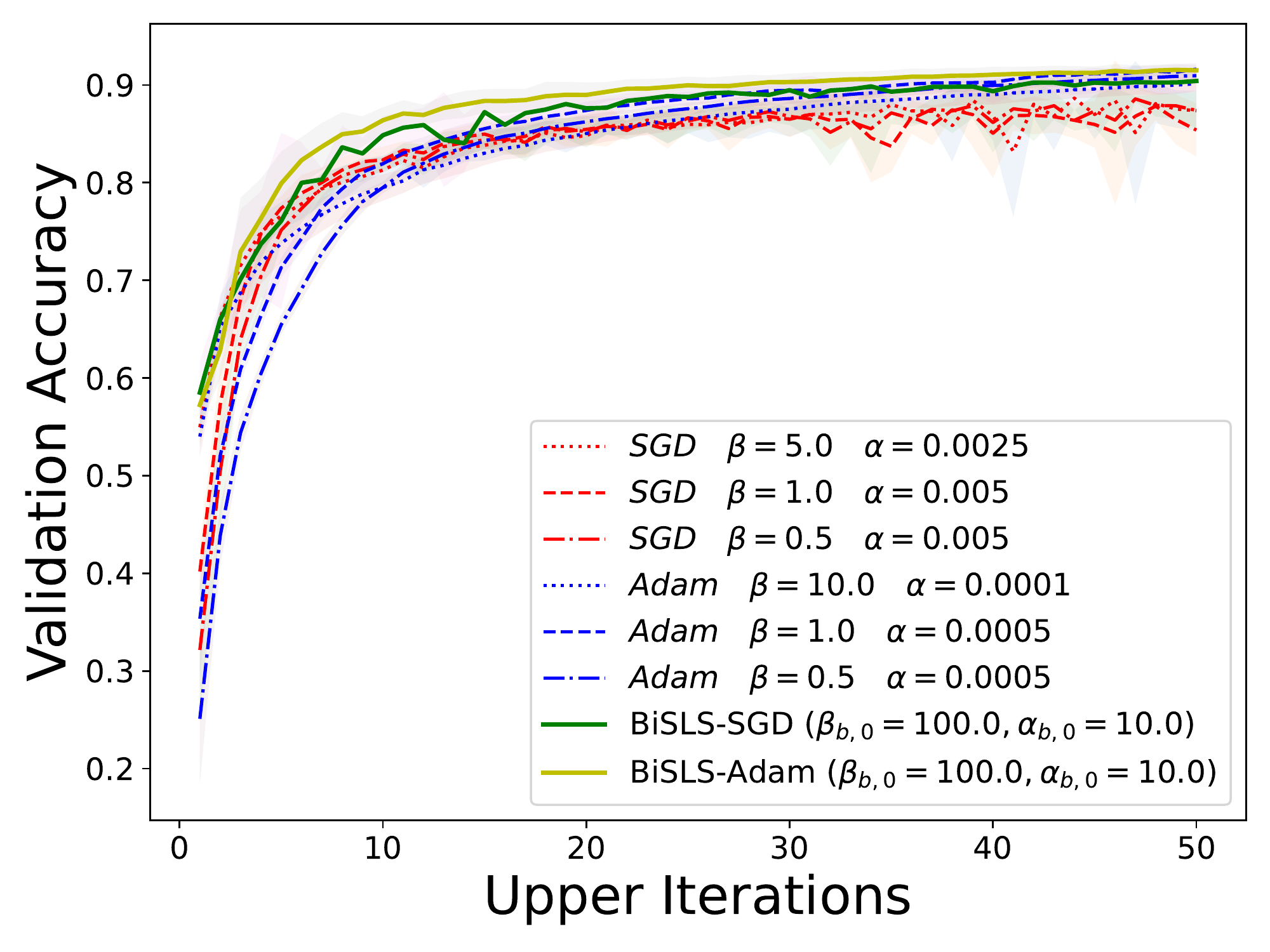}
        \caption{Validation Accuracy}
        \label{fig:val_acc}
    \end{subfigure}
    \hfill
    \begin{subfigure}{0.24\textwidth}
        \includegraphics[width=\textwidth]{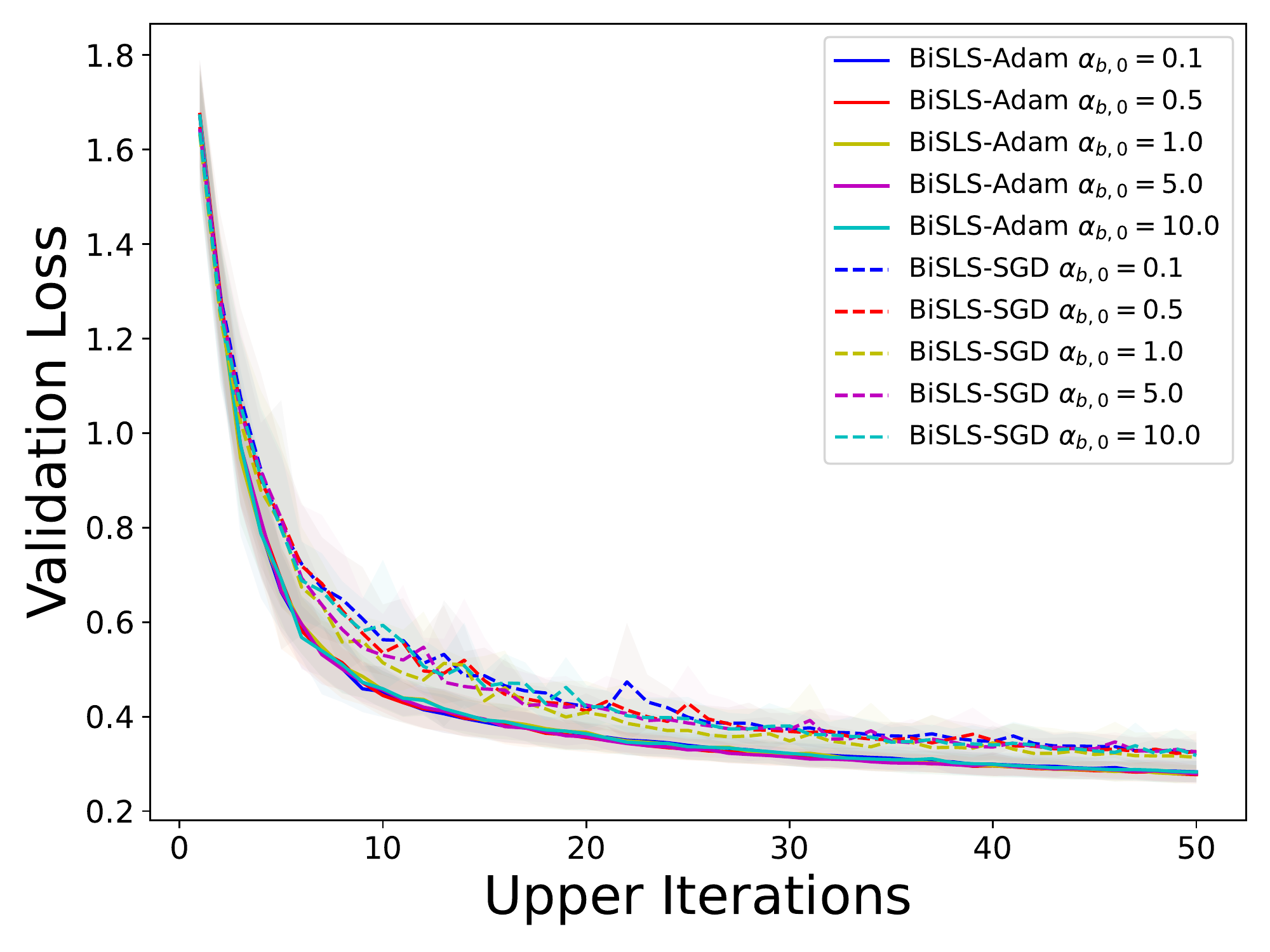}
        \caption{Different $\alpha_{b,0}$}
        \label{fig:cost_upper}
    \end{subfigure}
    \hfill
    \begin{subfigure}{0.24\textwidth}
        \includegraphics[width=\textwidth]{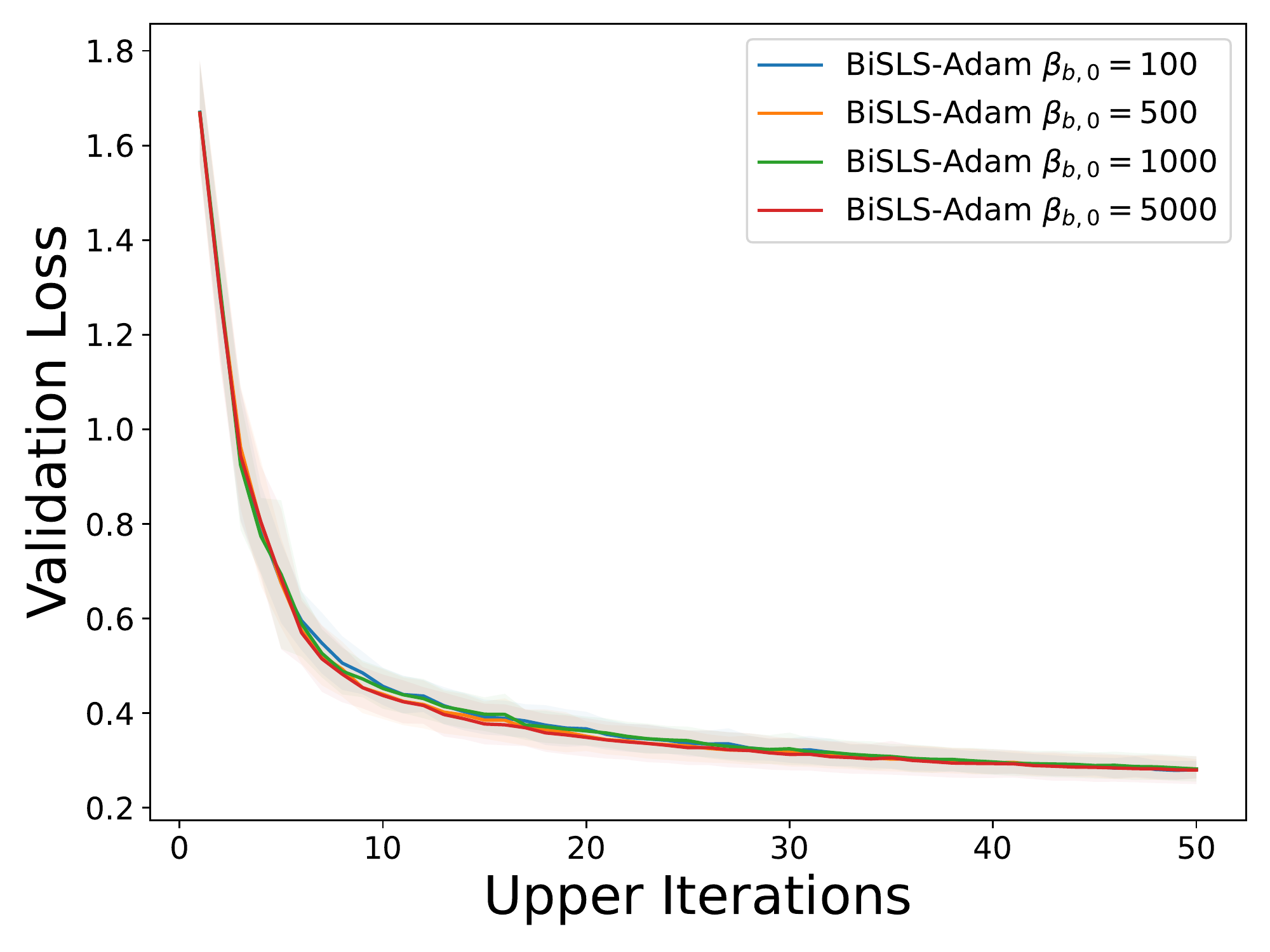}
        \caption{Different $\beta_{b,0}$}
        \label{fig:cost_lower}
    \end{subfigure}
    \caption{Validation loss (a) and accuracy (b) against iterations. (a) Comparisons between whether to use or not use line-search at the upper or lower level; (b) Generalization performance of BiSLS-Adam/SGD and fine-tuned Adam/SGD; (c) Validation loss against iterations for different values of $\alpha_{b,0}$ ($\beta_{b,0}$ fixed at $100$). (d) Same plot as (c) but for different values of $\beta_{b,0}$ ($\alpha_{b,0}$ fixed at $10$).}
    \label{fig:hyper_more}
\end{figure} \\

\textbf{Search cost and run-time comparison: } The search cost of reset option 1 in Algorithm \ref{reset} for BiSLS-SGD and BiSLS-Adam is $89\pm15$ and $115\pm16$ per iteration, respectively, measured in the number of condition-checking via \eqref{eq:armijo}. These costs are significantly reduced when option 3 is used.   
Concretely, Figure \ref{fig:eat_costupper_all} suggests that choosing $\eta = 2$ in option 3 results in $\sim 9$ evaluations of \eqref{eq:armijo} per iteration for both BiSLS-Adam and BiSLS-SGD. Choosing $\eta = 1$ in reset option 3 is equivalent to reset option 2, which further cuts down the cost to $\sim 4$. However, this choice forces the learning rate to be monotonically-decreasing, which leads to a slower convergence compared to other $\eta$'s (see Appendix B for more details). Besides this, Figure \ref{fig:twin} shows that a single step for approximating the $y^*(x)$ in \eqref{eq:armijo} is sufficient,  considering the trade-off between performance gain and extra computation overhead introduced. Most importantly, BiSPS-Adam takes less time to reach a threshold test accuracy than fine-tuned Adam (Figure \ref{fig:run_time_85}) due to a suitable (and potentially large) learning rate found using \eqref{eq:armijo} (note that Figure \ref{fig:run_time_85} excludes the cost of tuning the learning rate of Adam).
 \begin{figure}[t!]
        \begin{subfigure}{0.33\textwidth}
         \centering
         \includegraphics[width=1.0\linewidth]{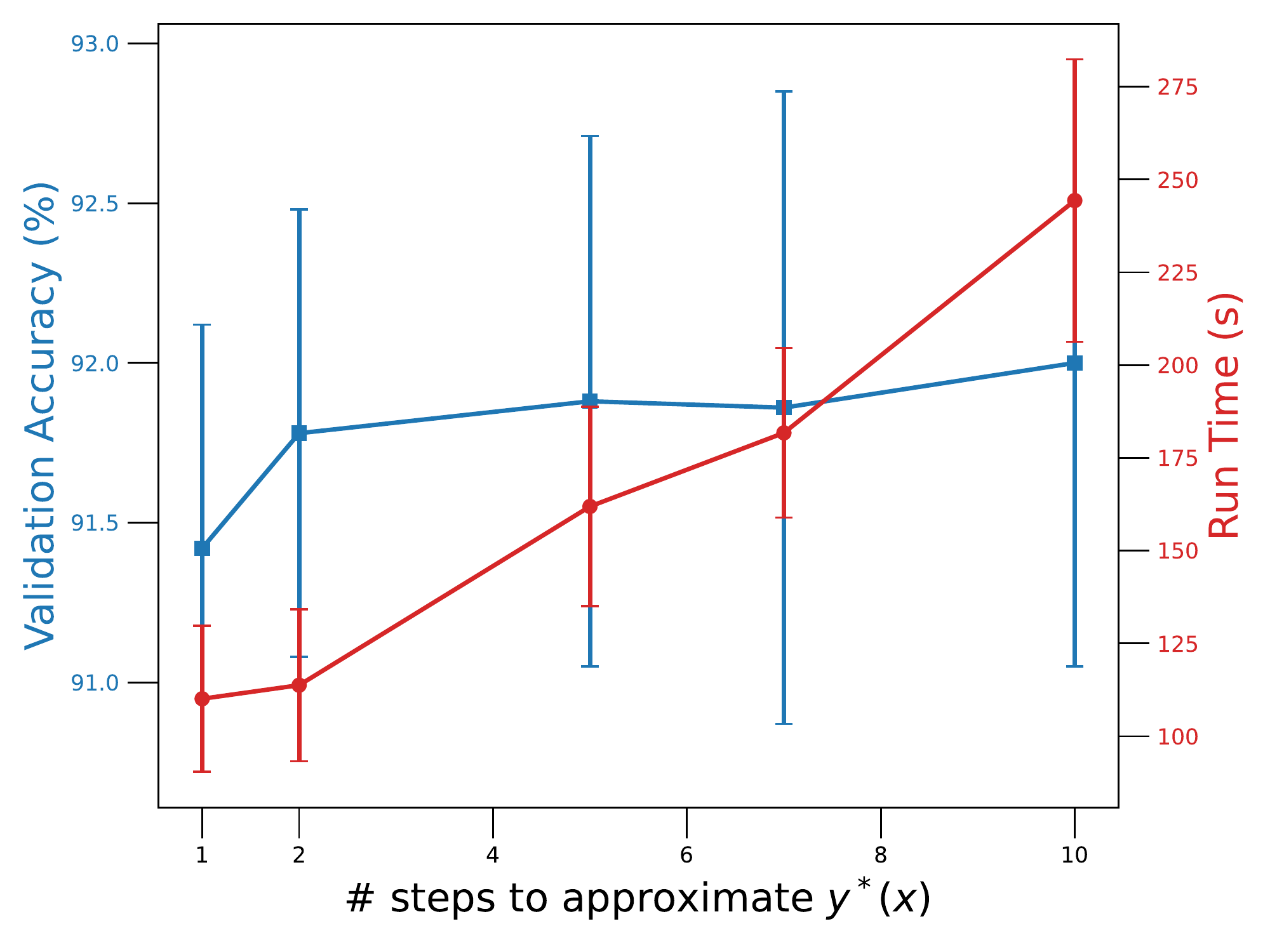}
         \caption{Approximation of $y^*(x)$}
         \label{fig:twin}
       \end{subfigure} 
       \begin{subfigure}{0.32\textwidth}
         \centering
         \includegraphics[width=1.0\linewidth]{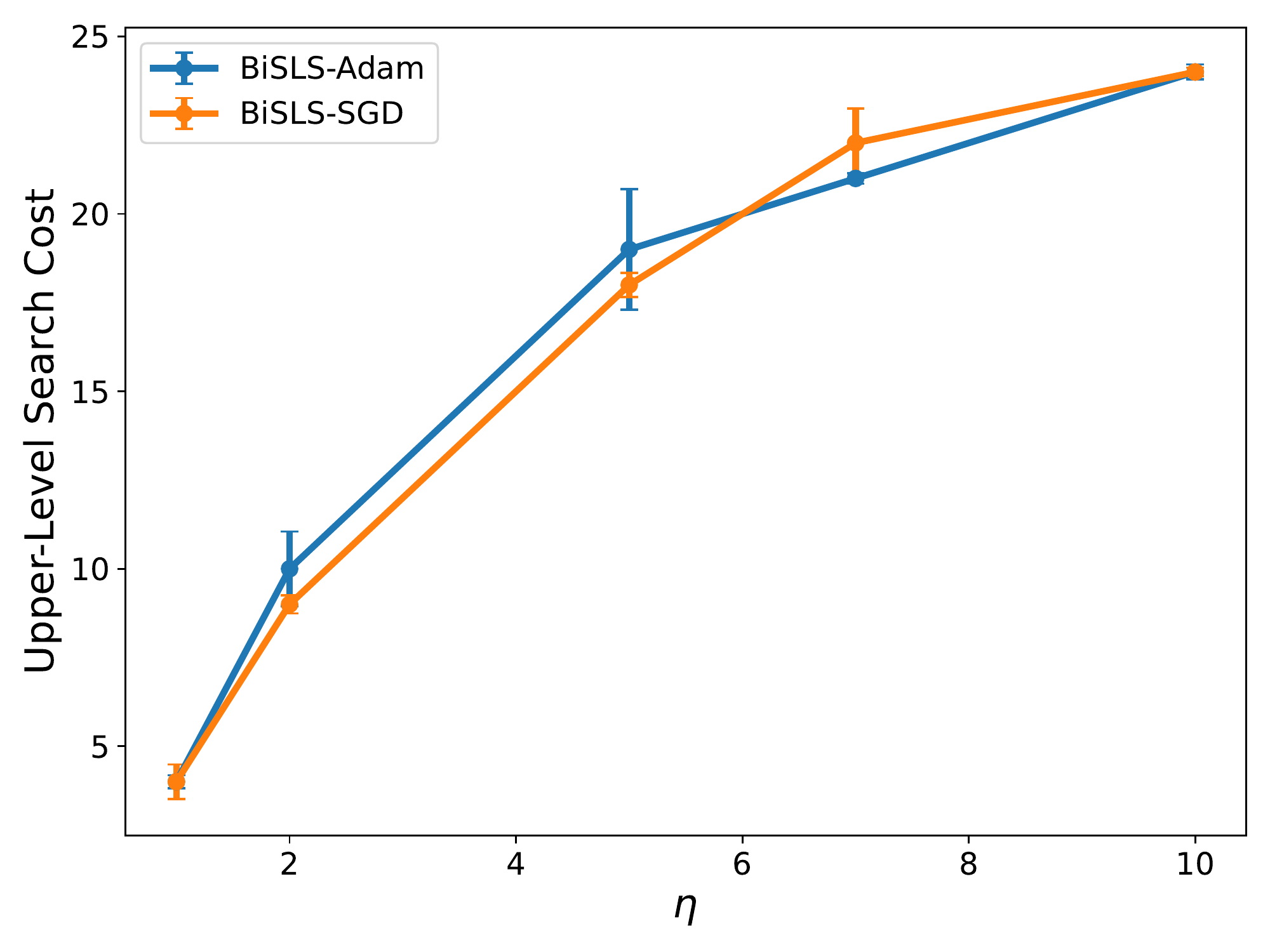}
        \caption{Search cost}
         \label{fig:eat_costupper_all}
       \end{subfigure}
       \begin{subfigure}{0.32\textwidth}
         \centering
         \includegraphics[width=1.0\linewidth]{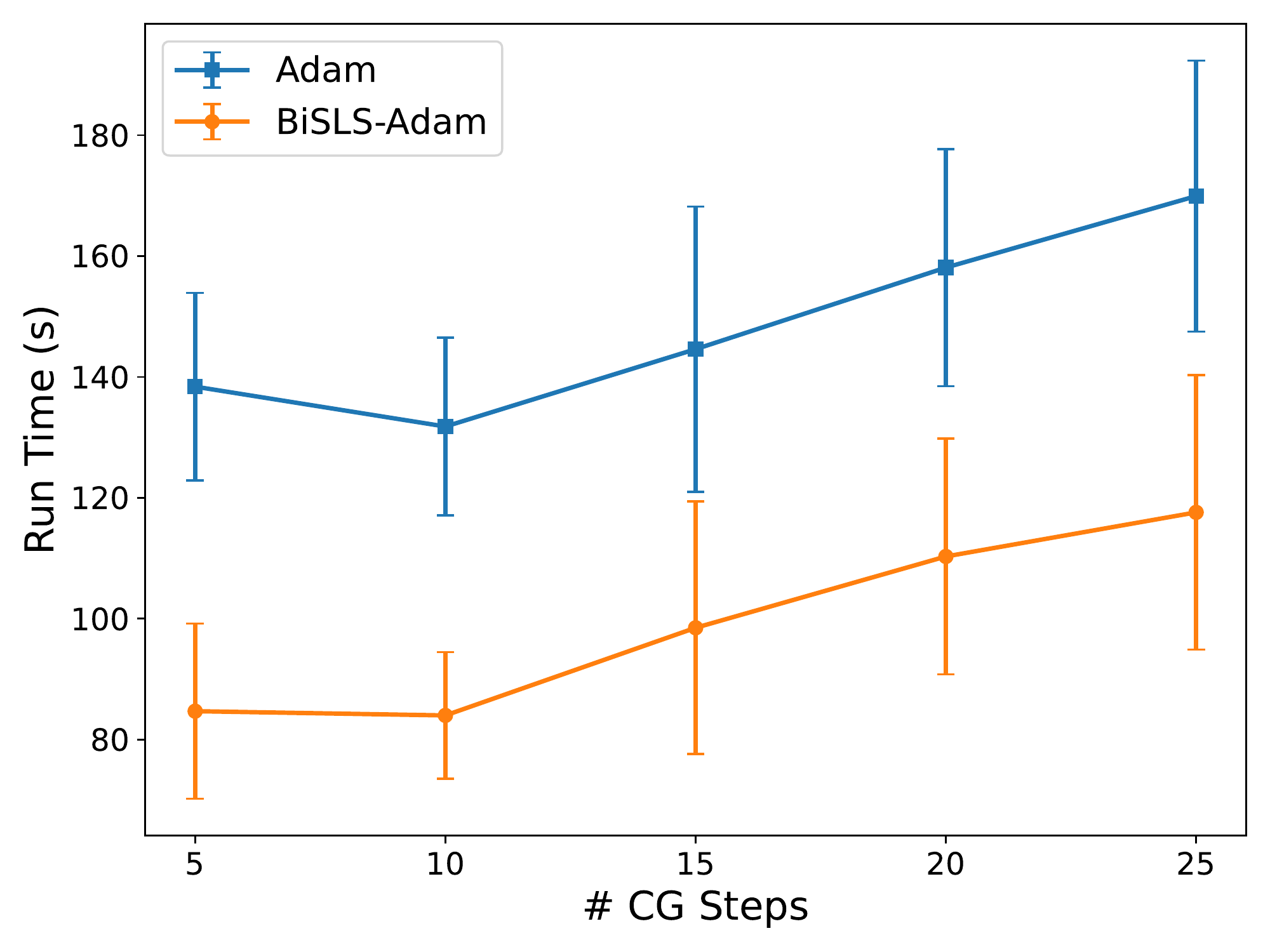}
        \caption{Run time}
         \label{fig:run_time_85}
       \end{subfigure}
    \caption{(a) Validation accuracy and run time (in seconds) against different number of gradient steps for approximating $y^*(x)$ in \eqref{eq:armijo}. (b) Search cost (i.e. number of evaluations of \eqref{eq:armijo} per iteration) against different $\eta$'s for reset option 3 in Algorithm \ref{reset}. (c) Run time (in seconds) of BiSLS-Adam and fine-tuned Adam to reach $85\%$ validation accuracy against different number of conjugate gradient steps for computing the hypergradient.
    }
    \label{fig:search_and_runtime}
\end{figure}

\textbf{Data distillation: }The goal of data distillation is to generate a small set of synthetic data from an original dataset that preserves the performance of a model when trained using the generated data \citep{wang2020dataset, yu2023dataset}. 
We adapted the experiment set up from \citet{lorraine2020optimizing} 
to distill MNIST digits. 
We present the results in Figure \ref{fig:datadist}, where we observe that BiSLS-SGD converges significantly faster than fine-tuned Adam or SGD, and generate realistic MNIST images (see Appendix B for more results). 

\begin{figure}[t!]
    \centering
    \begin{subfigure}{0.24\textwidth}
        \includegraphics[width=\textwidth]{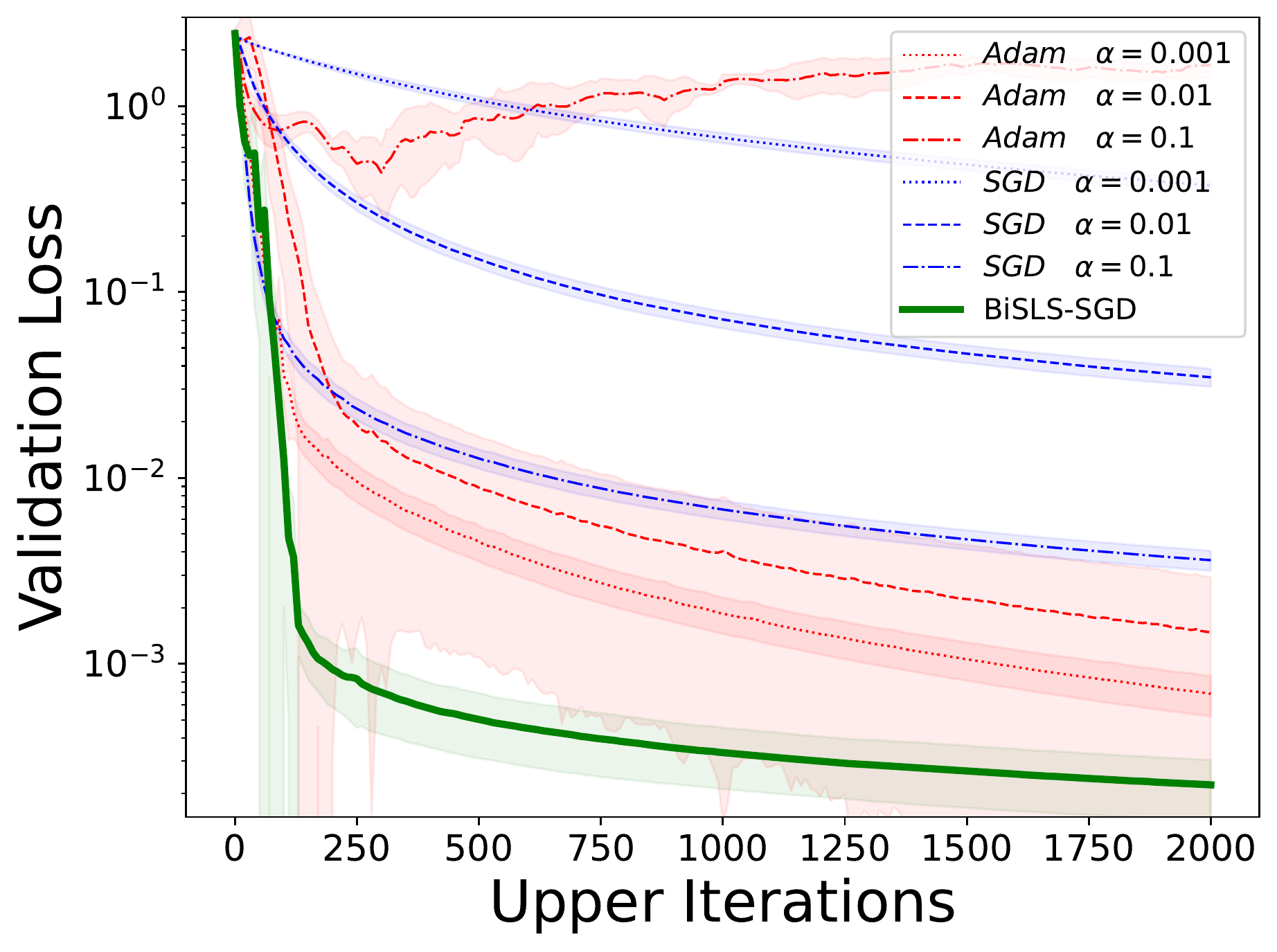}
        \caption{Neumann}
        \label{fig:data_distllation_neumann}
    \end{subfigure}
    \hfill
    \begin{subfigure}{0.24\textwidth}
        \includegraphics[width=\textwidth]{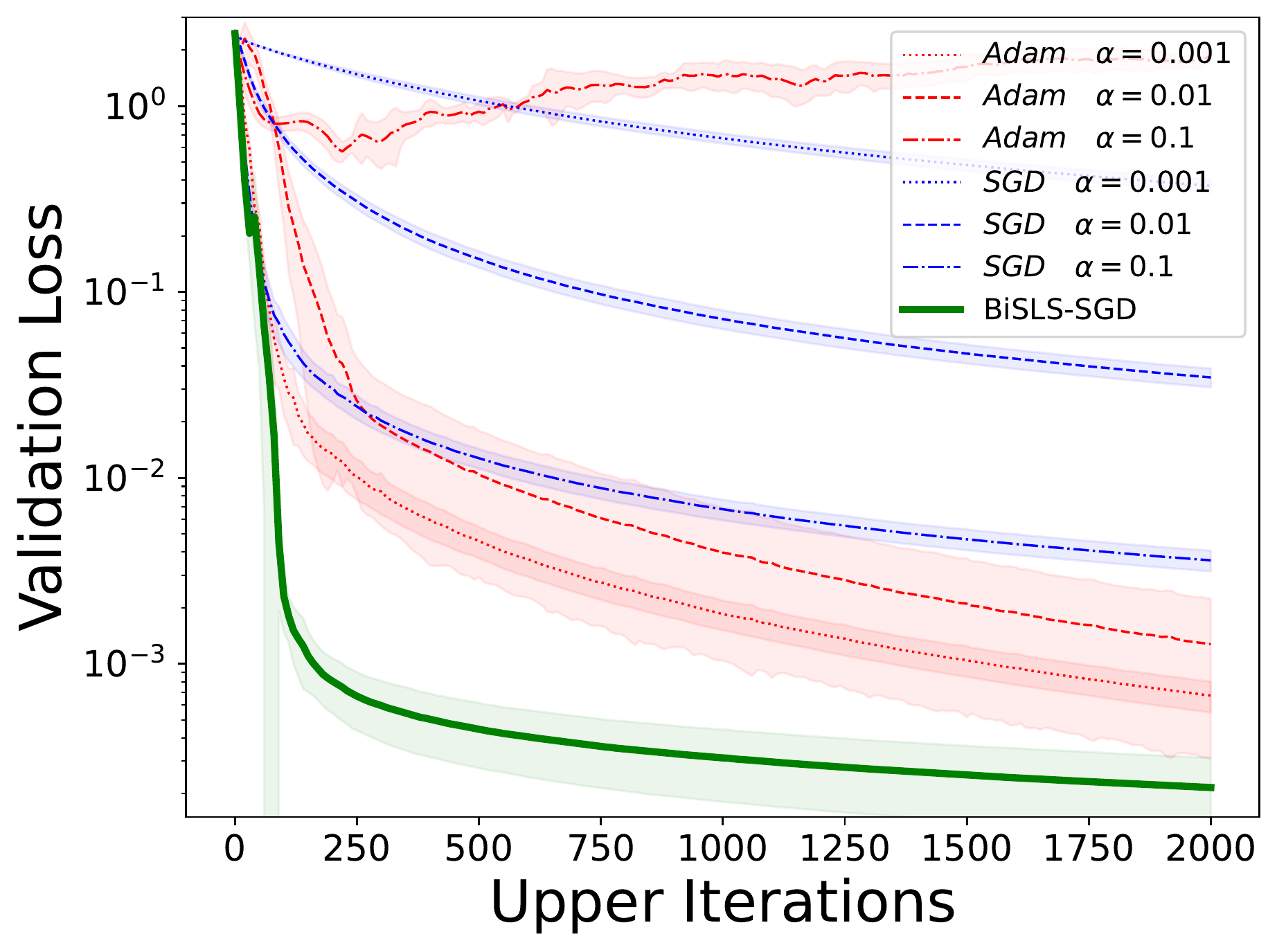}
        \caption{Identity}
        \label{fig:data_distllation_identity}
    \end{subfigure}
    \hfill
    \begin{subfigure}{0.48\textwidth}
        \hspace{0.1\textwidth}
        \includegraphics[width=.15\textwidth]{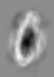} \hfill
        \includegraphics[width=.15\textwidth]{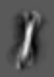} \hfill
        \includegraphics[width=.15\textwidth]{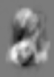} \hfill
        \includegraphics[width=.15\textwidth]{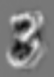} \hfill
        \includegraphics[width=.15\textwidth]{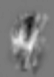} \hfill \\[\smallskipamount]
        \hspace*{0.1\textwidth}
        \includegraphics[width=.15\textwidth]{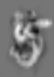} \hfill
        \includegraphics[width=.15\textwidth]{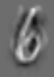} \hfill
        \includegraphics[width=.15\textwidth]{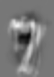} \hfill
        \includegraphics[width=.15\textwidth]{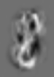} \hfill
        \includegraphics[width=.15\textwidth]{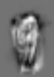} \hfill
        \caption{Distilled MNIST Images}
        \label{fig:images}
    \end{subfigure}
    \caption{(a)(b): Comparison between BiSLS-SGD and Adam/SGD for Data Distillation on MNIST dataset. Validation loss plotted against iterations. (a) Hypergradient computed using Neumann series; (b) Inverse Hessian in \eqref{eq:hyperg} treated as the Identity \cite{lorraine2020optimizing} when computing the hypergradient; (c) Distilled MNIST images after 3000 iterations of BiSLS-SGD.} 
    \label{fig:datadist}
\end{figure}

\section{Conclusion}
In this work, we have given simple alternatives to SLS and SPS that show good empirical performance in non-interpolating scenario without requiring the step size to be monotonic. We unify their analysis based on a simplified envelope-type step size, and extend the analysis to the bi-level setting while designing a SPS-based bi-level algorithm. In the end, we propose bi-level line-search algorithm BiSLS-Adam/SGD that is empirically truly robust and adaptive to learning rate initialization. 
Our work opens several possible future directions. Given the superior performance of BiSLS, we prioritize an analysis of its convergence rates. The difficulty stems from: (a) the bias in hypergradient estimation; (b) the dual updates in $x$ and $y^*(x)$ (incurring nested loop structures); (c) the error in estimating $y^*(x)$. On single-level optimization, we remark as an important direction to relax the two-sample assumption on \spsb/\slsb. 
Ultimately, we hope to promote further research on bi-level optimization algorithms with minimal tuning. 
\newpage
\paragraph{Acknowledgement}
This work is funded partially by the NSERC Discovery Grant RGPIN-2021-03677, the NSERC Discovery Grant RGPIN-2022-03669, and the Canada CIFAR AI Chair Program.
\bibliographystyle{plainnat}
\bibliography{main}
\newpage
\appendix
\section{Proofs of Theorems and Additional Convergence Results}
\subsection{Useful Lemmas}
Lemma \ref{lem:bounds} provides more details on the envelope structure of \spsb and \slsb given in \eqref{eq:bisps_u} and \eqref{eq:bisps_l}. The lower bound in \eqref{eq:spsb} will also be used in Lemma \ref{lem:lemma_g_func} (for bounding the term $\lVert y^{k+1} - y^*(x^k)\rVert^2$).  
\begin{lemma} \label{lem:bounds} Under the Assumption \ref{ass:smooth}, we have the following:
     \begin{align}
      &\text{\spsb: } \quad \min\{\frac{1}{2 c L_{\max}},\gamma_{b,k} \} \leq \gamma_k = \min \{ \frac{f_{i_k}(x^k)-l_{i_k}^*}{c \lVert \nabla f_{i_k}(x^k)\rVert^2} , \gamma_{b,k}\}, \quad 0 < c,  \label{eq:spsb}\\
     &\text{\slsb: } \quad \min\{\frac{2(1-\bar{c})}{L_{\max}},\gamma_{b,k} \} \leq \gamma_k \leq \min \{ \frac{f_{i_k}(x^k)-l_{i_k}^*}{\bar{c} \lVert \nabla f_{i_k}(x^k)\rVert^2} , \gamma_{b,k}\}, \quad 0<\bar{c}<1. \label{eq:slsb}
 \end{align}
\end{lemma}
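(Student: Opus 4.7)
The plan is to establish each bound using the standard consequences of convexity and $L_{\max}$-smoothness of the sampled functions $f_{i_k}$. Recall that smoothness gives the descent-lemma inequality
\begin{equation*}
f_{i_k}(x - \gamma \nabla f_{i_k}(x)) \leq f_{i_k}(x) - \gamma \lVert \nabla f_{i_k}(x)\rVert^2 + \tfrac{L_{i_k}}{2}\gamma^2 \lVert \nabla f_{i_k}(x)\rVert^2,
\end{equation*}
and, by minimizing over $\gamma$, the classical bound $f_{i_k}(x) - f_{i_k}^* \geq \tfrac{1}{2 L_{i_k}} \lVert \nabla f_{i_k}(x)\rVert^2$. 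Both of these will be the workhorses.

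For the \spsb bound \eqref{eq:spsb}, the equality is the definition of $\gamma_k$. For the lower bound I will chain the smoothness-based inequality above with $l_{i_k}^* \leq f_{i_k}^*$ to obtain $(f_{i_k}(x^k)-l_{i_k}^*)/\lVert \nabla f_{i_k}(x^k)\rVert^2 \geq 1/(2L_{i_k}) \geq 1/(2L_{\max})$, so that the SPS-term inside the outer $\min$ is at least $1/(2 c L_{\max})$. Taking the $\min$ with $\gamma_{b,k}$ then gives the stated envelope.

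For the \slsb bound \eqref{eq:slsb}, the upper bound follows directly from the Armijo rule \eqref{eq:line_search}: evaluating its right-hand side at $\gamma_k$ must exceed $f_{i_k}(x^k - \gamma_k \nabla f_{i_k}(x^k)) \geq f_{i_k}^* \geq l_{i_k}^*$, which rearranges to $\gamma_k \leq (f_{i_k}(x^k)-l_{i_k}^*)/(\bar c \lVert \nabla f_{i_k}(x^k)\rVert^2)$; combining with the built-in cap $\gamma_k \leq \gamma_{b,k}$ yields the min-expression on the right. For the lower bound, I use the descent inequality to see that any $\gamma \leq 2(1-\bar c)/L_{i_k}$ automatically satisfies Armijo \eqref{eq:line_search}, so in particular any $\gamma \leq 2(1-\bar c)/L_{\max}$ does. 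Since the line-search rule returns the largest $\gamma_k \leq \gamma_{b,k}$ satisfying \eqref{eq:line_search}, it follows that $\gamma_k \geq \min\{2(1-\bar c)/L_{\max},\gamma_{b,k}\}$.

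I do not foresee a significant obstacle: both arguments are essentially textbook manipulations of smoothness and the Armijo condition, extended from the constant upper bound $\gamma_{b,0}$ used in prior work to the non-increasing sequence $\gamma_{b,k}$. The only subtle point worth stating explicitly is that the lower bound in the \slsb case is literal only under an idealized line search that returns the largest admissible $\gamma_k$; with a backtracking scheme as in Algorithm \ref{BiSLS-SGD} using shrink factor $w \in (0,1)$, the same bound holds up to the factor $w$, which can be absorbed into constants and does not affect subsequent convergence arguments.
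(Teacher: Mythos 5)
Your proof is correct and follows essentially the same route as the paper: the paper's own proof of Lemma \ref{lem:bounds} cites \citet{loizou2021stochastic,orvieto2022dynamics} for the \spsb bounds and \citet{vaswani2021painless} for the \slsb lower bound, and only writes out the rearrangement of the Armijo condition for the \slsb upper bound, which matches your argument exactly. You simply fill in from first principles (the descent lemma and the bound $f_{i_k}(x)-f_{i_k}^*\geq \tfrac{1}{2L_{i_k}}\lVert\nabla f_{i_k}(x)\rVert^2$) the steps the paper delegates to citations, and your caveat about backtracking versus an idealized largest-admissible-step line search is one the paper itself acknowledges in its appendix.
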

\begin{proof}
The bounds in \eqref{eq:spsb} have been shown in \cite{loizou2021stochastic, orvieto2022dynamics}. For \eqref{eq:slsb}, the first part of the inequality has been shown in \cite{vaswani2021painless}. For the second part, recall the Armijo condition \eqref{eq:armijo}:
 \begin{align}
    f_{i_k}(x_k - \gamma_k \nabla f_{i_k}(x_k)) \leq f_{i_k}(x_k) - \bar{c} \cdot \gamma_k \lVert \nabla f_{i_k}(x_k) \rVert^2, \quad 0<\bar{c}<1. \nn
 \end{align}
We can then rearrange this to obtain
 \begin{align}
     \gamma_k \leq  \frac{f_{i_k}(x_k) - f_{i_k}(x_k - \gamma_k \nabla f_{i_k}(x_k))}{\bar{c} \lVert \nabla f_{i_k}(x^k) \rVert^2} \leq \frac{f_{i_k}(x_k) - f_{i_k}^*}{\bar{c} \lVert \nabla f_{i_k}(x^k) \rVert^2} \leq \frac{f_{i_k}(x_k) - l_{i_k}^*}{\bar{c} \lVert \nabla f_{i_k}(x^k) \rVert^2},
 \end{align}
 where $l_{i_k}^*$ is any lower bound for $f_{i_k}^*$. Also recall that $\gamma_{b,k}$ is the search starting point at iteration $k$, hence \eqref{eq:slsb} holds for \slsb.
\end{proof}
Lemma \ref{lem:bi_lem} gives the expressions for the constants $L_F$, $L_y$, and $L_f$. The proof can be found in \cite[Lem   ~2.2]{ghadimi2018approximation}.
\begin{lemma}\label{lem:bi_lem} Under Assumptions \ref{bi_ass_g} and \ref{bi_ass_f}, we have the following: 
\begin{align}
    \lVert \nabla F(x_1) - \nabla F(x_2) \rVert \leq L_F \lVert x_1 - x_2 \rVert, \nn \\
    \lVert y^*(x_1) - y^*(x_2)\rVert \leq L_y \lVert x_1 - x_2 \rVert, \nn \\
    \lVert \bar{\nabla} f(x,y^*(x)) - \bar{\nabla} f(x,y)\rVert \leq L_f \lVert y^*(x) - y\rVert, \nn 
\end{align}
where 
\begin{align}
    &L_f =  L_{f,1} + \frac{L_{f,1}L_g}{\mu_g} + \frac{L_1}{\mu_g}(L_G + \frac{L_g L_G}{\mu_g}) \sim \mathcal{O}(\kappa^2) \nn \\
    &L_y = \frac{L_g}{\mu_g} \sim \mathcal{O}(\kappa)\nn \\
    &L_F = L_{f,1} + \frac{L_g(L_{f,1} + L_f)}{\mu_g} + \frac{L_1}{\mu_g}(L_G + \frac{L_gL_G}{\mu_g}) \sim \mathcal{O}(\kappa^3)\nn
\end{align}
\end{lemma}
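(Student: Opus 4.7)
The plan is to establish the three Lipschitz estimates in order, since each one builds on the previous. First, I would prove the bound $\lVert y^*(x_1)-y^*(x_2)\rVert \leq L_y\lVert x_1-x_2\rVert$. By Assumption \ref{bi_ass_g}, $g(x,\cdot)$ is $\mu_g$-strongly convex, so $y^*(x_i)$ is characterized by the first-order condition $\nabla_y g(x_i,y^*(x_i))=0$. Using strong convexity and the Lipschitz continuity of $\nabla g$, one gets
\begin{align}
\mu_g \lVert y^*(x_1)-y^*(x_2)\rVert^2 &\leq \langle \nabla_y g(x_1,y^*(x_1))-\nabla_y g(x_1,y^*(x_2)),\, y^*(x_1)-y^*(x_2)\rangle \nn \\
&= \langle \nabla_y g(x_2,y^*(x_2))-\nabla_y g(x_1,y^*(x_2)),\, y^*(x_1)-y^*(x_2)\rangle \nn \\
&\leq L_g \lVert x_1-x_2\rVert\cdot\lVert y^*(x_1)-y^*(x_2)\rVert, \nn
\end{align}
which yields $L_y=L_g/\mu_g = \mathcal{O}(\kappa)$.

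Next, for the Lipschitz constant $L_f$ of $\bar{\nabla} f(x,\cdot)$, I would decompose
\begin{align}
\bar{\nabla} f(x,y^*(x))-\bar{\nabla} f(x,y) &= \bigl[\nabla_x f(x,y^*(x))-\nabla_x f(x,y)\bigr] \nn\\
&\quad - \bigl[\nabla^2_{xy} g(x,y^*(x))-\nabla^2_{xy} g(x,y)\bigr][\nabla^2_{yy} g(x,y^*(x))]^{-1}\nabla_y f(x,y^*(x)) \nn\\
&\quad - \nabla^2_{xy} g(x,y)\bigl[[\nabla^2_{yy} g(x,y^*(x))]^{-1}-[\nabla^2_{yy} g(x,y)]^{-1}\bigr]\nabla_y f(x,y^*(x)) \nn\\
&\quad - \nabla^2_{xy} g(x,y)[\nabla^2_{yy} g(x,y)]^{-1}\bigl[\nabla_y f(x,y^*(x))-\nabla_y f(x,y)\bigr]. \nn
\end{align}
Each piece is bounded using the assumptions: the first via $L_{f,1}$; the second via $L_G$ on $\nabla^2 g$, $\mu_g^{-1}$ on the inverse Hessian, and $C_f$ on $\nabla_y f$; the third via the identity $A^{-1}-B^{-1}=A^{-1}(B-A)B^{-1}$ combined with $\lVert \nabla^2_{yy} g\rVert^{-1}\leq \mu_g^{-1}$, $\lVert \nabla^2_{xy} g\rVert\leq C_g$, and the $L_G$-Lipschitz property of $\nabla^2 g$; the fourth via $C_g$, $\mu_g^{-1}$, and $L_{f,1}$. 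Summing yields the claimed $L_f=L_{f,1}+L_{f,1}L_g/\mu_g + (L_1/\mu_g)(L_G+L_gL_G/\mu_g)=\mathcal{O}(\kappa^2)$.

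Finally, for $L_F$, I use $\nabla F(x)=\bar{\nabla} f(x,y^*(x))$ and split
\begin{align}
\nabla F(x_1)-\nabla F(x_2) = \bigl[\bar{\nabla} f(x_1,y^*(x_1))-\bar{\nabla} f(x_2,y^*(x_1))\bigr] + \bigl[\bar{\nabla} f(x_2,y^*(x_1))-\bar{\nabla} f(x_2,y^*(x_2))\bigr]. \nn
\end{align}
The second bracket is controlled by $L_f\lVert y^*(x_1)-y^*(x_2)\rVert\leq L_f L_y\lVert x_1-x_2\rVert$ using steps 1 and 2. The first bracket requires a parallel decomposition in the $x$-argument, bounded via the same $(L_{f,1},L_G,\mu_g,C_g,C_f,L_1)$ constants. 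Combining gives $L_F = L_{f,1}+L_g(L_{f,1}+L_f)/\mu_g+(L_1/\mu_g)(L_G+L_gL_G/\mu_g)=\mathcal{O}(\kappa^3)$.

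The main obstacle is entirely bookkeeping: the nested inverse-Hessian term in $\bar{\nabla} f$ produces several cross-terms and one has to use the identity $A^{-1}-B^{-1}=A^{-1}(B-A)B^{-1}$ carefully so that every resulting factor is bounded by a stated constant. No new analytical tools are required beyond strong convexity of $g(x,\cdot)$, the Lipschitz assumptions in \ref{bi_ass_g}--\ref{bi_ass_f}, and the uniform bounds $\lVert \nabla^2_{xy} g\rVert\leq C_g$ and $\lVert \nabla_y f\rVert\leq C_f$. Since this is exactly Lemma~2.2 of \cite{ghadimi2018approximation}, the proof can reasonably be given by referring to that source after outlining the decomposition above.
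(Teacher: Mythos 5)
Your proposal is correct and matches the paper's treatment: the paper proves this lemma purely by citing Lemma~2.2 of \citet{ghadimi2018approximation}, which is exactly the fallback you propose, and the decomposition you sketch (optimality conditions plus strong convexity for $L_y$, term-by-term telescoping of $\bar{\nabla} f$ with the resolvent identity $A^{-1}-B^{-1}=A^{-1}(B-A)B^{-1}$, then splitting $\nabla F(x_1)-\nabla F(x_2)$ through the intermediate point $(x_2,y^*(x_1))$) is precisely the argument in that reference. The only cosmetic discrepancy is that you invoke $C_f$ to bound $\lVert\nabla_y f\rVert$ in the narrative while the stated constant uses $L_1$; since Assumption \ref{bi_ass_f} makes $L_1$ a bound on $\lVert\nabla f\rVert$, either works and this does not affect correctness.
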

Lemma \ref{lem:hyper_bias} is on the bias and variance of the stochastic hypergradient in \eqref{eq:hyper_sto}, which has the following form \cite{hong2022twotimescale}
\begin{align}
    h_f^k = \nabla_x f(x^k, y^{k+1};\phi) - \nabla_{xy} g(x^k,y^{k+1};\psi_0)\bigl[\frac{N}{L_g}\prod_{j=1}^{\bar{N}}(I-\nabla^2_{yy} g(x^k,y^{k+1};\psi_j)) \bigr] \nabla_y f(x^k,y^{k+1};\phi).
\end{align}
Recall that the hypergradient surrogate defined in \eqref{eq:hyper_g_app} based on $(x^k,y^{k+1})$ is
\begin{align}
    \bar{\nabla} f(x^k,y^{k+1}) &= \nabla_x f(x^k,y^{k+1}) + \nabla_{xy}^2 g(x^k,y^{k+1}) [\nabla_{yy}^2 g(x^k,y^{k+1})]^{-1}\nabla_y f(x^k,y^{k+1}).  
\end{align}
Given a filtration $\mathcal{F}^{'}_k$ up to and including $x^k$ and $y^{k+1}$, the bias of the stochastic hypergradient is defined as $B = \bar{\nabla} f(x^k,y^{k+1})-\mathbb E[h_f^k|\mathcal{F}^{'}_k]$, and the variance is defined as $\mathbb E[\lVert \bar{\nabla} f(x^k, y^{k+1}) - \mathbb E[h_f^k|\mathcal{F}^{'}_k]\rVert^2]$. Lemma \ref{lem:hyper_bias} has been proven in \cite[Lem~1.]{hong2022twotimescale} (also see \cite[Lem~5.]{chen2021tighter}). Lemmas  \ref{lem:bounds}, \ref{lem:bi_lem}, and \ref{lem:hyper_bias} will be used in the proofs of Theorems \ref{thm:bilevel} and \ref{thm:bi_var}.    
\begin{lemma}\label{lem:hyper_bias}
    Under Assumptions \ref{bi_ass_g}, \ref{bi_ass_f}, and \ref{bi_ass3}, the bias and variance of the stochastic hypergradient $h_f^k$ satisfy the following
    \begin{align}
        &\text{Bias: } \lVert \bar{\nabla} f(x^k,y^{k+1})-\mathbb E[h_f^k|\mathcal{F}^{'}_k] \rVert \leq \frac{C_g C_f}{\mu_g} (1-\frac{\mu_g}{ L_g})^N, \forall k \nn \\ 
        &\text{Variance: } \mathbb E[\lVert \bar{\nabla} f(x^k, y^{k+1}) - \mathbb E[h_f^k|\mathcal{F}^{'}_k]\rVert^2] \leq \tilde{\sigma}_f^2, \forall k, \nn
    \end{align}
    where N is the total number of samples, and $\tilde{\sigma}_f^2 = \sigma_f^2 + [(\sigma_f^2 + L_1^2)(\sigma_G^2 + 2L_g^2) + \sigma_f^2 L_g^2]\frac{3}{\mu_g^2} \sim \mathcal{O}(\kappa^2)$. 
\end{lemma}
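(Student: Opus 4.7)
The plan is to combine an upper-level descent inequality (from $L_F$-smoothness) with a lower-level SPSmax-type contraction on $\|y^{k,t}-y^*(x^k)\|^2$, and then close the loop through a warm-start coupling via $L_y$-Lipschitzness of $y^*$. The envelope bounds $\alpha_{l,k}\le\alpha_k\le\alpha_{b,k}$ and the ratio condition $\alpha_{b,0}^2/\alpha_{l,0}\le 1/(L_F+4L_y^2)$ are what will allow the various cross-couplings to be absorbed. Concretely, first I would apply $L_F$-smoothness (Lemma \ref{lem:bi_lem}) to $x^{k+1}=x^k-\alpha_k h_f^k$ and, using the stated independence of $\alpha_k$ and $h_f^k$, take conditional expectation given $x^k,y^{k+1}$. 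Writing $\mathbb E[h_f^k\mid\mathcal F'_k]=\bar\nabla f(x^k,y^{k+1})-B_k$ with $\|B_k\|\le B$ by Lemma \ref{lem:hyper_bias}, and using $\nabla F(x^k)=\bar\nabla f(x^k,y^*(x^k))$ together with the $L_f$-Lipschitzness of $\bar\nabla f(x,\cdot)$, Young's inequality yields
\begin{align}
\mathbb E[F(x^{k+1})]\le\mathbb E[F(x^k)]&-\tfrac{\alpha_{l,k}}{2}\mathbb E\|\nabla F(x^k)\|^2+c_1(\alpha_{l,k}+L_F\alpha_{b,k}^2)L_f^2\,\mathbb E\|y^{k+1}-y^*(x^k)\|^2\nn\\
&+c_2 L_F\alpha_{b,k}^2\tilde\sigma_f^2+c_3(\alpha_{l,k}+L_F\alpha_{b,k}^2)B^2,\nn
\end{align}
after absorbing the $L_F\alpha_{b,k}^2\mathbb E\|\nabla F(x^k)\|^2$ term into the leading descent via $L_F\alpha_{b,k}^2\le\alpha_{l,k}/4$.

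Second, for the lower level, each $g(x^k,\cdot;\psi)$ is convex and $g(x^k,\cdot)$ is $\mu_g$-strongly convex, so the SPSmax step $\beta_{k,t}$ satisfies $C_k\le\beta_{k,t}\le\beta_{b,k}$ by the analogue of Lemma \ref{lem:bounds}. The standard SPSmax analysis of \cite{loizou2021stochastic}, combined with the two-variable bounded-optimal-value assumption \eqref{eq:ass_g_func}, gives
\begin{align}
\mathbb E\|y^{k,t+1}-y^*(x^k)\|^2\le(1-\mu_g C_k)\mathbb E\|y^{k,t}-y^*(x^k)\|^2+2\beta_{b,k}\sigma_g^2.\nn
\end{align}
Unrolling $T$ times and using the prescribed lower bound on $T$ makes the contraction factor on the initial error at most $1/(\alpha_{b,0}L_f^2+2)$, with a neighborhood of order $\beta_{b,k}\sigma_g^2/(\mu_g C_k)$. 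The warm start $y^{k,0}=y^k$ together with $L_y$-Lipschitzness then gives $\|y^{k,0}-y^*(x^k)\|^2\le 2\|y^k-y^*(x^{k-1})\|^2+2L_y^2\alpha_{k-1}^2\|h_f^{k-1}\|^2$, turning the unrolled contraction into a recursion on $\delta_k:=\mathbb E\|y^k-y^*(x^{k-1})\|^2$.

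Plugging this bound on $\mathbb E\|y^{k+1}-y^*(x^k)\|^2$ back into the upper descent kills the $L_f^2$ coefficient by the factor $1/(\alpha_{b,0}L_f^2+2)$, while the residual $L_y^2\alpha_{k-1}^2\|h_f^{k-1}\|^2$ combines with the $L_F\alpha_{b,k}^2\|h_f^{k-1}\|^2$ term already present; under $\alpha_{b,0}^2/\alpha_{l,0}\le 1/(L_F+4L_y^2)$ their sum is at most $\alpha_{l,k-1}/4$ per unit of $\mathbb E\|\nabla F(x^{k-1})\|^2$, so it is charged to the previous descent step in the telescoping sum. Summing $k=0,\dots,K-1$, using $\sum_k\alpha_{b,k}^2=O(\log K)$, $\sum_k\beta_{b,k}=O(\log K)$, and dividing by $K\alpha_{l,K-1}\asymp\sqrt K$, together with $L_F\sim\kappa^3$ and $\tilde\sigma_f^2\sim\kappa^2$ from Lemmas \ref{lem:bi_lem} and \ref{lem:hyper_bias}, will yield the claimed $\tilde{\mathcal O}(\kappa^3/\sqrt K+\kappa^2\log K/\sqrt K)$ rate; the Neumann bias $B=O((1-\mu_g/L_g)^N)$ enters only through $B^2$ and is made negligible by taking $N=\Theta(\log K)$.

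The hard part will be the warm-start coupling. The inner loop starts from $y^k\approx y^*(x^{k-1})$ rather than a good approximation of $y^*(x^k)$, so one pays the displacement $L_y\alpha_{k-1}\|h_f^{k-1}\|$ at every outer iteration, which is itself driven by the upper-level step. To prevent this feedback from destabilizing the $1/\sqrt K$ rate, one needs both (i) $T$ large enough that $(1-\mu_g C_{K-1})^T\le 1/(\alpha_{b,0}L_f^2+2)$ neutralizes the $L_f^2$ amplification of the lower-level error inside the upper descent, and (ii) the step-size ratio $\alpha_{b,0}^2/\alpha_{l,0}\le 1/(L_F+4L_y^2)$ so that the $L_y^2\alpha_{k-1}^2\|h_f^{k-1}\|^2$ residual is charged to the upper-level descent rather than compounding across iterations.
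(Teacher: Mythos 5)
Your proposal does not address the statement you were asked to prove. Lemma \ref{lem:hyper_bias} is a bound on the \emph{bias and variance of the Neumann-series stochastic hypergradient estimator} $h_f^k$ defined in \eqref{eq:hyper_sto}: the claim is that $\lVert \bar{\nabla} f(x^k,y^{k+1})-\mathbb E[h_f^k|\mathcal{F}^{'}_k] \rVert \leq \frac{C_g C_f}{\mu_g}(1-\frac{\mu_g}{L_g})^N$ and that the variance is at most $\tilde{\sigma}_f^2$. What you have sketched instead is the convergence analysis of BiSPS (essentially Theorem \ref{thm:bilevel} together with Lemmas \ref{lem:bi_descent} and \ref{lem:lemma_g_func}): an $L_F$-smoothness descent step, an SPS-type contraction on the lower level, a warm-start coupling, and a telescoping sum yielding an $\tilde{\mathcal O}(\kappa^3/\sqrt K)$ rate. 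Worse, your argument explicitly \emph{invokes} Lemma \ref{lem:hyper_bias} (``with $\|B_k\|\le B$ by Lemma \ref{lem:hyper_bias}'') as an ingredient, so it is circular as a proof of that lemma. Nothing in your write-up touches the quantities that actually need to be controlled here, namely the truncation error of the randomized Neumann approximation of $[\nabla_{yy}^2 g]^{-1}$ and the second moments of the product of sampled Hessians.

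For reference, the paper does not reprove this lemma either; it cites Lemma 1 of \citet{hong2022twotimescale} (and Lemma 5 of \citet{chen2021tighter}). The correct route is entirely different from what you wrote: conditioning on $\mathcal{F}'_k$ and averaging over the uniform draw of $\bar N$ and the i.i.d.\ samples $\psi_j$, the bracketed factor in \eqref{eq:hyper_sto} has conditional expectation $\frac{1}{L_g}\sum_{n=1}^{N}\bigl(I-\frac{1}{L_g}\nabla^2_{yy}g(x^k,y^{k+1})\bigr)^n$, whose deviation from $[\nabla^2_{yy}g]^{-1}$ is a geometric tail bounded in operator norm by $\frac{1}{\mu_g}(1-\frac{\mu_g}{L_g})^N$ since $\mu_g I \preceq \nabla^2_{yy}g \preceq L_g I$; multiplying by $\lVert\nabla^2_{xy}g\rVert\le C_g$ and $\lVert\nabla_y f\rVert\le C_f$ gives the bias bound. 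The variance bound follows by decomposing $h_f^k$ into its three stochastic factors and applying the bounded-variance constants $\sigma_f^2$ and $\sigma_G^2$ from Assumption \ref{bi_ass3}, which is how the stated constant $\tilde{\sigma}_f^2 = \sigma_f^2 + [(\sigma_f^2+L_1^2)(\sigma_G^2+2L_g^2)+\sigma_f^2 L_g^2]\frac{3}{\mu_g^2}$ arises. You would need to either reproduce that calculation or, as the authors do, cite it.
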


Lemma \ref{lem:bi_descent} is on the descent of the quantity $V^k := F(x^k) - F(x^*) + \lVert y^{k} - y^*(x^k) \rVert^2$. It will be used in the proofs of Theorem \ref{thm:bilevel} and \ref{thm:bi_var}. 
\begin{lemma} \label{lem:bi_descent}
    Suppose $F$ satisfies Assumptions \ref{bi_ass_g}, \ref{bi_ass_f}, and \ref{bi_ass3}, sequences $\alpha_{b,k} = \frac{\alpha_{b,0}}{\sqrt{k+1}}$ and $\alpha_{l,k} = \frac{\alpha_{l,0}}{\sqrt{k+1}}$ with $\alpha_{b,0}$ and $\alpha_{l,0}$ satisfying $\frac{1}{L_F + 4 L_y^2} \geq \frac{\alpha_{b,0}^2}{\alpha_{l,0}}$ and $\alpha_{l,0} \leq \alpha_{b,0} $. Further assume that $\alpha_k$ is independent of $h_f^k$. Then step sizes of the form \eqref{eq:bisps_u} and \eqref{eq:bisps_l} achieve the following:
    \begin{align}
        E[V^{k+1}] &\leq \mathbb E[V^k] - \frac{\alpha_{l,k}}{2} \mathbb E[\lVert \nabla F(x^k) \rVert^2] + (\alpha_{b,k}L_f^2 + 2)\mathbb E[\lVert y^{k+1} - y^*(x^k) \rVert^2] + \nn \\
    &\quad \quad \alpha_{b,k} B^2 + (2L_y^2 \alpha_{b,k}^2 + \frac{L_F \alpha_{b,k}^2}{2})\tilde{\sigma}_f^2 - \mathbb E[\lVert y^k - y^*(x^k)\rVert^2],
    \forall k,\end{align}
    where $V^k = F(x^{k}) - F(x^*) + \lVert y^{k+1} - y^*(x^{k+1})\rVert^2$ and recall that $F$ is the upper-level loss defined in \eqref{eq:bilevel_obj}. 
\end{lemma}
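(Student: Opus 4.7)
The plan is to separately bound $\mathbb{E}[F(x^{k+1}) - F(x^k)]$ via $L_F$-smoothness of $F$ (Lemma~\ref{lem:bi_lem}) and $\mathbb{E}[\lVert y^{k+1} - y^*(x^{k+1})\rVert^2]$ via the $L_y$-Lipschitzness of $y^*(\cdot)$, then add the two bounds so that the $\lVert y^k - y^*(x^k)\rVert^2$ summand hidden inside $V^k$ on the right-hand side appears explicitly with a minus sign. First, smoothness applied to $x^{k+1} = x^k - \alpha_k h_f^k$ gives $F(x^{k+1}) \leq F(x^k) - \alpha_k \langle \nabla F(x^k), h_f^k\rangle + \tfrac{L_F \alpha_k^2}{2}\lVert h_f^k\rVert^2$. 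Conditioning on the filtration containing $x^k$ and $y^{k+1}$, the independence of $\alpha_k$ from $h_f^k$ lets the step size factor out of the expectation, and Lemma~\ref{lem:hyper_bias} identifies $\mathbb{E}[h_f^k\mid\mathcal{F}] = \bar\nabla f(x^k,y^{k+1}) - B_k$ with $\lVert B_k\rVert \leq B$ and conditional variance bounded by $\tilde\sigma_f^2$.

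Next, I would split $\bar\nabla f(x^k,y^{k+1}) = \nabla F(x^k) + \bigl(\bar\nabla f(x^k,y^{k+1}) - \nabla F(x^k)\bigr)$ so that the inner product $-\alpha_k\langle \nabla F(x^k), \mathbb{E}[h_f^k\mid\mathcal{F}]\rangle$ yields the $-\alpha_k\lVert \nabla F(x^k)\rVert^2$ descent piece plus two cross terms, one involving the surrogate-gradient error and one involving the bias $B_k$; both are handled by Young's inequality, the Lipschitz bound $\lVert \bar\nabla f(x^k,y^{k+1}) - \nabla F(x^k)\rVert \leq L_f \lVert y^{k+1} - y^*(x^k)\rVert$ (Lemma~\ref{lem:bi_lem}), and the bias estimate from Lemma~\ref{lem:hyper_bias}. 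For the iterate-tracking term, triangle inequality combined with $L_y$-Lipschitzness of $y^*(\cdot)$ gives $\lVert y^{k+1} - y^*(x^{k+1})\rVert^2 \leq 2\lVert y^{k+1} - y^*(x^k)\rVert^2 + 2 L_y^2 \alpha_k^2 \lVert h_f^k\rVert^2$. Summing the two inequalities and applying $\alpha_{l,k} \leq \alpha_k \leq \alpha_{b,k}$ (lower bound for the negative coefficient, upper bound everywhere else) collects the $h_f^k$ second-moment contributions into $\bigl(\tfrac{L_F}{2} + 2L_y^2\bigr)\alpha_{b,k}^2 \mathbb{E}[\lVert h_f^k\rVert^2]$.

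The main obstacle is expanding $\mathbb{E}[\lVert h_f^k\rVert^2] = \lVert \bar\nabla f(x^k,y^{k+1}) - B_k\rVert^2 + \operatorname{Var}(h_f^k\mid\mathcal{F})$ into the four ingredients $\lVert \nabla F(x^k)\rVert^2$, $L_f^2\lVert y^{k+1}-y^*(x^k)\rVert^2$, $B^2$, and $\tilde\sigma_f^2$ and then showing that the first two pieces can be absorbed into the already-present coefficients without spoiling the sign or magnitude of the stated bound. This is exactly where the hypothesis $\tfrac{1}{L_F + 4L_y^2} \geq \tfrac{\alpha_{b,0}^2}{\alpha_{l,0}}$ enters: combined with the schedules $\alpha_{b,k} = \alpha_{b,0}/\sqrt{k+1}$ and $\alpha_{l,k} = \alpha_{l,0}/\sqrt{k+1}$ it yields $(L_F + 4L_y^2)\alpha_{b,k}^2 \leq \alpha_{l,k}$, so the spurious $\lVert \nabla F(x^k)\rVert^2$ contribution from $\mathbb{E}[\lVert h_f^k\rVert^2]$ is dominated by the $\alpha_{l,k}$-scaled descent term, the extra $L_f^2 \lVert y^{k+1}-y^*(x^k)\rVert^2$ contribution is absorbed into the $(\alpha_{b,k}L_f^2 + 2)$ coefficient, and only $\bigl(2L_y^2 + \tfrac{L_F}{2}\bigr)\alpha_{b,k}^2\tilde\sigma_f^2$ and $\alpha_{b,k} B^2$ survive as stand-alone terms. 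Rearranging so that $\mathbb{E}[F(x^{k+1}) - F(x^*)] + \mathbb{E}[\lVert y^{k+1} - y^*(x^{k+1})\rVert^2]$ appears as $\mathbb{E}[V^{k+1}]$ and $\mathbb{E}[V^k] - \mathbb{E}[\lVert y^k - y^*(x^k)\rVert^2]$ appears on the right then delivers the claimed descent inequality.
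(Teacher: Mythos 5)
Your overall architecture matches the paper's: $L_F$-smoothness, conditioning on the filtration through $(x^k,y^{k+1})$, factoring out $\alpha_k$ by independence, bounding $\lVert \bar h_f^k - \nabla F(x^k)\rVert^2 \leq 2L_f^2\lVert y^{k+1}-y^*(x^k)\rVert^2 + 2B^2$ via Lemmas \ref{lem:bi_lem} and \ref{lem:hyper_bias}, the $L_y$-Lipschitz bound $\lVert y^{k+1}-y^*(x^{k+1})\rVert^2 \leq 2\lVert y^{k+1}-y^*(x^k)\rVert^2 + 2L_y^2\alpha_k^2\lVert h_f^k\rVert^2$, and the final bookkeeping with $V^k$. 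However, there is a genuine gap in how you handle the inner-product term and the second moment of $h_f^k$. The paper uses the exact identity $-\langle \nabla F(x^k), \bar h_f^k\rangle = -\tfrac12\lVert\nabla F(x^k)\rVert^2 - \tfrac12\lVert\bar h_f^k\rVert^2 + \tfrac12\lVert\nabla F(x^k)-\bar h_f^k\rVert^2$, which retains a \emph{negative} $-\tfrac{\alpha_{l,k}}{2}\lVert\bar h_f^k\rVert^2$ term; the accumulated positive second-moment terms $\bigl(\tfrac{L_F}{2}+2L_y^2\bigr)\alpha_{b,k}^2\lVert\bar h_f^k\rVert^2$ are then cancelled against it \emph{directly}, with the hypothesis $\tfrac{1}{L_F+4L_y^2}\geq\tfrac{\alpha_{b,0}^2}{\alpha_{l,0}}$ giving exactly $\alpha_{l,k}-L_F\alpha_{b,k}^2-4L_y^2\alpha_{b,k}^2\geq 0$. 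No re-expansion of $\lVert\bar h_f^k\rVert^2$ is ever needed.

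Your route discards that negative $\lVert\bar h_f^k\rVert^2$ term (Young's inequality only returns $-\tfrac{\alpha_k}{2}\lVert\nabla F\rVert^2$ plus the error term) and instead plans to re-expand $\mathbb E[\lVert h_f^k\rVert^2]$ back into $\lVert\nabla F(x^k)\rVert^2$, $L_f^2\lVert y^{k+1}-y^*(x^k)\rVert^2$, $B^2$, and $\tilde\sigma_f^2$. That expansion costs a Cauchy--Schwarz factor: $\lVert\bar h_f^k\rVert^2 \leq 2\lVert\nabla F(x^k)\rVert^2 + 2\lVert\bar h_f^k-\nabla F(x^k)\rVert^2$, so the spurious gradient-norm contribution is at least $2\bigl(\tfrac{L_F}{2}+2L_y^2\bigr)\alpha_{b,k}^2 = (L_F+4L_y^2)\alpha_{b,k}^2 \leq \alpha_{l,k}$, which exceeds the available descent budget $\tfrac{\alpha_{l,k}}{2}$ and flips the sign of the $\lVert\nabla F(x^k)\rVert^2$ coefficient. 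Likewise, the extra $4L_f^2\lVert y^{k+1}-y^*(x^k)\rVert^2$ and $4B^2$ pieces generated by this expansion do not fit inside the stated coefficients $(\alpha_{b,k}L_f^2+2)$ and $\alpha_{b,k}B^2$, which in the paper arise solely from $\tfrac{\alpha_{b,k}}{2}\cdot\bigl(2L_f^2\lVert y^{k+1}-y^*(x^k)\rVert^2+2B^2\bigr)$. Your plan would only close under a strictly stronger step-size condition (and with worse constants); to prove the lemma as stated you should keep the $-\tfrac12\lVert\bar h_f^k\rVert^2$ term from the exact decomposition and cancel the second-moment terms against it without re-expanding.
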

\begin{proof}
We denote $\mathbb E[h_f^k|\mathcal{F}^{'}_k] = \bar{h}_f^k$. By the $L_F$-smoothness of the objective $F$:
\begin{align}
    F(x^{k+1}) \leq F(x^k)+ \langle \nabla F(x^k), x^{k+1} - x^k \rangle + \frac{L_F}{2}\lVert x^{k+1} - x^k\rVert^2. \nn
\end{align}
Take expectation conditioned on a filtration of past iterates $\mathcal{F}_k^{'}$ (up to and include $x^k$, $y^{k+1}$):
\begin{align}
    \mathbb E[F(x^{k+1}) | \mathcal{F}_{k}^{'}] &\leq F(x^k) + \mathbb E[\langle \nabla F(x^k), x^{k+1} - x^k \rangle | \mathcal{F}_{k}^{'}] + \frac{L_F}{2} \mathbb E[\lVert x^{k+1} - x^k \rVert^2 | \mathcal{F}_{k}^{'}] \nn \\
    & = F(x^k) -  \mathbb E[\alpha_k \langle \nabla F(x^k), h_f^k \rangle |\mathcal{F}_{k}^{'}] + \frac{L_F \mathbb E[\alpha_k^2| \mathcal{F}_k^{'}]}{2} \mathbb E[\lVert h_f^k - \bar{h}_f^k + \bar{h}_f^k \rVert^2 | \mathcal{F}_{k}^{'}] \nn \\  
    &\stackrel{(a)}{=} F(x^k) - \mathbb E[\alpha_k|\mathcal{F}_k^{'}]\langle \nabla F(x^k), \bar{h}_f^k \rangle +  \frac{L_F \mathbb E[\alpha_k^2|\mathcal{F}_k^{'}]}{2} \lVert \bar{h}_f^k \rVert^2 +  \frac{L_F \mathbb E[\alpha_k^2|\mathcal{F}_k^{'}]}{2} \tilde{\sigma}_f^2 \nn \\
    & = F(x^k) - \frac{\mathbb E[\alpha_k|\mathcal{F}_k^{'}]}{2} \lVert \nabla F(x^k) \rVert^2 - \frac{\mathbb E[\alpha_k|\mathcal{F}_k^{'}]}{2}\lVert \bar{h}_f^k \rVert^2 +  \frac{\mathbb E[\alpha_k|\mathcal{F}_k^{'}]}{2} \lVert \nabla F(x^k) - \bar{h}_f^k\rVert^2 + \nn \\
    &\quad \quad \frac{L_F \mathbb E[\alpha_k^2|\mathcal{F}_k^{'}]}{2} \lVert \bar{h}_f^k \rVert^2 +  \frac{L_F \mathbb E[\alpha_k^2|\mathcal{F}_k^{'}]}{2} \tilde{\sigma}_f^2, \nn
\end{align}
where (a) is by the assumption that $\alpha_k$ is independent of $h_f^k$. Then expand the term $\lVert \nabla F(x^{k}) - \bar{h}_f^k \rVert^2$ as follows:
\begin{align}
    \lVert \nabla F(x^{k}) - \bar{h}_f^k \rVert^2 &= \lVert \nabla F(x^{k}) - \bar{\nabla}f(x^k, y^{k+1}) + \bar{\nabla} f(x^k,y^{k+1}) - \bar{h}_f^k \rVert^2 \nn \\
    & \leq 2\lVert \nabla F(x^{k}) -\bar{\nabla} f(x^k, y^{k+1})\rVert^2 + 2\lVert  \bar{\nabla} f(x^k,y^{k+1}) - \bar{h}_f^k  \rVert^2\nn \\
    &\stackrel{(b)}{\leq} 2L_f^2 \lVert y^{k+1} - y^*(x^k)\rVert^2 + 2B^2,\nn
\end{align}
where (b) is by Lemma \ref{lem:bi_lem} and \ref{lem:hyper_bias}. Substituting this into the above:
\begin{align}
    \mathbb E[F(x^{k+1}) | F_{k}^{'}] &\leq F(x^k) - \frac{\mathbb E[\alpha_k|F_k^{'}]}{2} \lVert \nabla F(x^k) \rVert^2 - \frac{\mathbb E[\alpha_k|F_k^{'}]}{2}\lVert \bar{h}_f^k \rVert^2 + \frac{L_F \mathbb E[\alpha_k^2|F_k^{'}]}{2} \lVert \bar{h}_f^k \rVert^2 + \nn \\
    & \quad \quad E[\alpha_k|\mathcal{F}_k^{'}] L_f^2 \lVert y^{k+1} - y^*(x^k)\rVert^2 + E[\alpha_k|\mathcal{F}_k^{'}] B^2 + \frac{L_F \mathbb E[\alpha_k^2|\mathcal{F}_k^{'}]}{2} \tilde{\sigma}_f^2 \nn \\
    &\stackrel{(c)}{\leq} F(x^k) - \frac{\alpha_{l,k}}{2} \lVert \nabla F(x^k) \rVert^2 - \frac{\alpha_{l,k}}{2} \lVert \bar{h}_f^k \rVert^2 + \frac{L_F \alpha_{b,k}^2}{2} \lVert \bar{h}_f^k \rVert^2 + \nn \\ 
    & \quad \quad \alpha_{b,k} L_f^2 \lVert y^{k+1} - y^*(x^k) \rVert^2 + \alpha_{b,k} B^2 + \frac{L_F \alpha_{b,k}^2}{2} \tilde{\sigma}_f^2, \nn
\end{align}
where (c) is by $\alpha_{l,k} \leq \mathbb\alpha_k$ and $\alpha_{b,k} \geq \alpha_k$. 
Then take total expectations and subtract $F(x^*)$:
\begin{align}
    \mathbb E[F(x^{k+1}) - F(x^*)] &\leq \mathbb E[F(x^k) - F(x^*)] - \frac{\alpha_{l,k}}{2} \mathbb E[\lVert \nabla F(x^k) \rVert^2] - \frac{\alpha_{l,k}}{2} \mathbb E[\lVert \bar{h}_f^k \rVert^2] + \frac{L_F \alpha_{b,k}^2}{2} \mathbb E[\lVert \bar{h}_f^k \rVert^2] + \nn \\ 
    & \quad \quad \alpha_{b,k} L_f^2 \mathbb E[\lVert y^{k+1} - y^*(x^k) \rVert^2] + \alpha_{b,k} B^2 + \frac{L_F \alpha_{b,k}^2}{2} \tilde{\sigma}_f^2 \label{eq:main_1} 
\end{align}
Now define the potential function $V^{k} := F(x^{k}) - F(x^*) + \mathbb \lVert y^{k+1} - y^*(x^{k+1})\rVert^2$ and expand the term $\lVert y^{k+1} - y^*(x^{k+1}) \rVert^2$ as follows:
\begin{align}
    \lVert y^{k+1} - y^*(x^{k+1}) \rVert^2 &= \lVert y^{k+1} - y^*(x^k) + y^*(x^k) - y^*(x^{k+1})\rVert^2 \nn \\
    &\leq 2 \lVert y^{k+1} - y^*(x^k) \rVert^2 + 2 \lVert y^*(x^k) - y^*(x^{k+1}) \rVert^2 \nn \\
    &\stackrel{(d)}{\leq}  2 \lVert y^{k+1} - y^*(x^k) \rVert^2 + 2 L_y^2 \lVert x^{k+1} - x^{k} \rVert^2 \nn \\
    &= 2 \lVert y^{k+1} - y^*(x^k) \rVert^2 + 2L_y^2 \alpha_k^2 \lVert h_f^k \rVert^2\nn \\
    &= 2\lVert y^{k+1} - y^*(x^k) \rVert^2 + 
    2L_y^2 \alpha_k^2 \lVert h_f^k - \bar{h}_f^k + \bar{h}_f^k \rVert^2,\nn 
\end{align}
where (d) is by Lemma \ref{lem:bi_lem}.
Take expectation conditioned on $\mathcal{F}_k^{'}$: 
\begin{align}
    \mathbb E[\lVert y^{k+1} - y^*(x^{k+1}) \rVert^2 | \mathcal{F}_k^{'}] &\leq 2 \lVert y^{k+1} -y^*(x^k)\rVert^2 + 2 L_y^2 \alpha_k^2 \lVert \bar{h}_f^k\rVert^2 + 2L_y^2 \alpha_k^2 \tilde{\sigma}_f^2 \nn \\
    &\leq 2 \lVert y^{k+1} -y^*(x^k)\rVert^2 + 2 L_y^2 \alpha_{b,k}^2 \lVert \bar{h}_f^k \rVert^2 + 2 L_y^2 \alpha_{b,k}^2 \tilde{\sigma}_f^2. \nn 
\end{align}
Then, take total expectations:
\begin{align}
    \mathbb E[\lVert y^{k+1} - y^*(x^{k+1}) \rVert^2] \leq 2 \mathbb E[\lVert y^{k+1} -y^*(x^k)\rVert^2] + 2 L_y^2 \alpha_{b,k}^2 \mathbb E[\lVert \bar{h}_f^k \rVert^2] + 2 L_y^2 \alpha_{b,k}^2 \tilde{\sigma}_f^2. \label{eq:main_2}
\end{align}
Now, based on the definition of $V^{k}$ and combining \eqref{eq:main_1} and \eqref{eq:main_2}:
\begin{align}
    E[V^{k+1}] &\leq \mathbb E[F(x^k) - F(x^*)] - \frac{\alpha_{l,k}}{2} \mathbb E[\lVert \nabla F(x^k) \rVert^2] \nn \\
    &\quad \quad -\frac{\mathbb E[\lVert \bar{h}_f^k \rVert^2]}{2}(\alpha_{l,k} - L_F \alpha_{b,k}^2 - 4L_y^2 \alpha_{b,k}^2) + (\alpha_{b,k}L_f^2 + 2)\mathbb E[\lVert y^{k+1} - y^*(x^k) \rVert^2] + \nn \\ 
    &\quad \quad \alpha_{b,k} B^2 + (2L_y^2 \alpha_{b,k}^2 + \frac{L_F \alpha_{b,k}^2}{2})\tilde{\sigma}_f^2\nn \\
    &\stackrel{(e)}{\leq} \mathbb E[F(x^k) - F(x^*)] - \frac{\alpha_{l,k}}{2} \mathbb E[\lVert \nabla F(x^k) \rVert^2] + (\alpha_{b,k}L_f^2 + 2)\mathbb E[\lVert y^{k+1} - y^*(x^k) \rVert^2] + \nn \\
    &\quad \quad \alpha_{b,k} B^2 + (2L_y^2 \alpha_{b,k}^2 + \frac{L_F \alpha_{b,k}^2}{2})\tilde{\sigma}_f^2\nn \\
    &= \mathbb E[V^k] - \frac{\alpha_{l,k}}{2} \mathbb E[\lVert \nabla F(x^k) \rVert^2] + (\alpha_{b,k}L_f^2 + 2)\mathbb E[\lVert y^{k+1} - y^*(x^k) \rVert^2] + \nn \\
    &\quad \quad \alpha_{b,k} B^2 + (2L_y^2 \alpha_{b,k}^2 + \frac{L_F \alpha_{b,k}^2}{2})\tilde{\sigma}_f^2 - \mathbb E[\lVert y^k - y^*(x^k)\rVert^2],\nn
\end{align}
where (e) is because $\frac{1}{L_F + 4 L_y^2} \geq \frac{\alpha_{b,0}^2}{\alpha_{l,0}}$, which guarantees that $\alpha_{l,k} = \frac{\alpha_{l,0}}{\sqrt{k+1}} \geq (L_F + 4 L_y^2)\alpha_{b,k} = (L_F + 4 L_y^2) \frac{\alpha_{b,0}^2}{k+1}$. 
\end{proof}
Lemma \ref{lem:lemma_g_func} and Lemma \ref{lem:lemma_g_var} give two alternatives for bounding the term $\lVert y^{k+1} - y^*(x^k) \rVert^2$. Lemma \ref{lem:lemma_g_func} is based on the assumption $\mathbb E_{\psi}[g(x,y^*(x);\psi)-g(x,y^*_{x,\psi};\psi)] \leq \sigma_g^2$, $\forall x$. The proof for its one-variable analogous assumption is given in \citet{loizou2021stochastic}. Here we follow a similar approach for the two-variable function $g(x,y)$. Lemma \ref{lem:lemma_g_func} will be used in the proof of Theorem \ref{thm:bilevel}.  
\begin{lemma}\label{lem:lemma_g_func} Suppose Assumptions \ref{bi_ass_g}, \ref{bi_ass3}, and the bounded optimal function value assumption \eqref{eq:ass_g_func} hold. Further assume that each sampled function $g(x,y;\psi)$ is convex, then step size of the form 
\ref{eq:bisps_l} achieves the following:
\begin{align}
     \mathbb E[\lVert y^{k+1} - y^*(x^k) \rVert^2] \leq (1-\mu_g C_k)^T \mathbb E[\lVert y^{k} - y^*(x^k) \rVert^2] + 2 \beta_{b,k} T \sigma_g^2, \nn
\end{align}
where $C_k = \min \{ \frac{1}{2 p L_g}, \beta_{b,k} \}$. 
\end{lemma}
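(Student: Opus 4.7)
The plan is to first establish a one-step contraction of the form $\mathbb E[\|y^{k,t+1} - y^*(x^k)\|^2 \mid \mathcal F_{k,t}] \leq (1 - \mu_g C_k)\|y^{k,t} - y^*(x^k)\|^2 + 2\beta_{b,k}\sigma_g^2$ by adapting the SPSmax analysis of \citet{loizou2021stochastic} to the two-variable setting, and then to unroll it over $t = 0, 1, \dots, T-1$. Telescoping produces a geometric sum $\sum_{i=0}^{T-1}(1-\mu_g C_k)^i$ which I crudely upper bound by $T$ to recover the additive noise term $2\beta_{b,k} T \sigma_g^2$ in the statement, using $y^{k,0} = y^k$ and $y^{k,T} = y^{k+1}$.

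For the one-step contraction, I would expand $\|y^{k,t+1} - y^*(x^k)\|^2$ in the usual way and lower-bound the inner product $\langle \nabla_y g(x^k,y^{k,t};\psi_t), y^{k,t}-y^*(x^k)\rangle$ by $A_t - B_t$ via convexity of the sampled function $g(x^k,\cdot;\psi_t)$, where $A_t := g(x^k, y^{k,t}; \psi_t) - g(x^k, y^*_{x^k,\psi_t}; \psi_t) \geq 0$ and $B_t := g(x^k, y^*(x^k); \psi_t) - g(x^k, y^*_{x^k,\psi_t}; \psi_t) \geq 0$. The SPS definition gives $\beta_{k,t}\|\nabla_y g(x^k, y^{k,t};\psi_t)\|^2 \leq A_t/p$, which collapses the squared-gradient term to $\beta_{k,t} A_t / p$ and yields
$$\|y^{k,t+1} - y^*(x^k)\|^2 \leq \|y^{k,t} - y^*(x^k)\|^2 - (2 - 1/p)\beta_{k,t} A_t + 2\beta_{k,t} B_t.$$
For $p \geq 1/2$ the descent coefficient is non-positive, so I tighten the bound with the lower envelope $\beta_{k,t} \geq C_k = \min\{1/(2pL_g), \beta_{b,k}\}$ (justified via $L_g$-smoothness, which gives $A_t \geq \|\nabla_y g\|^2/(2L_g)$), while the upper envelope $\beta_{k,t} \leq \beta_{b,k}$ is applied to the $B_t$ term. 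Conditioning on $\mathcal F_{k,t}$ and taking expectation over $\psi_t$, I then invoke $\mathbb E_{\psi_t}[A_t - B_t] = g(x^k,y^{k,t}) - g(x^k,y^*(x^k)) \geq \frac{\mu_g}{2}\|y^{k,t} - y^*(x^k)\|^2$ (from strong convexity of the averaged $g(x^k,\cdot)$) and $\mathbb E_{\psi_t}[B_t] \leq \sigma_g^2$ (from the bounded optimal function value assumption \eqref{eq:ass_g_func}) to close the recursion.

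The main obstacle is the coupling between the step size $\beta_{k,t}$ and the sample $\psi_t$: since SPS uses the very sample that produces the gradient to compute $\beta_{k,t}$, one cannot simply pull $\beta_{k,t}$ out of the expectation. The remedy is the adaptive sandwich described above, applying the lower bound $C_k$ to the non-negative descent quantity $A_t$ and the upper bound $\beta_{b,k}$ to the non-negative noise quantity $B_t$ \emph{before} taking the expectation, so that only $A_t$ and $B_t$ are left inside it. Aside from this step, the $T$-step unrolling is routine and yields the factor $T$ on the noise by the elementary bound $(1-\mu_g C_k)^i \leq 1$.
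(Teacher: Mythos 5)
Your setup mirrors the paper's proof almost step for step: the same expansion of $\lVert y^{k,t+1}-y^*(x^k)\rVert^2$, the same use of the SPS definition to bound $\beta_{k,t}\lVert h_g^{k,t}\rVert^2 \le A_t/p$, the same split of $A_t$ into $(A_t-B_t)+B_t$, the same sandwich ($C_k$ on the descent part, $\beta_{b,k}$ on the noise part, applied \emph{before} taking expectations), and the same unrolling with the geometric sum bounded by $T$. The gap is in where you apply convexity. By replacing the inner product $\langle h_g^{k,t},\, y^{k,t}-y^*(x^k)\rangle$ with its convexity lower bound $A_t-B_t$ \emph{pointwise}, you discard the Bregman gap $D_t := \langle h_g^{k,t},\, y^{k,t}-y^*(x^k)\rangle - (A_t-B_t)\ge 0$, and your contraction must then come entirely from the residual coefficient $-(2-\tfrac1p)\beta_{k,t}$ on $A_t$. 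Combining with $\mathbb E[A_t]\ge \tfrac{\mu_g}{2}\lVert y^{k,t}-y^*(x^k)\rVert^2$ yields the one-step factor $1-(1-\tfrac{1}{2p})\mu_g C_k$, not the claimed $1-\mu_g C_k$. At the boundary value $p=\tfrac12$ permitted by Theorem \ref{thm:bilevel}, your factor equals $1$ and the recursion gives no decay at all, so neither the lemma as stated nor the downstream choice of $T$ (which requires $(\alpha_{b,0}L_f^2+2)(1-\mu_g C_{K-1})^T\le 1$) is recovered.

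The paper's proof avoids this loss by ordering the steps differently: it first uses $p\ge\tfrac12$ to write $\tfrac{\beta_{k,t}}{p}A_t\le 2\beta_{k,t}A_t$, then groups $-2\beta_{k,t}\langle h_g^{k,t},\,y^{k,t}-y^*(x^k)\rangle + 2\beta_{k,t}(A_t-B_t)=-2\beta_{k,t}D_t$, which is non-positive by convexity of the sampled $g(x^k,\cdot;\psi)$, so its coefficient may be lowered from $2\beta_{k,t}$ to $2C_k$. Only after conditioning and taking expectations is strong convexity of the \emph{averaged} $g(x^k,\cdot)$ invoked on this grouped term, giving $\mathbb E[D_t\mid\mathcal F^{'}_{k,t}]\ge\tfrac{\mu_g}{2}\lVert y^{k,t}-y^*(x^k)\rVert^2$ and hence the full contraction $1-\mu_g C_k$. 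In short, the contraction has to be harvested from the expected Bregman divergence of the grouped inner-product-minus-function-gap term, with its full coefficient $2C_k$, rather than from the leftover $(2-\tfrac1p)$ coefficient on $A_t$. Everything else in your argument, including the treatment of $\mathbb E[B_t]\le\sigma_g^2$ and the final unrolling, matches the paper.
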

\begin{proof}
We denote $h_g^{k,t} = \nabla_y g(x^k,y^{k,t};\psi)$, and $\mathcal{F}^{'}_{k,t}$ be a filtration  up to and including $x^k$ and $y^{k,t}$. We have,
\begin{align}
    \lVert y^{k,t+1} - y^*(x^k) \rVert^2 &= \lVert y^{k,t} - \beta_{k,t} h_g^{k,t} - y^*(x^k) \rVert^2 \nn \\ 
    &= \lVert y^{k,t} - y^*(x^k)\rVert^2 - 2 \beta_{k,t}\langle y^{k,t} - y^*(x^k), h_g^{k,t}\rangle + \beta_{k,t}^2 \lVert h_g^{k,t}\rVert^2 \nn \\
    &\stackrel{(a)}{\leq}  \lVert y^{k,t} - y^*(x^k)\rVert^2 - 2 \beta_{k,t}\langle y^{k,t} - y^*(x^k), h_g^{k,t}\rangle + \frac{\beta_{k,t}}{p}
    [g(x^k,y^{k,t};\psi)-g(x^k,y^*_{x^k,\psi};\psi)] \nn \\
    &\stackrel{(b)}{\leq} \lVert y^{k,t} - y^*(x^k)\rVert^2 - 2 \beta_{k,t}\langle y^{k,t} - y^*(x^k), h_g^{k,t}\rangle + 2 \beta_{k,t}
    [g(x^k,y^{k,t};\psi)-g(x^k,y^*_{x^k,\psi};\psi)]\nn \\
    &= \lVert y^{k,t} - y^*(x^k)\rVert^2 - 2 \beta_{k,t} \langle y^{k,t} - y^*(x^k), h_g^{k,t} \rangle + \nn \\
    &\quad \quad 2\beta_{k,t}[g(x^k,y^{k,t};\psi) - g(x^k,y^*(x^k);\psi) + g(x^k,y^*(x^k);\psi) - g(x^k,y^*_{x^k,\psi};\psi)] \nn \\
    & = \lVert y^{k,t} - y^*(x^k)\rVert^2 + 2 \beta_{k,t}[-\langle y^{k,t} - y^*(x^k), h_g^{k,t} \rangle + g(x^k,y^{k,t};\psi) - g(x^k,y^*(x^k);\psi)] + \nn \\
    & \quad \quad 2\beta_{k,t}[g(x^k,y^{*}(x^k);\psi) - g(x^k,y^*_{x^k,\psi};\psi)]\nn \\
    &\stackrel{(c)}\leq \lVert y^{k,t} - y^*(x^k)\rVert^2 + 2 C_{k}[-\langle y^{k,t} - y^*(x^k), h_g^{k,t} \rangle + g(x^k,y^{k,t};\psi) - g(x^k,y^*(x^k);\psi)] + \nn \\
    & \quad \quad 2\beta_{b,k}[g(x^k,y^{*}(x^k);\psi) - g(x^k,y^*_{x^k,\psi};\psi)], \nn
\end{align}
where (a) is by Lemma \ref{lem:bounds}, (b) is by choosing $p\geq \frac{1}{2}$, and, (c) is by individual convexity of $g(x,y;\psi)$ such that $-\langle y^{k,t} - y^*(x^k), h_g^{k,t} \rangle + g(x^k,y^{k,t};\psi) - g(x^k,y^*(x^k);\psi) \leq 0$ and recalling that $C_k = \min \{\frac{1}{2pL_g}, \beta_{b,k}\} \leq \beta_{k,t}$ by Lemma \ref{lem:bounds}. 
Take expectation conditioned on $\mathcal{F}^{'}_{k,t}$ and note that $\mathbb E[h_g^{k,t}|\mathcal{F}^{'}_{k,t}] = \nabla_y g(x^k,y^{k+1})$, $\mathbb E[g(x^k,y^{k,t};\psi)|\mathcal{F}^{'}_{k,t}] = g(x^k,y^{k,t})$, and $\mathbb E[g(x^k,y^*(x^k);\psi)] = g(x^k,y^*(x^k))$:
\begin{align}
    \mathbb E[\lVert y^{k,t+1} - y^*(x^k) \rVert^2|\mathcal{F}^{'}_{k,t}] &\leq \lVert y^{k,t} - y^*(x^k)\rVert^2 + 2 C_k[-\langle y^{k,t} - y^*(x^k), \nabla_y g(x^k,y^{k+1})\rangle + \nn \\ &\quad \quad g(x^k,y^{k,t}) - g(x^k,y^*(x^k))] + 2 \beta_{b,k} \sigma_g^2. \label{eq:lemma_gfunc_1}
\end{align}
Now, based on the strong convexity of $g$ w.r.t. $y$, 
\begin{align}
    -\langle y^{k,t} - y^*(x^k), \nabla_y g(x^k,y^{k+1})\rangle + g(x^k,y^{k,t}) - g(x^k,y^*(x^k)) \leq \frac{\mu_g}{2} \lVert y^{k,t} - y^*(x^k)\rVert^2, \nn
\end{align}
we can further obtain (by taking total expectations of \eqref{eq:lemma_gfunc_1} and using strong-convexity):
\begin{align}
       \mathbb E[\lVert y^{k,t+1} - y^*(x^k) \rVert^2] &\leq (1-\mu_g C_k) \mathbb E[\lVert y^{k,t} - y^*(x^k) \rVert^2] + 2 \beta_{b,k} \sigma_g^2. \nn
\end{align}
Solve this recursively from $t=0$ to $t=T-1$ (recall $T$ is the total number of lower-level steps, $y^{k,0} = y^k$ and $y^{k+1} = y^{k,T}$):
\begin{align}
     \mathbb E[\lVert y^{k+1} - y^*(x^k) \rVert^2] &\leq (1-\mu_g C_k)^T \mathbb E[\lVert y^{k} - y^*(x^k) \rVert^2] + 2 \beta_{b,k} \sigma_g^2 \sum_{j=0}^{T-1}(1-\mu_g C_k)^j \nn \\
    &\leq (1-\mu_g C_k)^T \mathbb E[\lVert y^{k} - y^*(x^k) \rVert^2] + 2 \beta_{b,k} T \sigma_g^2. \nn
\end{align}
\end{proof}
Lemma \ref{lem:lemma_g_var} is based on the standard bounded variance assumption $\mathbb E_{\psi}[\lVert \nabla_y g(x,y^*(x);\psi)-\nabla_y g(x,y^*(x))\rVert^2] \leq \sigma_g^2$, $\forall x,y$ in the bi-level optimization literature \cite{hong2022twotimescale,chen2021tighter}. Lemma \ref{lem:lemma_g_var} will be used in the proof of Theorem \ref{thm:bi_var}. 
\begin{lemma}\label{lem:lemma_g_var}
    Suppose Assumptions \ref{bi_ass_g}, \ref{bi_ass3} and the bounded variance assumption \eqref{eq:ass_g_var} hold. Suppose that $p \geq \max \{\frac{\mu_g}{\mu_g + L_g}, \frac{\mu_g + L_g}{4 L_g}\}$, $\beta_{b,0} \leq \min \{ \frac{2}{\mu_g + L_g}, \frac{\mu_g + L_g}{2\mu_g L_g}, \frac{1}{2pLg - \frac{2\mu_g L_g}{\mu_g + L_g}}\}$. Then step size of the form 
\ref{eq:bisps_l} achieves the following:
    \begin{align}
        \mathbb{E} [\lVert y^{k+1} - y^*(x^k)\rVert^2] \leq (\frac{\beta_{b,k}}{C_k} - \frac{2\mu_gL_g}{\mu_g+L_g}\beta_{b,k})^T \mathbb{E} [\lVert y^{k} - y^*(x^k)\rVert^2] + T \beta_{b,k}^2  
        \sigma_g^2, \nn
    \end{align}
where $C_k = \min \{ \frac{1}{2 p L_g}, \beta_{b,k} \}$. 
\end{lemma}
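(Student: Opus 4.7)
The plan is to establish a one-step contraction inequality for $\mathbb{E}[\lVert y^{k,t+1} - y^*(x^k)\rVert^2]$ in terms of $\mathbb{E}[\lVert y^{k,t} - y^*(x^k)\rVert^2]$ within an arbitrary lower-level epoch $k$, and then iterate over $t = 0, \ldots, T-1$. Starting from $y^{k,t+1} = y^{k,t} - \beta_{k,t} h_g^{k,t}$, I would expand the squared norm:
\begin{align}
    \lVert y^{k,t+1} - y^*(x^k)\rVert^2 = \lVert y^{k,t} - y^*(x^k)\rVert^2 - 2\beta_{k,t}\langle h_g^{k,t}, y^{k,t} - y^*(x^k)\rangle + \beta_{k,t}^2 \lVert h_g^{k,t}\rVert^2. \nn
\end{align}
For the quadratic term, I would use the envelope upper bound $\beta_{k,t} \leq \beta_{b,k}$ from Lemma \ref{lem:bounds} in one factor to obtain $\beta_{k,t}^2 \lVert h_g^{k,t}\rVert^2 \leq \beta_{b,k}\beta_{k,t}\lVert h_g^{k,t}\rVert^2$, and then invoke the SPS-style definition (analogous to the inequality used in the proof of Lemma \ref{lem:lemma_g_func}) so that the remaining $\beta_{k,t}\lVert h_g^{k,t}\rVert^2$ can be turned into a multiple of $\lVert h_g^{k,t}\rVert^2/(2pL_g)$ or a function-value gap. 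This yields a bound that separates the randomness of $\beta_{k,t}$ from that of $h_g^{k,t}$ up to a ratio $\beta_{b,k}/C_k$, which is where the factor appearing in the final contraction comes from.

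Next, taking conditional expectation over $\psi$ given $\mathcal{F}^{'}_{k,t}$, I would use $\mathbb{E}[h_g^{k,t}] = \nabla_y g(x^k, y^{k,t})$ on the cross term (after peeling $\beta_{k,t}$ off via the envelope bound $\beta_{k,t} \geq C_k$ combined with nonnegativity of the descent quantity) and the bounded variance Assumption \eqref{eq:ass_g_var} to bound $\mathbb{E}[\lVert h_g^{k,t}\rVert^2]$ by $\lVert \nabla_y g(x^k,y^{k,t})\rVert^2 + \sigma_g^2$. At this stage, the deterministic inner product is bounded below via the Nesterov co-coercivity inequality, which holds because $g(x^k,\cdot)$ is $\mu_g$-strongly convex and $L_g$-smooth:
\begin{align}
    \langle \nabla_y g(x^k, y^{k,t}), y^{k,t} - y^*(x^k)\rangle \geq \frac{\mu_g L_g}{\mu_g + L_g}\lVert y^{k,t} - y^*(x^k)\rVert^2 + \frac{1}{\mu_g + L_g}\lVert \nabla_y g(x^k, y^{k,t})\rVert^2. \nn
\end{align}
This is the key place where the $\frac{2\mu_g L_g}{\mu_g + L_g}\beta_{b,k}$ contraction term is generated. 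The residual $\lVert \nabla_y g(x^k, y^{k,t})\rVert^2$ terms (one from co-coercivity, one from the variance bound) are then combined, and the parameter conditions $p \geq \max\{\frac{\mu_g}{\mu_g + L_g}, \frac{\mu_g + L_g}{4L_g}\}$ and $\beta_{b,0} \leq \min\{\frac{2}{\mu_g+L_g}, \frac{\mu_g + L_g}{2\mu_g L_g}, \frac{1}{2pL_g - \frac{2\mu_g L_g}{\mu_g+L_g}}\}$ are used precisely to guarantee that the coefficient in front of $\lVert \nabla_y g(x^k, y^{k,t})\rVert^2$ is nonpositive and can be discarded. What remains is a clean one-step recursion of the form $\mathbb{E}[\lVert y^{k,t+1}-y^*(x^k)\rVert^2 \mid \mathcal{F}^{'}_{k,t}] \leq \rho_k \lVert y^{k,t} - y^*(x^k)\rVert^2 + \beta_{b,k}^2 \sigma_g^2$, where $\rho_k = \frac{\beta_{b,k}}{C_k} - \frac{2\mu_g L_g}{\mu_g + L_g}\beta_{b,k}$.

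Finally, I would take total expectations and unroll the recursion for $t = 0, 1, \ldots, T-1$. Since $\beta_{b,k}$ is held fixed across the $T$ inner steps of a given upper iteration $k$, the contraction factor $\rho_k$ is constant within the unrolling, giving $(\rho_k)^T \mathbb{E}[\lVert y^k - y^*(x^k)\rVert^2]$ on the initial-error side, and bounding the geometric sum of the noise crudely by $T$ (rather than by a geometric series) yields the claimed $T\beta_{b,k}^2 \sigma_g^2$ term. The main obstacle I anticipate is carefully handling the coupling between the random step size $\beta_{k,t}$ and the stochastic gradient $h_g^{k,t}$ (both functions of the same sample $\psi$), so that the envelope bounds can be applied to decouple them without inflating the constants, and verifying that the assumed ranges for $p$ and $\beta_{b,0}$ are exactly what is needed to absorb the stray $\lVert \nabla_y g(x^k,y^{k,t})\rVert^2$ terms into a nonpositive coefficient.
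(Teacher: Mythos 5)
Your overall skeleton---expand the squared distance, take conditional expectations, apply the strongly-convex co-coercivity inequality to the cross term, use the bounded-variance assumption \eqref{eq:ass_g_var} on the quadratic term, discard the residual $\lVert \nabla_y g(x^k,y^{k,t})\rVert^2$ via the step-size conditions, and unroll over $t$---matches the paper's proof. However, the step where you decouple the random step size from the stochastic gradient in the cross term has a genuine gap. You propose to ``peel $\beta_{k,t}$ off via the envelope bound $\beta_{k,t}\geq C_k$ combined with nonnegativity of the descent quantity'' before taking expectations. The quantity $\langle h_g^{k,t},\, y^{k,t}-y^*(x^k)\rangle$ is \emph{not} pointwise nonnegative: $h_g^{k,t}=\nabla_y g(x^k,y^{k,t};\psi)$ is the gradient of the sampled function, while $y^*(x^k)$ minimizes the full function $g(x^k,\cdot)$, so for an individual $\psi$ this inner product can be negative and the replacement of $\beta_{k,t}$ by $C_k$ is not a valid bound. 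Only the conditional expectation $\langle\nabla_y g(x^k,y^{k,t}),\,y^{k,t}-y^*(x^k)\rangle$ is guaranteed nonnegative, but by then the random $\beta_{k,t}$ is already entangled with $h_g^{k,t}$. The paper's resolution is different and cleaner: divide the entire one-step identity by $\beta_{k,t}$ \emph{first}, so the cross term carries the deterministic coefficient $-2$; the envelope bounds $C_k\leq\beta_{k,t}\leq\beta_{b,k}$ are then applied only to the manifestly nonnegative squared-norm terms, and multiplying back by $\beta_{b,k}$ after taking expectations is precisely what produces the $\frac{\beta_{b,k}}{C_k}$ ratio in the contraction factor. Your mechanism, even if patched, would yield a contraction of the form $1-\frac{2\mu_g L_g}{\mu_g+L_g}C_k$ rather than the stated $\frac{\beta_{b,k}}{C_k}-\frac{2\mu_g L_g}{\mu_g+L_g}\beta_{b,k}$.

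Two smaller corrections. First, your plan to convert the quadratic term into ``a multiple of $\lVert h_g^{k,t}\rVert^2/(2pL_g)$ or a function-value gap'' via the SPS identity is what Lemma \ref{lem:lemma_g_func} does under the bounded optimal function value assumption \eqref{eq:ass_g_func}; under the variance assumption \eqref{eq:ass_g_var} no such conversion is needed---after the division trick the quadratic term is simply $\beta_{k,t}\lVert h_g^{k,t}\rVert^2\leq\beta_{b,k}\lVert h_g^{k,t}\rVert^2$. Second, of the stated parameter conditions only $\beta_{b,0}\leq\frac{2}{\mu_g+L_g}$ is used to make the coefficient of $\lVert\nabla_y g(x^k,y^{k,t})\rVert^2$ nonpositive; the conditions on $p$ and the remaining bounds on $\beta_{b,0}$ serve to place the contraction factor $\frac{\beta_{b,k}}{C_k}-\frac{2\mu_g L_g}{\mu_g+L_g}\beta_{b,k}$ in $[0,1]$, which is what justifies bounding the accumulated noise by $T\beta_{b,k}^2\sigma_g^2$ when unrolling the recursion.
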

\begin{proof}
Similar to the proof of Lemma \ref{lem:lemma_g_func}, we can start with
    \begin{align}
        \lVert y^{k,t+1} - y^*(x^k) \rVert^2 = \lVert y^{k,t} - y^*(x^k)\rVert^2 - 2 \beta_{k,t}\langle y^{k,t} - y^*(x^k), h_g^{k,t}\rangle + \beta_{k,t}^2 \lVert h_g^{k,t}\rVert^2. \nn
    \end{align}
Divide both sides by $\beta_{k,t}$
    \begin{align}
        \frac{\lVert y^{k,t+1} - y^*(x^k) \rVert^2 }{\beta_{k,t}} = \frac{\lVert y^{k,t} - y^*(x^k) \rVert^2}{\beta_{k,t}} - 2 \langle y^{k,t} - y^*(x^k), h_g^{k,t}\rangle + \beta_{k,t} \lVert h_g^{k,t}\rVert^2. \nn
    \end{align}
Then use the facts that $\beta_{k,t} \leq \beta_{b,k}$ and $\beta_{k,t} \geq C_k$,
    \begin{align}
        \frac{\lVert y^{k,t+1} - y^*(x^k) \rVert^2 }{\beta_{b,k}} \leq \frac{\lVert y^{k,t} - y^*(x^k) \rVert^2}{C_k} - 2 \langle y^{k,t} - y^*(x^k), h_g^{k,t}\rangle + \beta_{b,k} \lVert h_g^{k,t}\rVert^2 \nn\,.
    \end{align}
Next, take expectation conditioned on the Filtration $\mathcal{F}^{'}_{k,t}$ up to and including $x^k$ and $y^{k,t}$
    \begin{align}
    \frac{1}{\beta_{b,k}} \mathbb{E}[\lVert y^{k,t+1} - y^*(x^k) \rVert^2|\mathcal{F}^{'}_{k,t}] &\leq \frac{1}{C_k} \mathbb{E}[\lVert y^{k,t} - y^*(x^k) \rVert^2|\mathcal{F}^{'}_{k,t}] - 2 \langle y^{k,t} - y^*(x^k), \nabla g(x^k,y^{k,t})\rangle \nn \\ 
        &\qquad \qquad + \beta_{b,k} \mathbb{E}[\lVert h_g^{k,t} \rVert^2|\mathcal{F}^{'}_{k,t}]\nn \\
        &= \frac{1}{C_k} \mathbb{E}[\lVert y^{k,t} - y^*(x^k) \rVert^2|\mathcal{F}^{'}_{k,t}] - 2 \langle y^{k,t} - y^*(x^k), \nabla g(x^k,y^{k,t})\rangle \nn \\ 
        &\qquad \qquad + \beta_{b,k} \mathbb{E}[\lVert h_g^{k,t} - \nabla g(x^k,y^{k,t}) + \nabla g(x^k,y^{k,t})\rVert^2|\mathcal{F}^{'}_{k,t}]\nn \\ 
        &\stackrel{(a)}{\leq}\frac{1}{C_k} \mathbb{E}[\lVert y^{k,t} - y^*(x^k) \rVert^2|\mathcal{F}^{'}_{k,t}] - 2 \langle y^{k,t} - y^*(x^k), \nabla g(x^k,y^{k,t})\rangle \nn \\
        &\qquad \qquad + \beta_{b,k} \sigma_g^2 + \beta_{b,k} \lVert \nabla g(x^k,y^{k,t})\rVert^2,\nn
    \end{align}
    where (a) is by the bounded variance assumption \eqref{eq:ass_g_var}. Based on strong-convexity of $g$ \cite[Theorem 2.1.11]{nesterov2003introductory}, we have
    \begin{align}
        \frac{1}{\beta_{b,k}} \mathbb E[\lVert y^{k,t+1} - y^*(x^k) \rVert^2|\mathcal{F}^{'}_{k,t}] &\leq \frac{1}{C_k} \lVert y^{k,t} - y^*(x^k) \rVert^2 - \frac{2\mu_g L_g}{\mu_g + L_g} \lVert y^{k,t} - y^*(x^k) \rVert^2 \nn \\ 
        &\qquad \qquad - \frac{2}{\mu_g + L_g} \lVert \nabla g(x^k,y^{k,t})\rVert^2 + \beta_{b,k} \sigma_g^2 + \beta_{b,k} \lVert \nabla g(x^k,y^{k,t})\rVert^2\nn 
    \end{align}
    Multiply by $\beta_{b,k}$ in both sides, group terms, and take total expectations  to reach
    \begin{align}
        \mathbb E[\lVert y^{k,t+1} - y^*(x^k) \rVert^2] &\leq (\frac{\beta_{b,k}}{C_k} - \frac{2\mu_gL_g}{\mu_g+L_g}\beta_{b,k}) \mathbb{E} [\lVert y^{k,t} - y^*(x^k)\rVert^2] + \beta_{b,k}^2 \nn \sigma_g^2 \\ 
        &\qquad \qquad + \beta_{b,k}( \beta_{b,k} - \frac{2}{\mu_g+L_g}) \lVert \nabla g(x^k,y^{k,t})\rVert^2 \nn \\
        &\stackrel{(b)}{\leq} (\frac{\beta_{b,k}}{C_k} - \frac{2\mu_gL_g}{\mu_g+L_g}\beta_{b,k}) \mathbb{E} [\lVert y^{k,t} - y^*(x^k)\rVert^2] + \beta_{b,k}^2 \nn \sigma_g^2, \nn
    \end{align}
    where in (b) we have chosen $\beta_{b,k} \leq \beta_{b,0} \leq \frac{2}{\mu_g + L_g}, \forall k$. 
    Solving this recursion similar to Lemma \ref{lem:lemma_g_func}, we obtain: 
    \begin{align}
        \mathbb{E} [\lVert y^{k+1} - y^*(x^k)\rVert^2] \leq (\frac{\beta_{b,k}}{C_k} - \frac{2\mu_gL_g}{\mu_g+L_g}\beta_{b,k})^T \mathbb{E} [\lVert y^{k} - y^*(x^k)\rVert^2] + T \beta_{b,k}^2  
        \sigma_g^2. \label{eq:y_cur_2}
    \end{align}
    In \eqref{eq:y_cur_2}, we require $\frac{\beta_{b,k}}{C_k} - \frac{2 \mu_g L_g}{\mu_g + L_g} \beta_{b,k} \geq 0$ (recall $C_k = \min \{ \frac{1}{2pL_g}$, $\beta_{b,k} \} \leq \beta_{k,t} \leq \beta _{b,k}$), this is equivalent to $C_k \leq \frac{\mu_g + L_g}{2\mu_g L_g}$. In case of $C_k = \frac{1}{2 p L_g}$, we  choose $p \geq \frac{\mu_g}{\mu_g + L_g}$ (to avoid contradictions, we also choose $p \geq \frac{\mu_g + L_g}{4 L_g}$). In case of $C_k = \beta_{b,k}$, we choose $\beta_{b,k} \leq \beta_{b,0} \leq \frac{\mu_g + L_g}{2 \mu_g L_g}$. \\  
    We also require $\frac{\beta_{b,k}}{C_k} - \frac{2 \mu_g L_g}{\mu_g + L_g} \beta_{b,k} \leq 1$. In case of $C_k = \beta_{b,k}$, this is equivalent to $\frac{2\mu_g L_g}{\mu_g + L_g} \beta_{b,k} \geq 0$,  which is satisfied by all $\beta_{b,k}$. In case of $C_k = \frac{1}{2pL_g}$, we choose $\beta_{b,k} \leq \beta_{b,0} \leq \frac{1}{2pL_g - \frac{2\mu_g L_g}{\mu_g + L_g}}$. Puting everything together, we have $p \geq \max \{\frac{\mu_g}{\mu_g + L_g}, \frac{\mu_g + L_g}{4 L_g}\}$, and $\beta_{b,0} \leq \min \{ \frac{2}{\mu_g + L_g}, \frac{\mu_g + L_g}{2\mu_g L_g}, \frac{1}{2pLg - \frac{2\mu_g L_g}{\mu_g + L_g}}\}$. 
\end{proof}


\subsection{Single-level Convex Proofs}
\subsubsection{Proof of Theorem \ref{thm:convex}}
The proof of Theorem \ref{thm:convex} is similar to the standard proof of decaying-step SGD (GD) that can be found in e.g. \citet{beck2017first}. Here, we give the proof for completeness.
\begin{align}
    \lVert x^{k+1} - x^* \rVert^2  &= \lVert x^k - x^* \rVert^2 - 2 \gamma_k \langle x^k-x^*, \nabla f_i (x^k)\rangle + \gamma_k^2 \lVert \nabla f_i(x^k) \rVert^2 \nn \\
    &\stackrel{(a)} \leq \lVert x^k - x^* \rVert^2 - 2 \gamma_k (f_i(x^k) - f_i(x^*)) + \gamma_k^2 \lVert \nabla f_i(x^k) \rVert^2  \nn \\
    &\stackrel{(b)} \leq \lVert x^k - x^* \rVert^2 - 2 \gamma_k (f_i(x^k) - f_i(x^*)) + \gamma_{b,k}^2 G^2, \nn 
\end{align}
where (a) is by individual convexity of $f_i$, and (b) is by Assumption \ref{ass:G_lip}. 
Take conditional expectation and assume that $\gamma_k$ is independent of sample $k$ 
\begin{align}
    \mathbb E[\lVert x^{k+1} - x^* \rVert^2| x^k] &\stackrel{(c)}  \leq \mathbb \lVert x^k - x^* \rVert^2 - 2 E[\gamma_k|x^k] (f(x^k)-f(x^*)) + \gamma_{b,k}^2 G^2\nn \\
    &\stackrel{(d)} \leq \mathbb \lVert x^k - x^* \rVert^2 - 2 \gamma_{l,k} (f(x^k)-f(x^*)) + \gamma_{b,k}^2 G^2, \nn
\end{align}
where (c) is by independence of $\gamma_k$ and $x^k$, and (d) is because $\gamma_{l,k} \leq \gamma_k, \forall k$.
Take total expectation and rearrange
\begin{align}
    2 \gamma_{l,k} \mathbb E[f(x^k) - f(x^*)] \leq \mathbb E[\lVert x^k - x^* \rVert^2] -  E[\lVert x^{k+1} - x^* \rVert^2] + \gamma_{b,k}^2 G^2\nn
\end{align}
Using the fact that $\gamma_{l,K-1} \leq \gamma_{l,k} \quad \forall k\in[K-1]$ and set $\gamma_{b,k} = \frac{\gamma_0}{\sqrt{k+1}}$, we obtain
\begin{align}
    2 \gamma_{l,K-1} \mathbb E[f(x^k) - f(x^*)] \leq \mathbb E[\lVert x^k - x^* \rVert^2] -  E[\lVert x^{k+1} - x^* \rVert^2] + \frac{\gamma_{0}^2}{k+1} G^2\nn
\end{align}
Summing over $k=0$ to $k=K-1$ we obtain
\begin{align}
    2 \gamma_{l,K-1} \frac{1}{K} \sum_{k=0}^{K-1}    \mathbb E[f(x^k) - f(x^*)] &\leq \frac{\lVert x^0 - x^*\rVert^2 - \mathbb E[\lVert x^K - x^* \rVert^2]}{K} + \gamma_0^2 G^2 \frac{1}{K}\sum_{k=0 }^{K-1} \frac{1}{k+1} \nn \\
    &\leq \frac{\lVert x^0 - x^*\rVert^2}{K} + \frac{\gamma_0^2 G^2 \log(K)}{K}\nn
\end{align}
Define $\bar{x} = \frac{1}{K} \sum_{k=0}^{K-1} x^k$, apply Jensen's inequality and rearrange 
\begin{align}
    \mathbb E[f(\bar{x}^K) - f(x^*)] \leq \frac{\lVert x^0 - x^* \rVert^2}{2 \gamma_{l,K-1} K} + \frac{\gamma_0^2 G^2 \log(K)}{2 \gamma_{l,K-1} K}. \nn
\end{align}
\subsection{Proof of Theorem \ref{thm:s_conv}}
The approach of Theorem \ref{thm:s_conv} is similar to \cite[Theorem 3.2]{gower2019sgd}. The crucial difference is that the step size in \cite[Theorem 3.2]{gower2019sgd} is constant ($\gamma_{k} = \frac{1}{L_{\max}}$) for $k \leq 4\lceil \kappa \rceil$, whereas for envelope-type step size it is of the form:
\begin{align}
    \gamma_k =  \min \{ \max \{\gamma_{l,k}, \tilde{\gamma}_k\}, \gamma_{b,k} \}, \quad  \gamma_{l,k} = \min\{ w, \gamma_{b,k}\}, \nn
\end{align}
where $\bar{\gamma}_k$ can be (e.g. in the case of \spsb):
\begin{align}
    \bar{\gamma}_k = \min \{ \frac{f_{i_k}(x^k)- l_{i_k}^*}{c_k \lVert \nabla f_{i_k}(x^k)\rVert^2}, \frac{\gamma_{b,0}}{k+1}\}, \quad w = \frac{1}{2c_k L_{\max}}\,.\nn
\end{align}
The proof of Theorem \ref{thm:s_conv} suggests that the step size can be either $\frac{f_{i_k}(x^k) - l_{i_k}^*}{c_k \lVert \nabla f_{i_k}(x^k)\rVert^2}$ or $\frac{\gamma_{b,0}}{k+1}$ depending on their magnitudes for $k \leq k_0 - 1$ ($k_0 = \max \{1, \lceil \gamma_0 / \omega \rceil - 1\}$). After $k_0$ iterations, the step size is $\gamma_k = \frac{\gamma_{b,0}}{k+1}$. This finding is numerically confirmed by the experimental results in Section \ref{sec:exp_app}.

To proceed with the proof, we have:
\begin{align}
\lVert x^{k+1} - x^* \rVert^2 &= \lVert x^k - \gamma_k \nabla f_i(x^k) - x^* \rVert^2 \nn \\
&= \lVert x^k - x^* \rVert^2 - 2 \gamma_k \langle \nabla f_i (x^k), x^k - x^* \rangle + \gamma_k^2 \lVert \nabla f_i(x^k)\rVert^2 \nn \\
 &\stackrel{(a)}{\leq}  \lVert x^k - x^* \rVert^2 - 2 \gamma_k \langle \nabla f_i (x^k), x^k - x^* \rangle + \gamma_{b,k}^2 \lVert \nabla f_i(x^k) \rVert^2 , \nn
\end{align}
where (a) is because $\gamma_k \leq \gamma_{b,k}, \forall k$. Take conditional expectations 
\begin{align}
    \mathbb E[\lVert x^{k+1} - x^* \rVert^2|x^k] &\leq \lVert x^k - x^* \rVert^2 - 2 \mathbb E[\gamma_k|x^k] \langle \nabla f(x^k), x^k - x^*\rangle + \gamma_{b,k}^2  \mathbb E[\lVert \nabla f_i(x^k)\rVert^2 | x^k]\nn \\
    &\stackrel{(b)}{\leq}   \lVert x^k - x^* \rVert^2 - \mu \mathbb E[\gamma_{k}|x^k] \lVert x^k - x^* \rVert^2  - 2 \mathbb E[\gamma_{k}|x^k] [f(x^k)-f(x^*)] + \gamma_{b,k}^2 G^2\nn \\
    &\stackrel{(c)}{\leq}   \lVert x^k - x^* \rVert^2 - \mu \gamma_{l,k} \lVert x^k - x^* \rVert^2  - 2 \gamma_{l,k} [f(x^k)-f(x^*)] + \gamma_{b,k}^2 G^2\nn 
\end{align}
where (b) is by bounded gradients assumption and strong convexity of $f$. Take total expectation and rearrange
\begin{align}
    2\mathbb E[f(x^k) - f(x^*)] \leq \frac{(1-\mu\gamma_{l,k}) \mathbb E[\lVert x^k - x^* \rVert^2] - \mathbb E [\lVert x^{k+1} - x^* \rVert^2]}{\gamma_{l,k}} + \frac{\gamma_{b,k}^2 G^2}{\gamma_{l,k}} \nn
\end{align}
Choose $k_0 = \max \{ 1, \lceil \gamma_0 /  \omega\rceil -1 \}$, then for $\forall k$ s.t. $k\geq k_0$, we have $\gamma_{l,k} = \min \{ \omega, \gamma_{b,k}\} = \gamma_{b,k} = \frac{\gamma_0}{k + 1}$, which means  $\gamma_k = \frac{\gamma_0}{k+1}$ after $k_0$ steps. Within the first $k_0$ steps, the step size is $\gamma_k = \min\{ \omega,\gamma_{b,k} \}$. Hence, for $k \geq k_0$ we have
\begin{align}
    2\mathbb E[f(x^k) - f(x^*)] \leq \frac{(1-\mu \gamma_{l,k}) \mathbb E[\lVert x^k - x^* \rVert^2] - \mathbb E [\lVert x^{k+1} - x^* \rVert^2]}{\gamma_{l,k}} + \frac{\gamma_{0} G^2}{k+1} \nn
\end{align}
Now, sum from $k=k_0$ to $K - 1$
\begin{align}
    2 \sum_{k=k_0}^{K-1} \mathbb E[f(x^k)-f(x^*)] \leq \sum_{k=k_0}^{K-1} \frac{(1-\mu \gamma_{l,k}) \mathbb E[\lVert x^k - x^* \rVert^2] - \mathbb E [\lVert x^{k+1} - x^* \rVert^2]}{\gamma_{l,k}} + \sum_{k=k_0}^{K-1} \frac{\gamma_0 G^2}{k+1 } \label{eq:sg_convex_mm}
\end{align}
For the first term in \eqref{eq:sg_convex_mm}, call it $A$, we expand it as 
\begin{align}
    A &= \sum_{k=k_0+1}^{K-1} \mathbb E[\lVert x^k - x^* \rVert^2](\frac{1}{\gamma_{l,k}} - \frac{1}{\gamma_{l,k-1}} - \mu) + \mathbb E[\lVert x_{k_0} - x^* \rVert^2](\frac{1}{\gamma_{l,k_0}} - \mu) - \frac{\mathbb E[\lVert x_{K} - x^*\rVert^2]}{\gamma_{l,K-1}} \nn \\
    &\leq  \sum_{k=k_0+1}^{K-1} \mathbb E[\lVert x^k - x^* \rVert^2](\frac{k+1}{\gamma_0} - \frac{k}{\gamma_0} - \mu) + \mathbb E[\lVert x_{k_0}-x^* \rVert^2](\frac{k_0+1}{\gamma_0} - \mu) \nn \\ 
    &\stackrel{(d)}{\leq}  \mathbb E[\lVert x_{k_0} - x^* \rVert^2] \mu k_0. \nn
\end{align}
where (d) is because $\gamma_0  \geq \frac{1}{\mu}$, we have $\frac{k+1}{\gamma_0} - \frac{k}{\gamma_0} - \mu \leq 0, \forall k$ and $\frac{k_0+1}{\gamma_0} - \mu \leq \mu k_0$. For the second term in \eqref{eq:sg_convex_mm}, call it B, we have 
\begin{align}
    B = \sum_{k=k_0}^{K-1} \frac{\gamma_0 G^2}{k+1} \leq \gamma_0 G^2 \int_{k_0}^{K-1} \frac{1}{x+1} dx = \gamma_0 G^2[\log(K) - \log(k_0+1)] \leq \gamma_0 G^2 \log K \nn
\end{align}
Putting  $A$ and $B$ together, we obtain 
\begin{align}
    \frac{1}{K-k_0} \sum_{k=k_0}^{K-1} \mathbb E[f(x^k) - f(x^*)] \leq \frac{\mathbb E[\lVert x^{k_0} - x^*\rVert^2] \mu k_0}{2(K-k_0)} + \frac{\gamma_0 G^2 \log(K)}{2(K-k_0)} \label{eq:strong_main}    \end{align}
Within the first $k_0-1$ iterations, similarly to the above, we have 
\begin{align}
    \mathbb E[\lVert x^{k+1} - x^* \rVert] &\leq (1 - \mu \gamma_{l,k}) \mathbb E[\lVert x^k - x^* \rVert^2] - 2\gamma_{l,k} \mathbb E[f(x^k)-f(x^*)] + \gamma_{b,k}^2 G^2 \nn \\
    &\leq (1 - \mu \gamma_{l,k}) \mathbb E[\lVert x^k - x^* \rVert^2] + \gamma_{b,k}^2 G^2 \nn\,.
\end{align}
For the first $k \leq k_0-1$ iterations, $\gamma_{l,k} = \omega$ where $\omega \mu < 1$; thus, we obtain the following
\begin{align}
    \mathbb E[\lVert x^{k+1} - x^* \rVert] \leq (1 - \mu \omega) E[\lVert x^k - x^* \rVert^2] + \gamma_{b,k}^2 G^2. \nn 
\end{align}
Solve this recursively,
\begin{align}
    \mathbb E[\lVert x^{k_0} - x^* \rVert^2] &\leq (1-\mu \omega)^{k_0}\lVert x^0 - x^* \rVert^2 + \sum_{k=0}^{k_0-1}(1-\mu \omega)^{k_0-k-1} \frac{\gamma_0^2 G^2}{(k+1)^2}\nn \\ 
    &\leq (1-\mu \omega)^{k_0}\lVert x^0 - x^* \rVert^2 +  \sum_{k=0}^{k_0-1} \frac{\gamma_0^2 G^2}{(k+1)^2} \nn \\ 
    &\leq (1-\mu \omega)^{k_0}\lVert x^0 - x^* \rVert^2 + \gamma_0^2 G^2 \int_{0}^{k_0-1} \frac{1}{(1+x)^2} dx\nn \\
    & =  (1-\mu \omega)^{k_0}\lVert x^0 - x^* \rVert^2 + \gamma_0^2 G^2 (1 - \frac{1}{k_0}) \nn \\ 
    &\leq (1-\mu \omega)^{k_0}\lVert x^0 - x^* \rVert^2 + \gamma_0^2 G^2. 
\end{align}
Putting this into \eqref{eq:strong_main}, we obtain
\begin{align}
    \frac{1}{K-k_0} \sum_{k=k_0}^{K-1} \mathbb E[f(x^k) - f(x^*)] \leq \frac{\mu k_0}{2(K-k_0)} \{ \exp(-k_0 \mu \omega) \lVert x_0 - x^* \rVert^2 + \gamma_0^2 G^2\} + \frac{\gamma_0 G^2 \log K}{2(K-k_0)}.\nn
\end{align}
Define $\bar{x}_K = \frac{1}{K-k_0} \sum_{k=k_0}^{K-1} x^k$, then by Jensen's inequality we have 
\begin{align}
    \mathbb E[f(\bar{x}_K) - f(x^*)] \leq \frac{\mu k_0}{2(K-k_0)} \{ \exp(-k_0 \mu \omega) \lVert x_0 - x^* \rVert^2 + \gamma_0^2 G^2\} + \frac{\gamma_0 G^2 \log K}{2(K-k_0)}, \nn
\end{align}
where $k_0 = \max \{ 1, \lceil \gamma_0/\omega\rceil -1 \}$ and $\gamma_0 \geq \frac{1}{\mu}$.  

\subsection{Bi-level Proofs}
\subsubsection{Proof of Theorem 3}
Start with Lemma \ref{lem:bi_descent}:
\begin{align}
        E[V^{k+1}] &\leq \mathbb E[V^k] - \frac{\alpha_{l,k}}{2} \mathbb E[\lVert \nabla F(x^k) \rVert^2] + (\alpha_{b,k}L_f^2 + 2)\mathbb E[\lVert y^{k+1} - y^*(x^k) \rVert^2] + \nn \\
    &\quad \quad \alpha_{b,k} B^2 + (2L_y^2 \alpha_{b,k}^2 + \frac{L_F \alpha_{b,k}^2}{2})\tilde{\sigma}_f^2 - \mathbb E[\lVert y^k - y^*(x^k)\rVert^2]. \nn
\end{align}
We substitute the result of Lemma \ref{lem:lemma_g_func} for the expression $\mathbb E[\lVert y^{k+1} - y^*(x^k) \rVert^2]$, 
\begin{align}
     E[V^{k+1}] &\leq \mathbb E[V^k] - \frac{\alpha_{l,k}}{2} \mathbb E[\lVert \nabla F(x^k) \rVert^2] + [(\alpha_{b,k}L_f^2 + 2)(1-\mu_g C_k)^T-1] \mathbb E[\lVert y^{k} - y^*(x^k) \rVert^2] + \nn \\
    & \quad \quad 2 \alpha_{b,k} \beta_{b,k} T L_f^2 \sigma_g^2 + 4 \beta_{b,k} T \sigma_g^2 + \alpha_{b,k} B^2 + (2L_y^2 + \frac{L_F}{2})\alpha_{b,k}^2 \tilde{\sigma}_f^2\nn \\
    &\stackrel{(a)}{\leq} \mathbb E[V^k] - \frac{\alpha_{l,k}}{2} \mathbb E[\lVert \nabla F(x^k) \rVert^2] + [(\alpha_{b,0}L_f^2 + 2)(1-\mu_g C_{K-1})^T-1] \mathbb E[\lVert y^{k} - y^*(x^k) \rVert^2] + \nn \\
    & \quad \quad 2 \alpha_{b,k} \beta_{b,k} T L_f^2 \sigma_g^2 + 4 \beta_{b,k} T \sigma_g^2 + \alpha_{b,k} B^2 + (2L_y^2 + \frac{L_F}{2})\alpha_{b,k}^2 \tilde{\sigma}_f^2\nn \\
    &\stackrel{(b)}{\leq} E[V^k] - \frac{\alpha_{l,k}}{2} E[\lVert \nabla F(x^k) \rVert^2] + 2 \alpha_{b,k} \beta_{b,k} T L_f^2 \sigma_g^2 + 4 \beta_{b,k} T \sigma_g^2 + \alpha_{b,k} B^2 + (2L_y^2 + \frac{L_F}{2})\alpha_{b,k}^2 \tilde{\sigma}_f^2, \nn
\end{align}
where (a) is by $\alpha_{b,0} \geq \alpha_{b,k}, C_{K-1} \leq C_k, \forall k$, hence $(\alpha_{b,0}L_f^2)(1-\mu_g C_{K-1})^T \geq (\alpha_{b,k} L_f^2)(1-\mu_g C_{k})^T$ (recall that $C_k = \min \{\frac{1}{2pL_g}, \beta_{b,k}\} $ in Lemma \ref{lem:bounds}); (b) is by $T \geq \frac{\log[\alpha_{b,0}L_f^2+2]}{-\log(1-\mu C_{K-1})}$, which implies that $(\alpha_{b,0}L_f^2 + 2)(1-\mu_g C_{K-1})^T \leq 1$. Now, rearrange and use the fact that $\alpha_{l,K-1} \leq \alpha_{l,k}$, 
\begin{align}
    \alpha_{l,K-1} \mathbb E[\lVert \nabla F(x^k) \rVert^2] \leq 2 \mathbb E[V^k] - 2 \mathbb E[V^{k+1}] + 4 \alpha_{b,k} \beta_{b,k} T L_f^2 \sigma_g^2 + 8 \beta_{b,k} T \sigma_g^2 + 2 \alpha_{b,k} B^2 + (4L_y^2 + L_F)\alpha_{b,k}^2 \tilde{\sigma}_f^2 \nn.
\end{align}
Then sum over $k=0$ to $K-1$:
\begin{align}
    \frac{1}{K} \sum_{k=0}^{K-1} \mathbb E[\lVert \nabla F(x^k) \rVert^2] &\leq \frac{2V^0}{\alpha_{l,K-1}K} + \frac{4TL_f^2 \sigma_g^2 }{\alpha_{l,K-1}K} \sum_{k=0}^{K-1} \alpha_{b,k} \beta_{b,k} + \frac{8 T \sigma_g^2 }{\alpha_{l,K-1}K} \sum_{k=0}^ {K-1}\beta_{b,k} + \frac{2B^2}{\alpha_{l,K-1}K} \sum_{k=0}^{K-1} \alpha_{b,k} + \nn \\
    & \quad \quad \frac{(4L_y^2 + L_F)\tilde{\sigma}_f^2}{\alpha_{l,K-1}K} \sum_{k=0}^{K-1} \alpha_{b,k}^2\nn \\ 
    &\stackrel{(c)}{=} \frac{2V^0}{\alpha_{l,K-1}K} + \frac{4TL_f^2 \sigma_g^2 \alpha_{b,0} \beta_{b,0}}{\alpha_{l,K-1}K} \sum_{k=0}^{K-1} \frac{1}{(k+1)^{1.5}}+ \frac{8 T \sigma_g^2 \beta_{b,0}}{\alpha_{l,K-1}K} \sum_{k=0}^ {K-1}\frac{1}{k+1} + \nn \\
    &\quad \quad + \frac{2B^2 \alpha_{b,0}}{\alpha_{l,K-1}K} \sum_{k=0}^{K-1} \frac{1}{\sqrt{k+1}} + \frac{(4L_y^2 + L_F)\tilde{\sigma}_f^2 \alpha_{b,0}^2}{\alpha_{l,K-1}K} \sum_{k=0}^{K-1} \frac{1}{k+1}\nn \\
    &\stackrel{(d)}{\leq} \frac{2V^0}{\alpha_{l,0}\sqrt{K}}+  \frac{8TL_f^2 \sigma_g^2 \alpha_{b,0} \beta_{b,0}}{\alpha_{l,0}\sqrt{K}} + \frac{8 T \sigma_g^2 \beta_{b,0} \log(K)}{\alpha_{l,0}\sqrt{K}} + \nn \\
    &\quad \quad \frac{2B^2 \alpha_{b,0}}{\alpha_{l,0}} + \frac{(4L_y^2 + L_F)\tilde{\sigma}_f^2 \alpha_{b,0}^2 \log(K)}{\alpha_{l,0}\sqrt{K}}, \nn
\end{align}
where we substituted $\alpha_{b,k} = \frac{\alpha_{b,0}}{\sqrt{k+1}}$ and $\beta_{b,k} = \frac{\beta_{b,0}}{k+1}$ in (c); (d) is based on $\sum_{k=0}^{K-1} \frac{1}{(k+1)^{1.5}} \leq 2$, $\sum_{k=0}^{K-1} \frac{1}{k+1} \leq \log(K)$, and $\sum_{k=0}^{K-1} \frac{1}{\sqrt{k+1}} \leq \sqrt{K}$. Recall that in Lemma \ref{lem:bi_lem}, we have $L_f \sim \mathcal{O}(\kappa^2)$, $L_y \sim \mathcal{O}(\kappa)$, and $L_F \sim \mathcal{O}(\kappa^3)$. Also recall that $\alpha_{b,k} = \frac{\alpha_{b,0}}{\sqrt{k+1}}$ and $\alpha_{l,k} = \frac{\alpha_{l,0}}{\sqrt{k+1}}$, hence we choose $\alpha_{l,0} \sim \alpha_{b,0} \sim \mathcal{O}(\kappa^{-3})$, $T \sim \mathcal{O}(\kappa)$, and $\beta_{b,0} \sim \mathcal{O}(\kappa^{-2})$. Then we can obtain $ \frac{1}{K} \sum_{k=0}^{K-1} \mathbb E[\lVert \nabla F(x^k) \rVert^2] \leq \mathcal{O}(\frac{\kappa^3}{\sqrt{K}} + \frac{\kappa^2 \log(K)}{\sqrt{K}})$. 
\subsubsection{Theorem 4 and its proof}
\begin{theorem} \label{thm:bi_var}
        Suppose $f$ and $g$ satisfy Assumptions \ref{bi_ass_g}, \ref{bi_ass_f}, and \ref{bi_ass3}, and, learning-rate upper bounds $\alpha_{b,k} = \frac{\alpha_{b,0}}{\sqrt{k+1}}$ and $\alpha_{l,k} = \frac{\alpha_{l,0}}{\sqrt{k+1}}$ with $\alpha_{b,0}$ and $\alpha_{l,0}$ satisfying $\frac{1}{L_F + 4 L_y^2} \geq \frac{\alpha_{b,0}^2}{\alpha_{l,0}}$ and $\alpha_{l,0} \leq \alpha_{b,0} $.
    Further assume that $\alpha_k$ is independent of the stochastic hypergradient $h_f^k$. Then, under the Bounded-variance assumption in \eqref{eq:ass_g_var} with $p \geq \max \{\frac{\mu_g}{\mu_g + L_g}, \frac{\mu_g + L_g}{4 L_g}\}$, $\beta_{b,0} \leq \min \{ \frac{2}{\mu_g + L_g}, \frac{\mu_g + L_g}{2\mu_g L_g}, \frac{1}{2pLg - \frac{2\mu_g L_g}{\mu_g + L_g}}\}$, and $ 
    T\geq \frac{\log(\frac{3}{2}\alpha_{b,0} L_f^2 + 2)}{\min\{ -\log(1-\frac{2\mu_g L_g}{\mu_g + L_g}\beta_{b,K-1}), -\log((2pL_g - \frac{2\mu_g L_g}{\mu_g + L_g })\beta_{b,0})\}}$, BiSPS achieves the following rate:
    \begin{align}
          \frac{1}{K} \sum_{k=0}^{K-1} \mathbb E[\lVert \nabla F(x^k)\rVert^2] &\leq \mathcal{\tilde{O}} (\frac{\kappa^3}{\sqrt{K}}+ \frac{\kappa^2 \log K}{\sqrt{K}}) \nn.
    \end{align}
\end{theorem}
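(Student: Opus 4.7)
The proof will mirror the structure of the proof of Theorem \ref{thm:bilevel} very closely, with the substitution of Lemma \ref{lem:lemma_g_var} in place of Lemma \ref{lem:lemma_g_func} as the only substantive change. The plan is to begin from the one-step descent on the potential $V^k$ provided by Lemma \ref{lem:bi_descent}, then bound the $y$-gap $\mathbb E[\lVert y^{k+1} - y^*(x^k)\rVert^2]$ via Lemma \ref{lem:lemma_g_var}, then show that the prescribed $T$ makes the coefficient multiplying $\mathbb E[\lVert y^{k} - y^*(x^k)\rVert^2]$ non-positive so that term can be discarded, and finally telescope and sum.

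Concretely, after plugging Lemma \ref{lem:lemma_g_var} into Lemma \ref{lem:bi_descent} one obtains, for $\rho_k := \tfrac{\beta_{b,k}}{C_k} - \tfrac{2\mu_g L_g}{\mu_g+L_g}\beta_{b,k}$, a bound of the form
\begin{align*}
\mathbb E[V^{k+1}] &\leq \mathbb E[V^k] - \tfrac{\alpha_{l,k}}{2}\mathbb E[\lVert \nabla F(x^k)\rVert^2] + \bigl[(\alpha_{b,k}L_f^2+2)\rho_k^T - 1\bigr]\mathbb E[\lVert y^k-y^*(x^k)\rVert^2] \\
&\quad + (\alpha_{b,k}L_f^2+2)T\beta_{b,k}^2\sigma_g^2 + \alpha_{b,k}B^2 + \bigl(2L_y^2 + \tfrac{L_F}{2}\bigr)\alpha_{b,k}^2\tilde\sigma_f^2.
\end{align*}
The critical algebraic step is showing $(\alpha_{b,k}L_f^2+2)\rho_k^T \leq 1$ uniformly in $k$. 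This requires handling the two cases in the definition $C_k=\min\{1/(2pL_g),\beta_{b,k}\}$ separately: in the regime $C_k=\beta_{b,k}$ the contraction is $\rho_k = 1-\tfrac{2\mu_gL_g}{\mu_g+L_g}\beta_{b,k}$, which is slowest at $k=K-1$ (since $\beta_{b,k}$ is decreasing); in the regime $C_k=1/(2pL_g)$ the contraction is $\rho_k=(2pL_g-\tfrac{2\mu_gL_g}{\mu_g+L_g})\beta_{b,k}$, which is slowest at $k=0$. Taking the larger of the two worst-case $\rho$'s and using the mild cushion of $\tfrac{3}{2}\alpha_{b,0}L_f^2+2$ in the numerator of $T$ (to absorb slack coming from the residual $T\beta_{b,k}^2\sigma_g^2$ term) yields the stated lower bound on $T$.

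Once that coefficient is killed, I would rearrange to isolate $\tfrac{\alpha_{l,k}}{2}\mathbb E[\lVert \nabla F(x^k)\rVert^2]$, use $\alpha_{l,k}\geq \alpha_{l,K-1}=\alpha_{l,0}/\sqrt{K}$, sum $k=0,\ldots,K-1$, divide by $K$, and telescope $\mathbb E[V^k]-\mathbb E[V^{k+1}]$. The residual sums reduce to standard series estimates: $\sum_{k=0}^{K-1}\alpha_{b,k}\leq 2\alpha_{b,0}\sqrt{K}$, $\sum \alpha_{b,k}^2\leq \alpha_{b,0}^2\log K$, $\sum \beta_{b,k}^2\leq \tfrac{\pi^2}{6}\beta_{b,0}^2$, and $\sum \alpha_{b,k}\beta_{b,k}^2 = O(1)$. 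The $\kappa$-dependencies then follow exactly as in Theorem \ref{thm:bilevel} by scaling $\alpha_{b,0},\alpha_{l,0}\sim\kappa^{-3}$, $\beta_{b,0}\sim\kappa^{-2}$, and $T\sim\kappa$, yielding $\tilde{\mathcal O}(\kappa^3/\sqrt K + \kappa^2\log K/\sqrt K)$.

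The main obstacle I expect is the unified handling of the $T$-condition across the two regimes of $C_k$: one must simultaneously certify $\rho_k < 1$ (which is exactly the role of the constraints $\beta_{b,0}\leq \tfrac{\mu_g+L_g}{2\mu_g L_g}$ and $\beta_{b,0}\leq (2pL_g-\tfrac{2\mu_gL_g}{\mu_g+L_g})^{-1}$ inherited from Lemma \ref{lem:lemma_g_var}) \emph{and} identify which endpoint of $\{\beta_{b,k}\}_{k=0}^{K-1}$ gives the worst contraction in each regime, leading to the $\min$ appearing in the denominator of the $T$-condition. Beyond that, the rest is routine bookkeeping of series sums and identification of $\kappa$-scaling.
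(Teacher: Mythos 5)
Your proposal follows essentially the same route as the paper's proof: start from Lemma \ref{lem:bi_descent}, substitute Lemma \ref{lem:lemma_g_var}, use the condition on $T$ (handled separately over the two regimes of $C_k=\min\{1/(2pL_g),\beta_{b,k}\}$, which is exactly where the $\min$ in the denominator of the $T$-condition comes from) to make the coefficient of $\mathbb E[\lVert y^k-y^*(x^k)\rVert^2]$ non-positive, then telescope and apply standard series estimates. The only differences are bookkeeping: the paper's proof takes $\beta_{b,k}=\beta_{b,0}/\sqrt{k+1}$ (so $\sum_k\beta_{b,k}^2=O(\log K)$ rather than the $O(1)$ you get from $\beta_{b,0}/(k+1)$) and sets $\beta_{b,0}\sim\kappa^{-1}$ rather than $\kappa^{-2}$; either choice yields the stated $\tilde{\mathcal O}(\kappa^3/\sqrt K+\kappa^2\log K/\sqrt K)$ rate.
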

\begin{proof}
Start with Lemma \ref{lem:bi_descent}:
\begin{align}
    E[V^{k+1}] &\leq \mathbb E[V^k] - \frac{\alpha_{l,k}}{2} \mathbb E[\lVert \nabla F(x^k) \rVert^2] + (\alpha_{b,k}L_f^2 + 2)\mathbb E[\lVert y^{k+1} - y^*(x^k) \rVert^2] + \nn \\
    &\quad \quad \alpha_{b,k} B^2 + (2L_y^2 \alpha_{b,k}^2 + \frac{L_F \alpha_{b,k}^2}{2})\tilde{\sigma}_f^2 - \mathbb E[\lVert y^k - y^*(x^k)\rVert^2]. \nn
\end{align}
We substitute the result of Lemma \ref{lem:lemma_g_var} for the expression $\mathbb E[\lVert y^{k+1} - y^*(x^k) \rVert^2]$, 
\begin{align}
    E[V^{k+1}] &\leq \mathbb E[V^k] - \frac{\alpha_{l,k}}{2} \mathbb E[\lVert \nabla F(x^k) \rVert^2] +  [(\alpha_{b,k}L_f^2 + 2)( \frac{\gamma_{b,k}}{C_k} -\frac{2 \mu_g L_g}{\mu_g + L_g} \beta_{b,k})^T-1] \mathbb E[\lVert y^{k} - y^*(x^k) \rVert^2] + \nn \\
    & \quad \quad + L_f^2 T \sigma_g^2 \alpha_{b,k} \beta_{b,k}^2 + 2 T \sigma_g^2 \beta_{b,k}^2 + \alpha_{b,k} B^2 + [2L_y^2 + \frac{L_F}{2}] \alpha_{b,k}^2 \tilde{\sigma}_f^2 \nn \\ 
    &\stackrel{(a)}{\leq}  E[V^k] - \frac{\alpha_{l,k}}{2} \mathbb E[\lVert \nabla F(x^k) \rVert^2] + L_f^2 T \sigma_g^2 \alpha_{b,k} \beta_{b,k}^2 + 2 T \sigma_g^2 \beta_{b,k}^2 + \alpha_{b,k} B^2 + [2L_y^2 + \frac{L_F}{2}] \alpha_{b,k}^2 \tilde{\sigma}_f^2, \nn 
\end{align}
where in (a) we have chosen $T \geq \max \{ \frac{\log(\alpha_{b,0}L_f^2 + 2)}{-\log(1-\frac{2 \mu_g L_g}{\mu_g + L_g} \beta_{b,K-1})}, \frac{\log(\alpha_{b,0}^2 L_f^2 + 2)}{-\log(2pL_g - \frac{2\mu_gL_g}{\mu_g + L_g})\beta_{b,0}} \}$. This ensures that $T \geq \frac{\log(\alpha_{b,0} L_f^2 + 2)}{-\log(\frac{\beta_{b,k}}{C_k}-\frac{2\mu_g L_g}{\mu_g + L_g}\beta_{b,k})}, \forall k$, which guarantees that $(\alpha_{b,k} L_f^2 + 2)(\frac{\beta_{b,k}}{C_k} - \frac{2\mu_g L_g}{\mu_g + L_g} \beta_{b,k})^T - 1 \leq 0$. Now, rearrange and use the fact that $\alpha_{l,K-1} \leq \alpha_{l,k}$, 
\begin{align}
    \alpha_{l,K-1} \mathbb E[\lVert \nabla F(x^k) \rVert^2] \leq 2 \mathbb E[V^k] - 2 \mathbb E[V^{k+1}] + 2 \alpha_{b,k} \beta_{b,k}^2 T L_f^2 \sigma_g^2 + 4 \beta_{b,k}^2 T \sigma_g^2 + 2 \alpha_{b,k} B^2 + (4L_y^2 + L_F)\alpha_{b,k}^2 \tilde{\sigma}_f^2 \nn.
\end{align}
Sum over $k=0$ to $k = K-1$:
\begin{align}
    \frac{1}{K} \sum_{k=0}^{K-1} \mathbb E[\lVert \nabla F(x^k) \rVert^2] &\leq \frac{2 V^0}{\alpha_{l,K-1} K} + \frac{2TL_f^2 \sigma_g^2}{\alpha_{l,K-1}K} \sum_{k=0}^{K-1} \alpha_{b,k} \beta_{b,k}^2 + \frac{4 T \sigma_g^2 }{\alpha_{l,K-1} K} \sum_{k=0}^{K-1} \beta_{b,k}^2 + \nn \\
    &\quad \quad \frac{2B^2}{\alpha_{l,K-1}K}\sum_{k=0}^{K-1} \alpha_{b,k} + \frac{(4L_y^2 + L_F)\tilde{\sigma}_f^2}{\alpha_{l,K-1}K} \sum_{k=0}^{K-1} \alpha_{b,k}^2 \nn  \\
    &\stackrel{(b)}{=} \frac{2V^0}{\alpha_{l,K-1}K} + \frac{2TL_f^2 \sigma_g^2 \alpha_{b,0}\beta_{b,0}^2}{\alpha_{l,K-1}K} \sum_{k=0}^{K-1} \frac{1}{(k+1)^{3/2}} + \frac{4T\sigma_g^2 \beta_{b,0}^2}{\alpha_{l,K-1}K} \sum_{k=0}^{K-1} \frac{1}{k+1} + \nn \\
    &\quad \quad \frac{2B^2 \alpha_{b,0}}{\alpha_{l,K-1}K}\sum_{k=0}^{K-1}\frac{1}{(k+1)^{0.5}}+ \frac{(4L_y^2 +L_F)\tilde{\sigma}_f^2 \alpha_{b,0}^2}{\alpha_{l,K-1}K} \sum_{k=0}^{K-1} \frac{1}{k+1}\nn \\
    &\stackrel{(c)}{\leq} \frac{2V^0}{\alpha_{l,K-1}K} + \frac{2TL_f^2 \sigma_g^2 \alpha_{b,0} \beta_{b,0}^2}{\alpha_{b,0}\sqrt{K}} +\frac{4T\sigma_g^2 \beta_{b,0}^2 \log(K)}{\alpha_{b,0}\sqrt{K}} + \nn \\ 
    &\quad \quad + \frac{2B^2 \alpha_{b,0}}{\alpha_{l,0}} + \frac{(4L_y^2 + L_F)\tilde{\sigma}_f^2 \alpha_{b,0}^2 \log(K)}{\alpha_{l,0} \sqrt{K}},\nn
\end{align}
where in (b) we have substituted $\alpha_{b,k} = \frac{\alpha_{b,0}}{\sqrt{k+1}}$ and $\beta_{b,k} = \frac{\beta_{b,0}}{\sqrt{k+1}}$, and (c) is by $\sum_{k=0}^{K-1}\frac{1}{(k+1)^{3/2}}\leq 2$, $\sum_{k=0}^{K-1}\frac{1}{k+1} \leq \log(K)$, and $\sum_{k=0}^{K-1} \frac{1}{(k+1)^{1/2}} \leq \sqrt{K}$. Similar to the proof of Theorem \ref{thm:bilevel}, we choose $\alpha_{l,0} \sim \alpha_{b,0} \sim \mathcal{O}(\kappa^{-3})$, $T \sim \mathcal{O}(\kappa)$, and $\beta_{b,0} \sim \mathcal{O}(\kappa^{-1})$ to obtain $ \frac{1}{K} \sum_{k=0}^{K-1} \mathbb E[\lVert \nabla F(x^k) \rVert^2] \leq \mathcal{O}(\frac{\kappa^3}{\sqrt{K}} + \frac{\kappa^2 \log(K)}{\sqrt{K}})$. 
\end{proof}
\section{Additional Experiment Results} \label{sec:exp_app}
\setlength{\intextsep}{12pt}%
\setlength{\columnsep}{12pt}%

This section is organized as follows. First,  we discuss synthetic quadratics experiments. Second, we provide more details on the sensitivity of the algorithms to the choices of $\delta$ in \eqref{eq:armijo}, on the reset procedure, and on the search cost of BiSLS. Third, we compare the empirical performance of 1-sample vs 2-samples implementations of our algorithms for single-level convex and bi-level optimization. Some additional results for hyper-representation learning and data distillation experiments are also presented. We run $5$ independent runs for all our experiments.
\subsection{Synthetic Quadratics}
The experiments on quadratic functions are adapted from \citet{loizou2021stochastic}. The training objective is as follows:
$$f(x) = \frac{1}{2} \, (x-x_1^*)^T H_1 (x-x_1^*) \,+\, \frac{1}{2} \, (x-x_2^*)^T H_2 (x-x_2^*),$$
where $H_{i}$ ($i=1,2$) are positive definite. The optimal solutions $x_i^*$ ($i=1,2$) are generated randomly from a standard normal distribution. Specifically, $H_i$ is defined as follows: 
$$H_i = O^T \cdot \text{Diag}(\log(1+\lambda_i)) \cdot O, \quad i = 1,2,$$ 
where $O$ and $\lambda_i$ are taken from the spectral decomposition of $P^T P$, and $P$ is generated from the standard normal distribution. Figure \ref{fig:trajS} shows the convergence of various algorithms with different starting points. Interestingly, both \spsb with either 1 sample or 2 samples (1 sample for computing the gradient and the other for computing the step size) converge to the optimal solution (labelled with a star).
\begin{figure}[t!]
  \centering
  \includegraphics[width=0.65\linewidth]{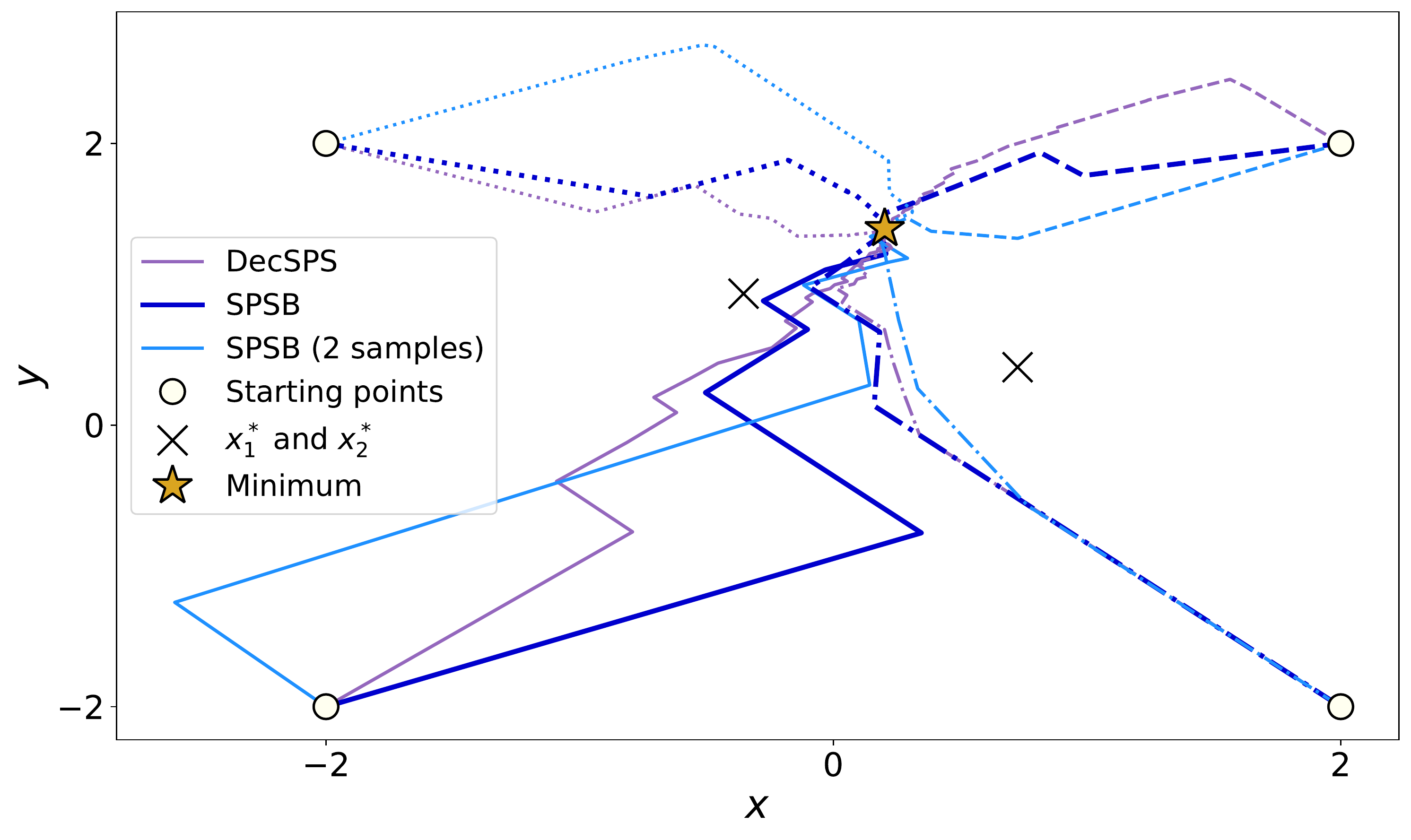}
  \caption{Iterate trajectories of different starting points for the synthetic quadratic experiments.}
  \label{fig:trajS}
\end{figure}
\subsection{Sensitivity of $\delta$, reset, and search cost}
In this section, we discuss the effects of $\delta$ in \eqref{eq:armijo}, $\eta$ in Algorithm \ref{reset}, and comment on the search cost of the options in the Reset Algorithm \ref{reset}. Recall that the line-search condition \eqref{eq:line_search} assumes that we can find a largest $\gamma_k \leq \gamma_{b,0}$ to satisfy it. However, in practice, we apply a backtracking procedure, i.e. $\gamma_k = \gamma_k * w, 0<w<1$, until $\gamma_{k}$ satisfies \eqref{eq:line_search}. Therefore, the found learning rate is not guaranteed to be the largest. Nonetheless, we assume that $\gamma_k$ is the largest to simplify our analysis given above (similar arguments apply to line-search at both upper and lower-level in the bi-level optimization). The experiments in this section are based on hyper-representation learning \cite{sow2022convergence}. In this case, the objective of the induced bilevel-optimization problem can be written as:
\begin{align}
    &\min_w F(w) = \frac{1}{2 D_{X_1}} \lVert \tilde{f}(X_1;w) c^*(w)  - Y_1 \rVert^2 \nn \\
    &s.t. \quad c^*(w) = \argmin_c \frac{1}{2D_{X_2}} \lVert \tilde f(X_2;w) c - Y_2 \rVert^2 + \frac{\lambda}{2} \lVert c\rVert^2\nn\,,
\end{align}
where $(X_1,Y_1)$ and $(X_2, Y_2)$ are validation and training data sets with sizes $D_{X_1}$ and $D_{X_2}$, respectively; $\tilde{f}(\cdot;w)$ are the embedding layers of the model parameterized by $w$; and, $c$ is the classification layer. Moreover, we use conjugate gradient methods (CG) \cite{grazzi2020iteration,grazzi2021convergence} to solve the linear system when computing the hypergradient for hyper-representation learning experiments.  
\paragraph{Reset} While Algorithm \ref{reset} (reset) can be applied to both upper and lower-level problems, we focus our discussions here on the upper-level learning rate ($\alpha_k$). This is because we empirically find it to be more critical for the convergence performance (see Figure \ref{fig:hyper_train_spe}). As shown in Algorithm \ref{reset}, reset has $3$ options. Options $1$, $2$, and $3$ search starting from $\alpha_{b,0}$, $\alpha_{k-1}$, and $\eta \alpha_{k-1}$, at iteration $k$ respectively. Option $1$ has the highest search cost as it always starts from the same initial upper bound ($\alpha_{b,0}$). Option $2$ ensures the monotonicity of the learning rate due to $\alpha_k \leq \alpha_{b,k} = \alpha_{k-1}$. Option $3$ chooses the search starting point at iteration $k$ ($\alpha_{b,k}$) by multiplying the previous learning rate ($\alpha_{k-1}$) by a factor $\eta \geq 1$. As in the single-level convex case where monotonicity in the step size can potentially lead to slow convergence (see Figure \ref{fig:demo_2}), we again observe that monotonicity in the upper learning rate (i.e. option 2) leads to poorer performance when compared against options 1 or 3 as shown in Figure \ref{fig:opt}. Finally, we compare the performance of different $\eta$s in option $3$ (note that $\eta=1$ in option 3 is equivalent to option 2). We observe in Figure \ref{fig:eta} that different $\eta$s perform equally well. This shows the robustness of our algorithm to the choice of $\eta$. As mentioned previously, the choice of option $3$ over option $1$ are due to $2$ reasons: (a) reduced search cost; (b) provides an overall non-increasing and non-monotonic trend of upper bound $\alpha_{b,k}$. We discuss search cost of different $\eta$s in option 3 below. 

\paragraph{Search Cost} 
Based on the results in Figure \ref{fig:eat_costupper_all}, we observe the use of option 3 in reset can significantly reduce the search cost for both BiSLS-Adam and BiSLS-SGD. Figure \ref{fig:opt} further suggests that options 1 and 3 have nearly the same performance. Therefore, option 3 is an efficient algorithm that maintains good performance while reducing computation cost. Option 2 has the lowest search cost ($\sim 4$ rounds per iteration). However, its performance is not as good as option 1 or 3 as observed in Figure \ref{fig:opt}. Moreover, the average lower-level search cost is only $1$ round per iteration when option 1 is used (see Figure \ref{fig:search_cost_beta}). 

\paragraph{Sensitivity on $\delta$}
As mentioned in Sec \ref{sec:contributions}, due to the stochastic error in hypergradient computation, further complicated by the approximation error of $y^*(x)$ (see \eqref{eq:armijo}), a learning rate is not guaranteed to be found in the bi-level case. Specifically, this is in contrast to the single-level convex problems.
To avoid this, we introduce  in \eqref{eq:armijo} a $\delta$ slack to give some tolerance to such errors. Here, we give a thorough investigation of the effects of $\delta$ on performance. We vary its magnitude across $6$ orders for both reset options $1$ and $3$ (see Algorithm \ref{reset} and discussions on reset above). We observe that despite a large difference on the magnitudes of $\delta$, they all share very similar performance for both BiSLS-SGD and BiSLS-Adam: see Figures \ref{fig:delta_opt1} and \ref{fig:delta_opt3}. We summarize the key fins in this section as follows: \textbf{\textit{\ding{172}} The option 3 in reset has good empirical performance (outperforms option 2) and is an effective way to reduce search cost (Figure \ref{fig:opt}, \ref{fig:search_cost}); \textit{\ding{173}} BiSLS is highly robust to different choices of $\eta$ in option 3 and $\delta$ in \eqref{eq:armijo} (Figure \ref{fig:eta}, \ref{fig:delta_opt1}, \ref{fig:delta_opt3}).}
\begin{figure*}[t!] 
    \centering
    \begin{subfigure}{0.49\textwidth}
        \includegraphics[width=1.0\linewidth]{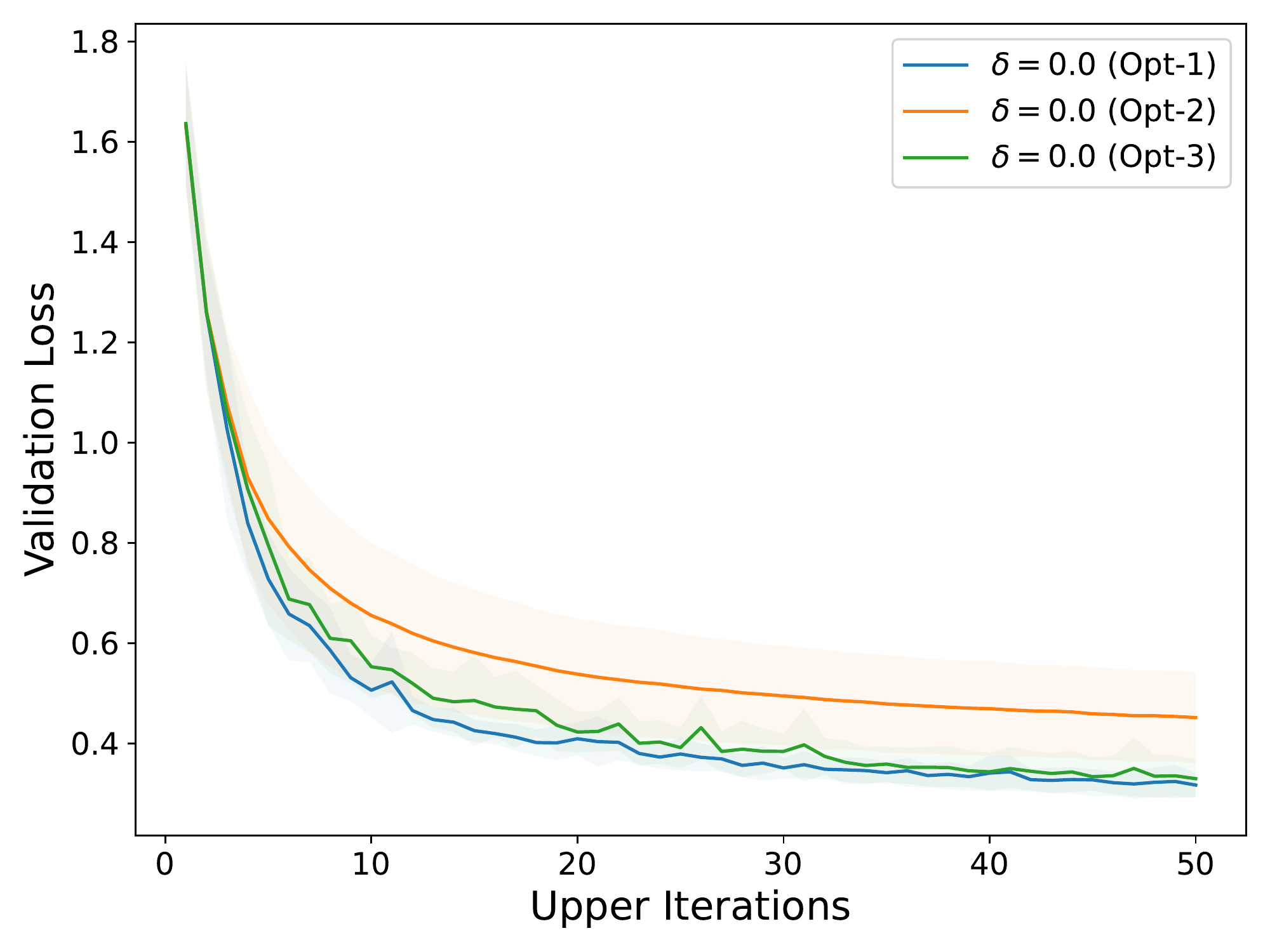}
        \caption{Different Opt (BiSLS-SGD)}
        \label{fig:mnistSPS}
    \end{subfigure}
    \hfill
    \begin{subfigure}{0.49\textwidth}
        \includegraphics[width=1.0\linewidth]{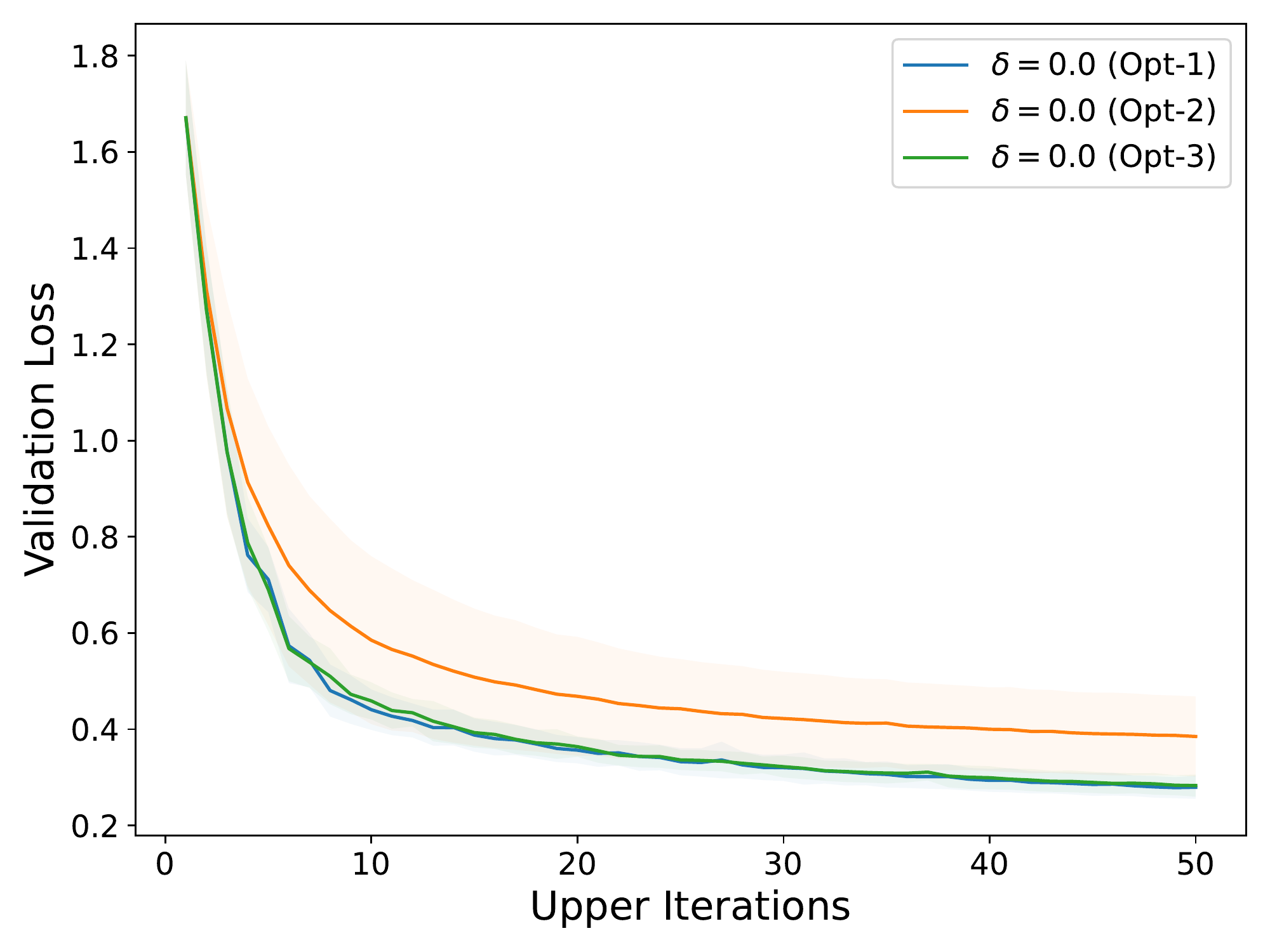}
  \caption{Different Opt (BiSLS-Adam)}
    \end{subfigure}
  \caption{Validation loss against iterations with search options $1$, $2$, and $3$ for the upper-level learning rate. The results for BiSLS-SGD and BiSLS-Adam are in (a) and (b), respectively. For the lower-level search, we fix it option 1 with $\beta_{b,0} = 100$. Results are based on hyper-representation learning.}
  \label{fig:opt}
\end{figure*}

\begin{figure*}[t!]
    \centering
    \begin{subfigure}{0.49\textwidth}
        \includegraphics[width=1.0\linewidth]{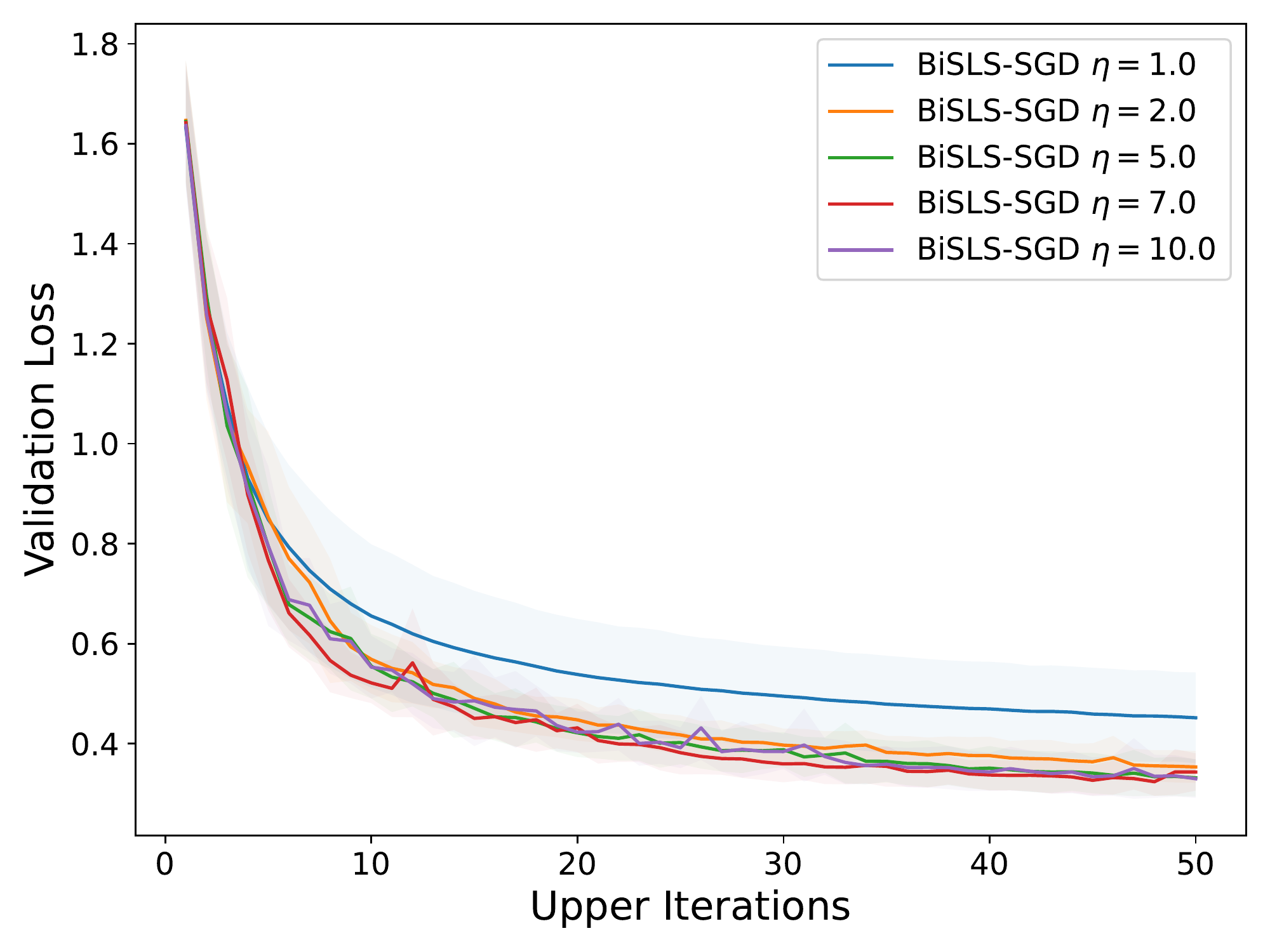}
        \caption{Different $\eta$s (BiSLS-SGD)}
        \label{fig:mnistSPS}
    \end{subfigure}
    \hfill
    \begin{subfigure}{0.49\textwidth}
        \includegraphics[width=1.0\linewidth]{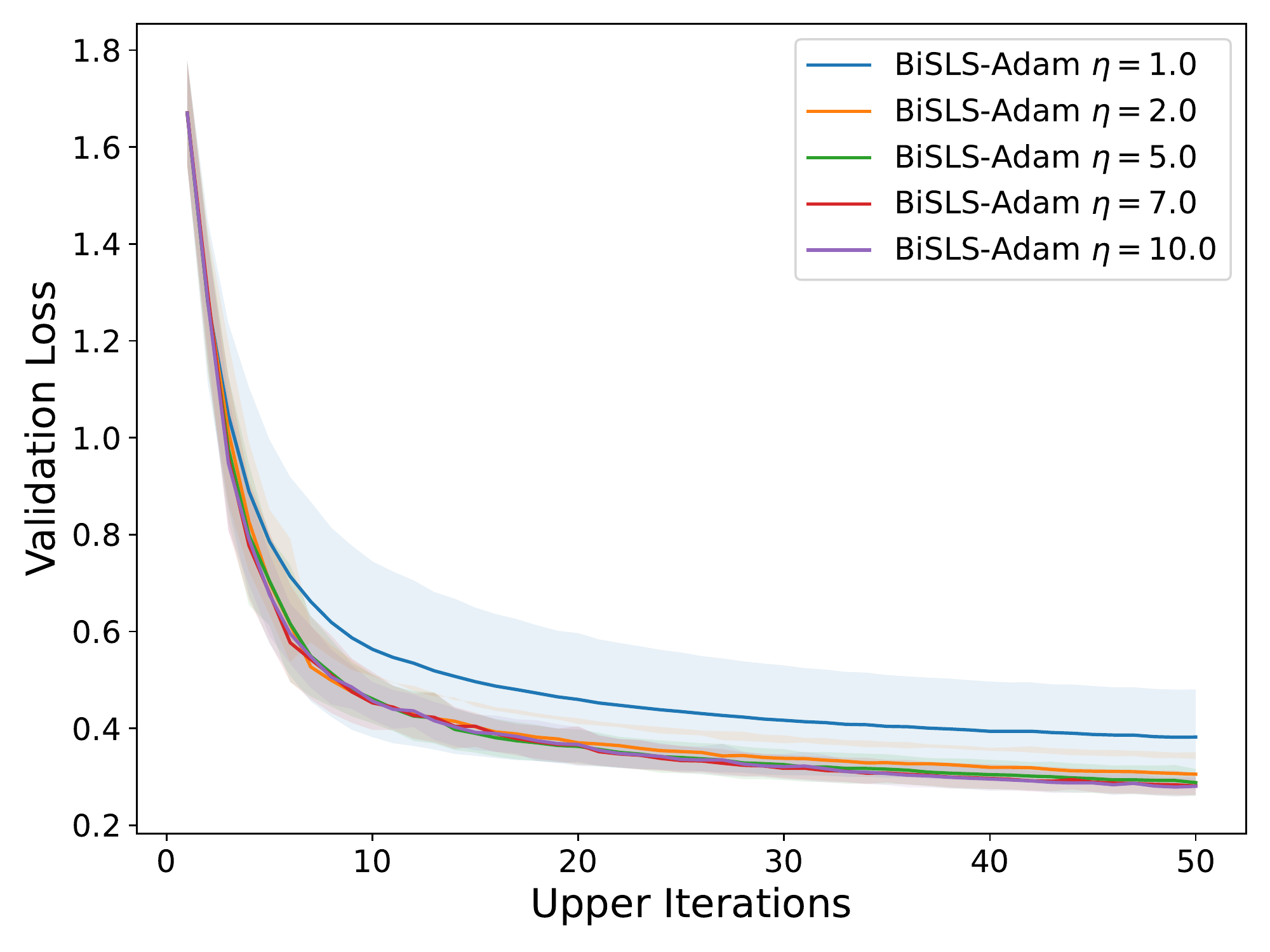}
  \caption{Different $\eta$s (BiSLS-Adam)}
    \end{subfigure}
  \caption{Validation loss against iterations for different $\eta$s based on reset option 3. Results for BiSLS-SGD are given in (a) and for BiSLS-Adam are given in (b). Note that $\eta = 1$ in reset option 3 is equivalent to reset option 2. For the lower-level search, we fix it option 1 with $\beta_{b,0} = 100$. Results are based on hyper-representation learning.}
  \label{fig:eta}
\end{figure*}

\begin{figure*}[t!]
    \centering
    \includegraphics[width=0.6\linewidth]{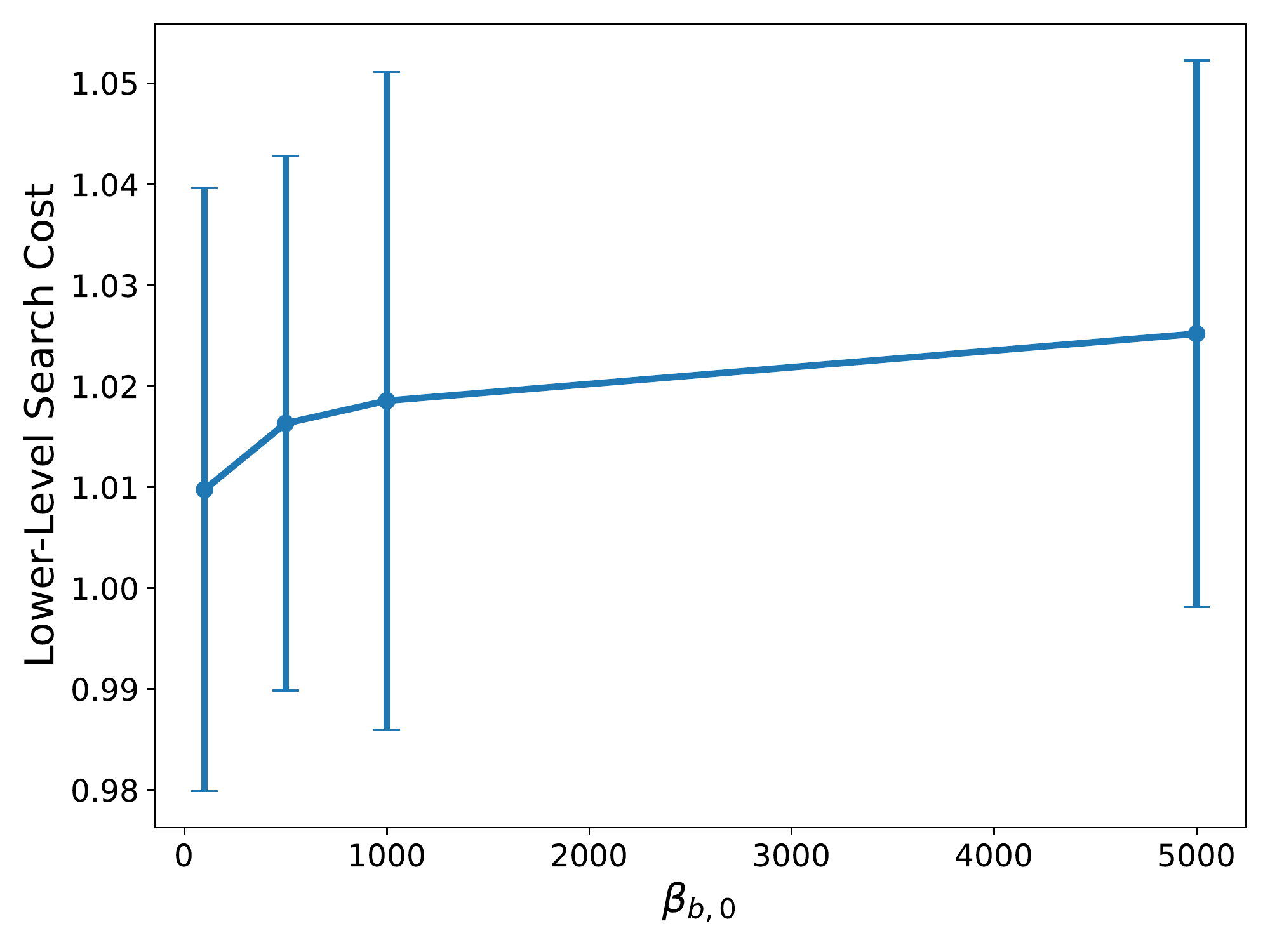}
  \caption{Lower-level search cost measured in the same way as upper-level against different lower-level search starting points ($\beta_{b,0}$). The lower-level search is done with option 1 (see above for discussions about these options).}
    \label{fig:search_cost_beta}
\end{figure*}


\begin{figure*}[t!]
    \centering
    \begin{subfigure}{0.49\textwidth}
        \includegraphics[width=1.0\linewidth]{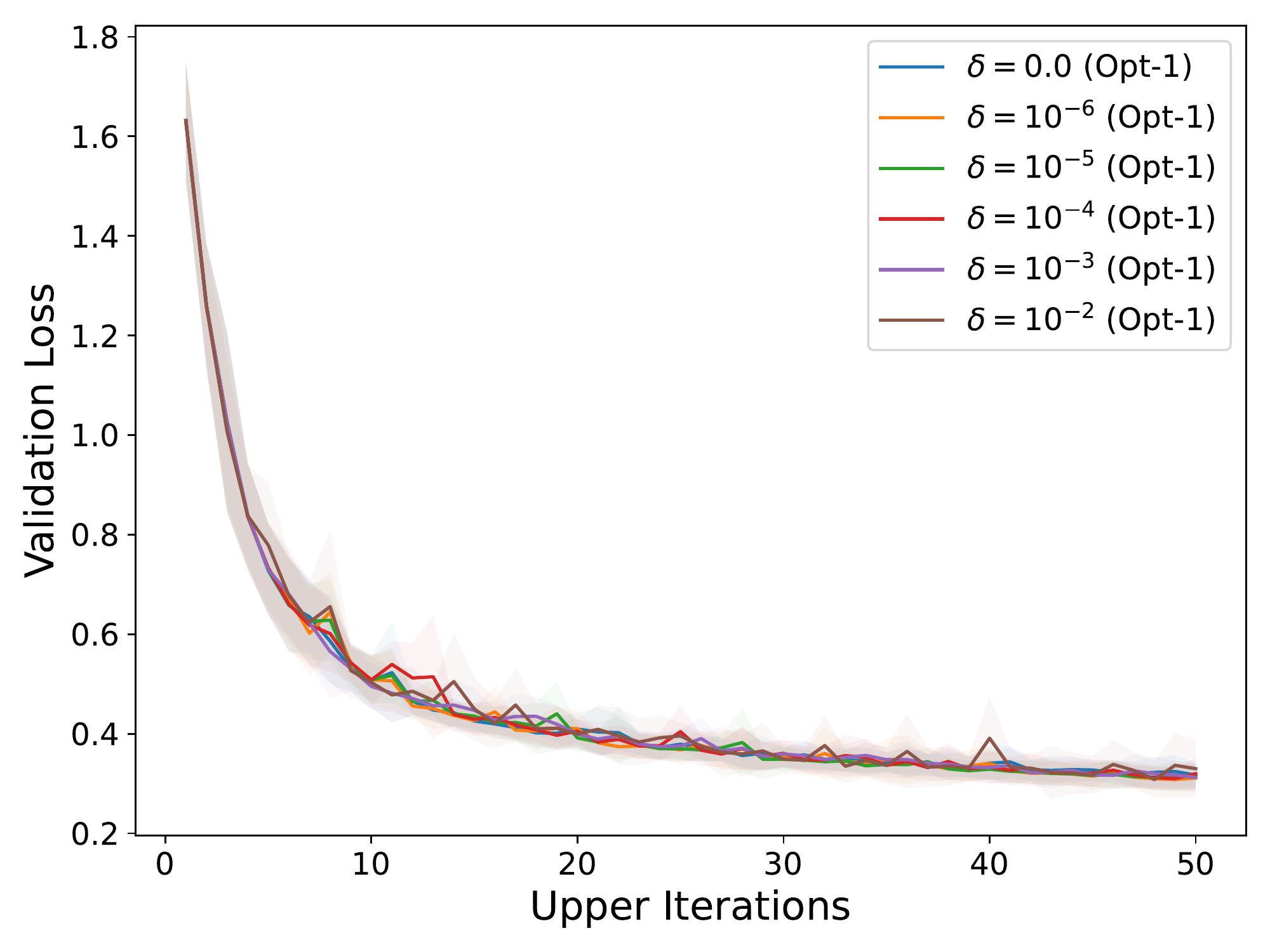}
        \caption{Different $\delta$s/Opt-1 (BiSLS-SGD)}
    \end{subfigure}
    \hfill
    \begin{subfigure}{0.49\textwidth}
        \includegraphics[width=1.0\linewidth]{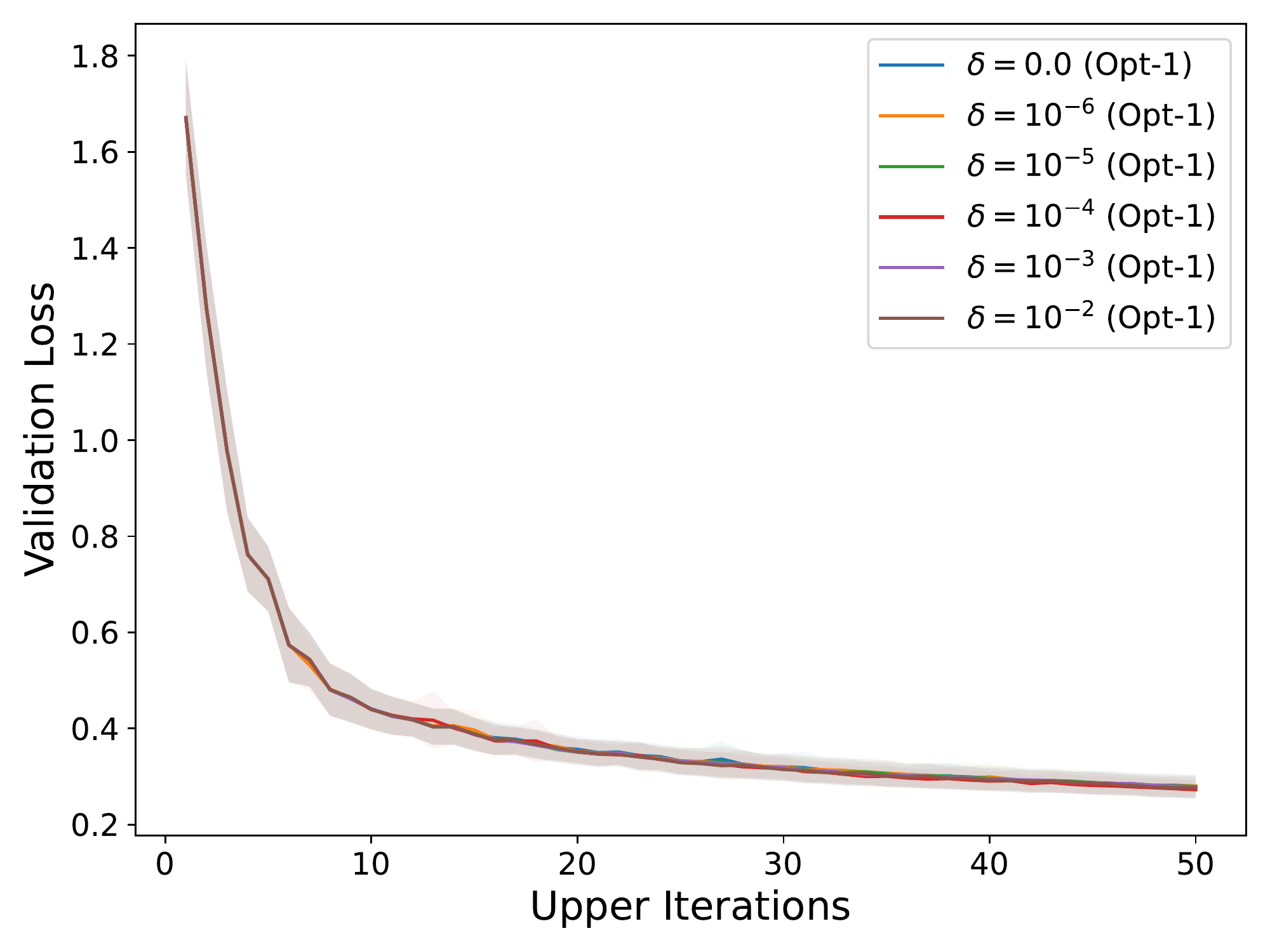}
  \caption{Different $\delta$s/Opt-1 (BiSLS-Adam)}
    \end{subfigure}
  \caption{Validation loss against iterations for different $\delta$s based on reset option 1. Results for BiSLS-SGD (a) and for BiSLS-Adam (b). For the lower-level search, we fix it option 1 with $\beta_{b,0} = 100$. Results are based on hyper-representation learning.}
    \label{fig:delta_opt1}
\end{figure*}

\begin{figure*}[t!]
    \centering
    \begin{subfigure}{0.49\textwidth}
        \includegraphics[width=1.0\linewidth]{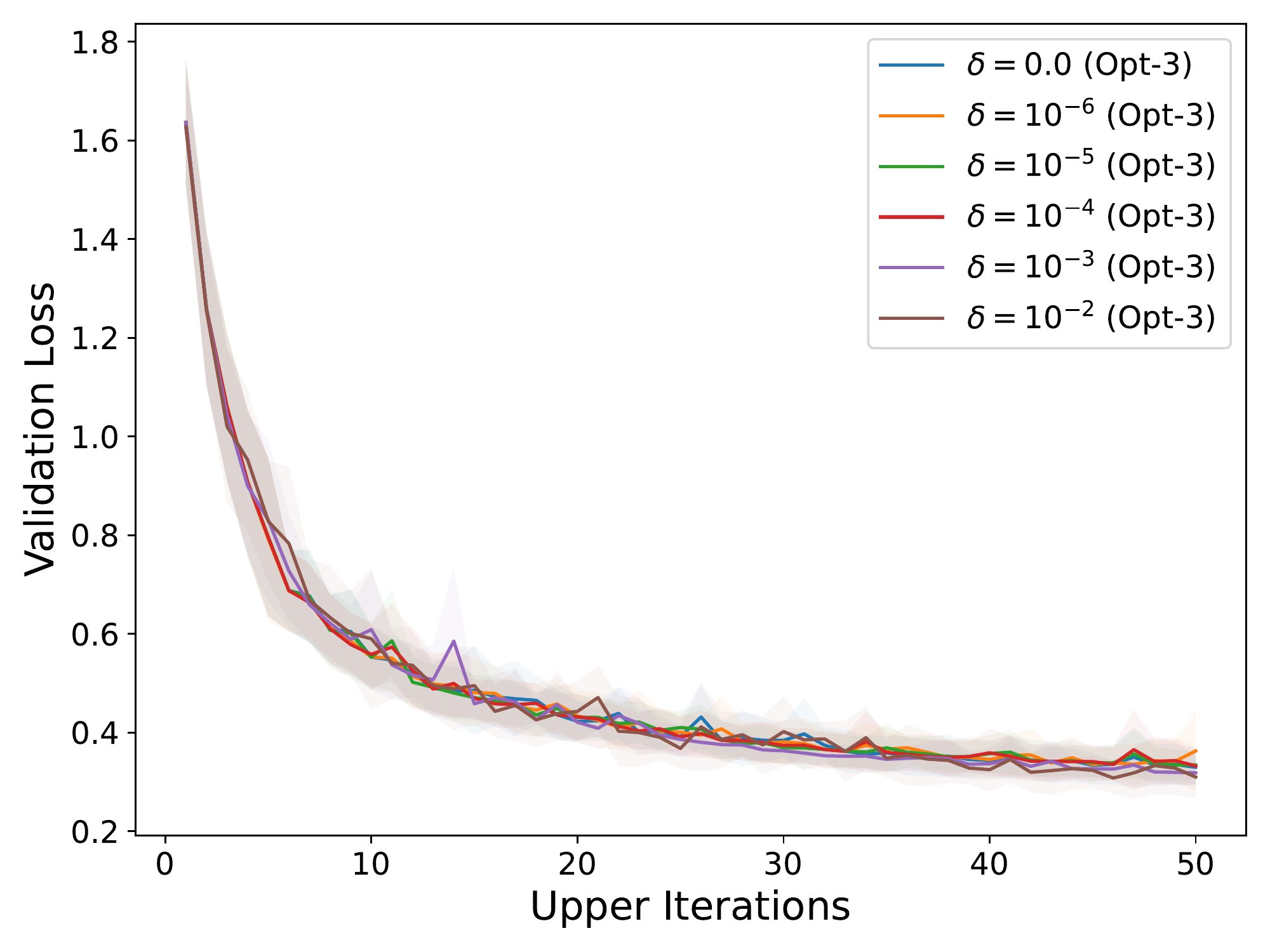}
        \caption{Different $\delta$s/Opt-3 (BiSLS-SGD)}
    \end{subfigure}
    \hfill
    \begin{subfigure}{0.49\textwidth}
        \includegraphics[width=1.0\linewidth]{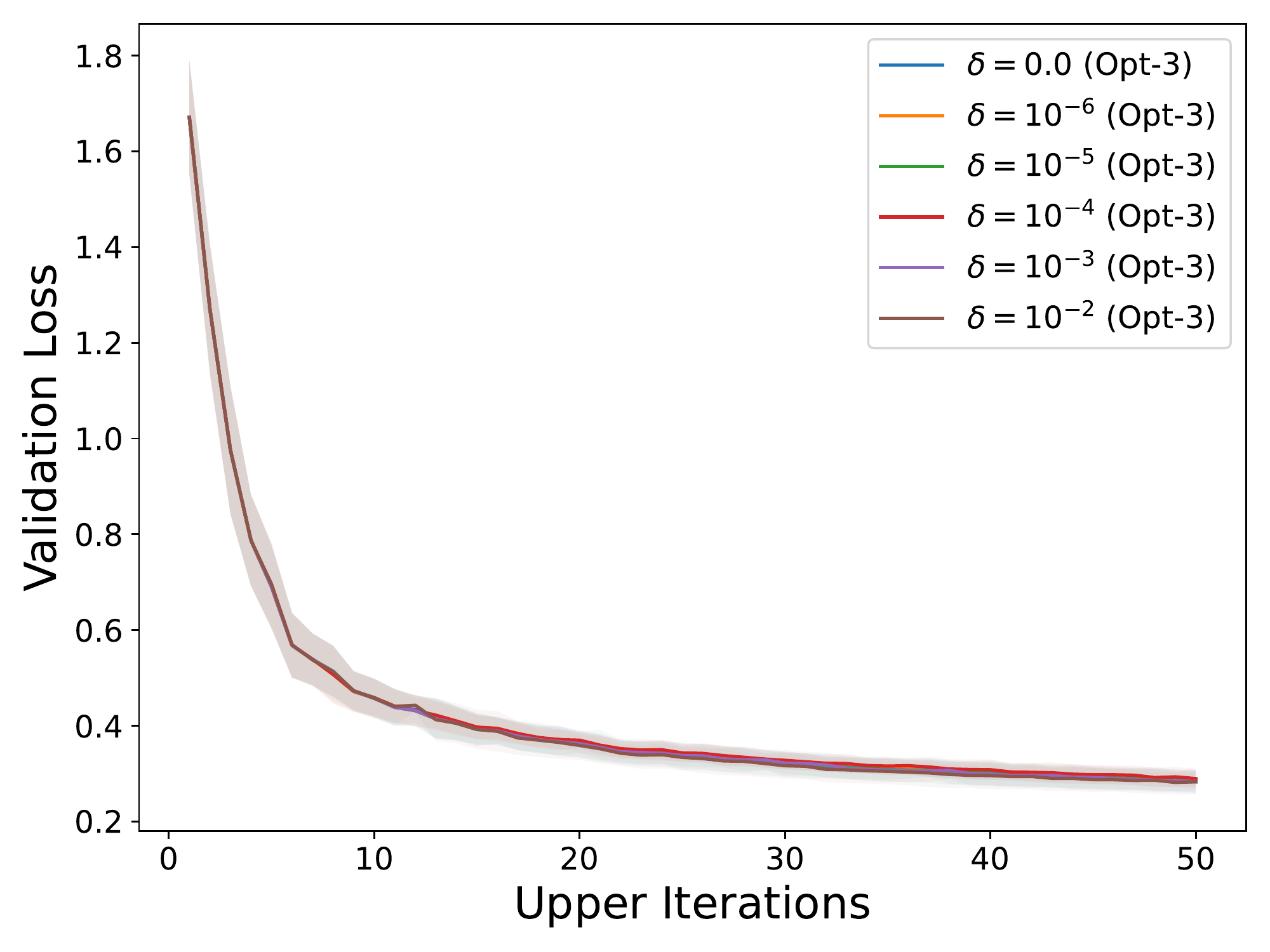}
  \caption{Different $\delta$s/Opt-3 (BiSLS-Adam)}
    \end{subfigure}
  \caption{Validation loss against iterations for different $\delta$s based on reset option 3 ($\eta = 10$). Results for BiSLS-SGD (a) and for BiSLS-Adam (b). For the lower-level search, we fix it option 1 with $\beta_{b,0} = 100$. Results are based on hyper-representation learning.}
  \label{fig:delta_opt3}
\end{figure*}

\subsection{Data distillation objective and additional results}
We let $\mathcal{L}_S(w)$ denote the loss evaluated on  dataset $S$ with model weights $w$. The objective of \emph{data distillation} can be expressed as a BO problem as follows: 
$$
D^* \,=\, \argmin_D \mathcal{L}_{\tilde{V}}(w^*(D)) \quad \text{s.t.} \quad w^*(D) \, = \, \argmin_w \mathcal{L}_D(w),
$$
where $\tilde{V}$ is of the same size as $D$ and subsampled from the entire (original) dataset $V$. The solution $D^*$ is the distilled data, e.g. $9$ MNIST digits each corresponding to a different label. In figure \ref{fig:mnistSPS}, we show the performance of BiSPS for different values of $\alpha_{b,0}$ in comparison with BiSLS-SGD, and observe that BiSLS-SGD has better performance. In \ref{fig:mnist50}, we show the results when we increase the number of lower-level iterations (T) from 20 to 50. As observed for $T = 20$ (in Figure \ref{fig:datadist}), BiSLS-SGD here also outperforms a fine-tuned Adam or SGD. 

\begin{figure*}[!h] 
    \centering
    \begin{subfigure}{0.49\textwidth}
        \includegraphics[width=1.0\linewidth]{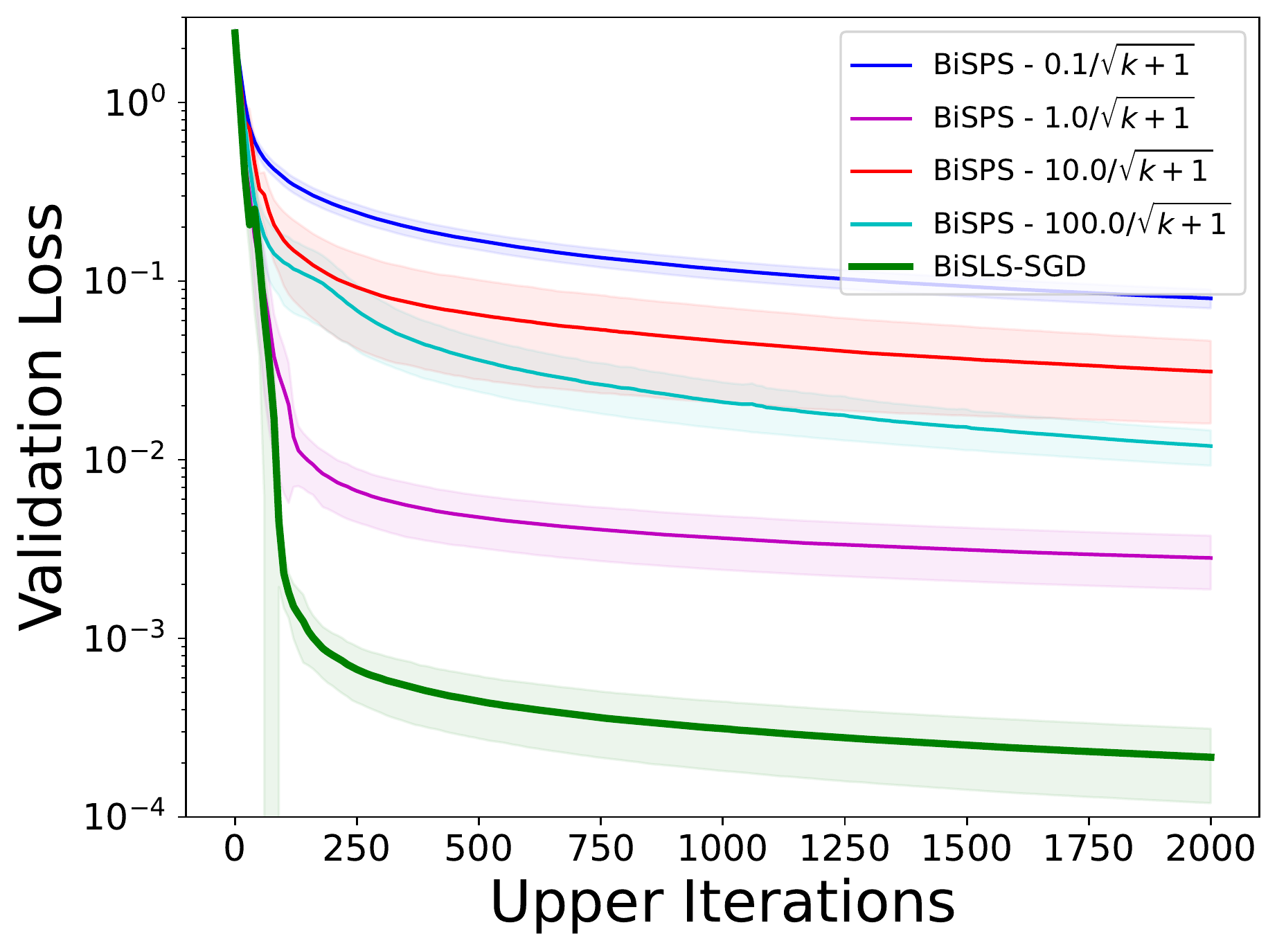}
        \caption{Different $\alpha_{b,0}$s (BiSPS)}
        \label{fig:mnistSPS}
    \end{subfigure}
    \hfill
    \begin{subfigure}{0.49\textwidth}
        \includegraphics[width=1.0\linewidth]{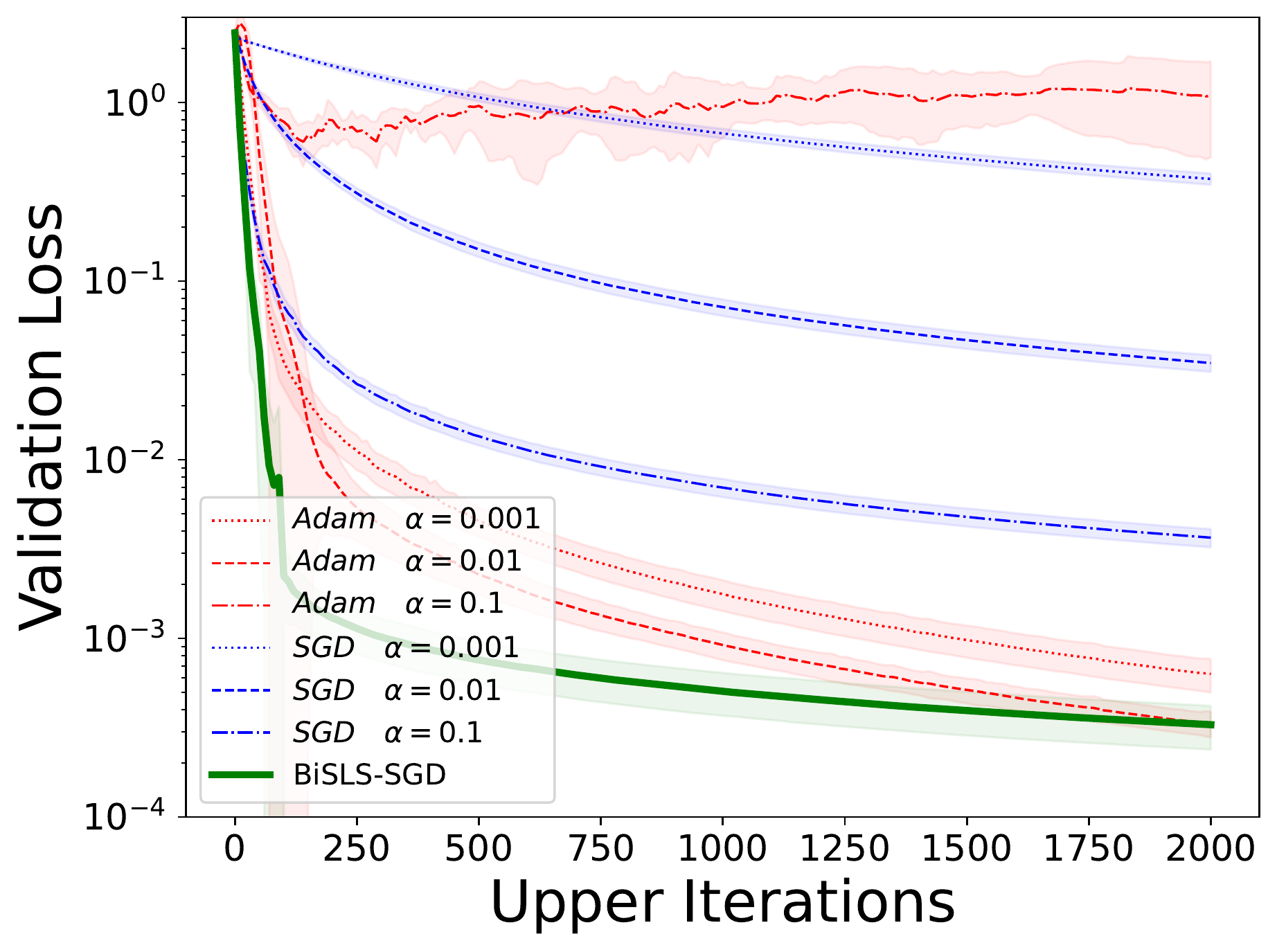}
  \caption{T = 50, Identity}
  \label{fig:mnist50}
    \end{subfigure}
    \caption{Validation loss against iterations. (a) Comparison between BiSPS with different $\alpha_{b,0}$s and BiSLS-SGD. (b) Comparison between BiSLS-SGD to fine-tuned Adam/SGD ($\beta_{k}$ fixed at $10^{-4}$). Inverse Hessian in \eqref{eq:hyperg} treated as the Identity \cite{lorraine2020optimizing} when computing the hypergradient. Recall that T is the total number of lower-level iterations and we have shown the results for $T = 20$ in Figure \ref{fig:data_distllation_identity}.}
\end{figure*}

\subsection{1-sample or 2-samples versions of algorithms for convex and bi-level optimization}
We provide additional results to compare the performance of 1-sample and 2-samples (one for computing the gradient and the other for computing the step size) versions of our algorithms for \spsb and BiSPS used for single-level and bi-level optimization, respectively. In the single-level case (Figure \ref{fig:app_single_level}), we observe that 2-samples \spsb performs just as well as 1-sample \spsb. Interestingly, we observe that their step sizes also follow a similar pattern. That is: an initial increase followed by a regime where $\gamma_k = \frac{f_{i_k}(x^k) - l_{i_k}^*}{c \lVert \nabla f_{i_k}(x^k) \rVert^2}$ is frequently used, and eventually changes to decaying-step SGD. This seems to also match with Theorem \ref{thm:s_conv} where a transition point for \spsb ($k_0 = \max \{ 1, \lceil \gamma_0/\omega\rceil -1 \}, w=\frac{1}{2cL_{\max}}, \gamma_0\geq \frac{1}{\mu}$) is predicted. At the same time, we also note that (perhaps, unsurprisingly) the 2-samples version seems to have a slightly more oscillatory behavior than the 1-sample version as shown in Figure \ref{fig:app_single_level}. \slsb with either 1-sample or 2-samples also result in a similar performance and step size. Overall, despite the requirements of Theorems \ref{thm:convex} and \ref{thm:s_conv} for a 2-samples assumption, the empirical performance of 1-sample and 2-samples for either \spsb or \slsb appears to be very similar. Moving on to the bi-level case, recall that Theorems \ref{thm:bilevel} and \ref{thm:bi_var} require the 2-samples assumption (i.e., $\alpha_k$ independent of $h_f^k$) for the upper-level learning rate. We empirically verify this assumption with both hyper-representation learning and data distillation experiments. For hyper-representation learning experiments in Figure \ref{fig:sample_hyper}, BiSPS with either 1-sample or 2-samples for different values of $\alpha_{b,0}$ show similar performance. In fact, for $\alpha_{b,0} = 0.1$ we even observe that the 2-samples variant outperforms the 1-sample BiSPS. For data distillation experiments in Figure \ref{fig:mnist_sample}, the performances of 1-sample and 2-samples BiSPS are similar to each other when $\alpha_{b,0} = 10.0$ or $\alpha_{b,0} = 50.0$. In general, the performance difference between 1-sample and 2-samples in the single-level or bi-level settings is small.  
\begin{figure}[!h]
    \centering
    \begin{subfigure}{0.49\textwidth}
        \includegraphics[width=1.0\linewidth]{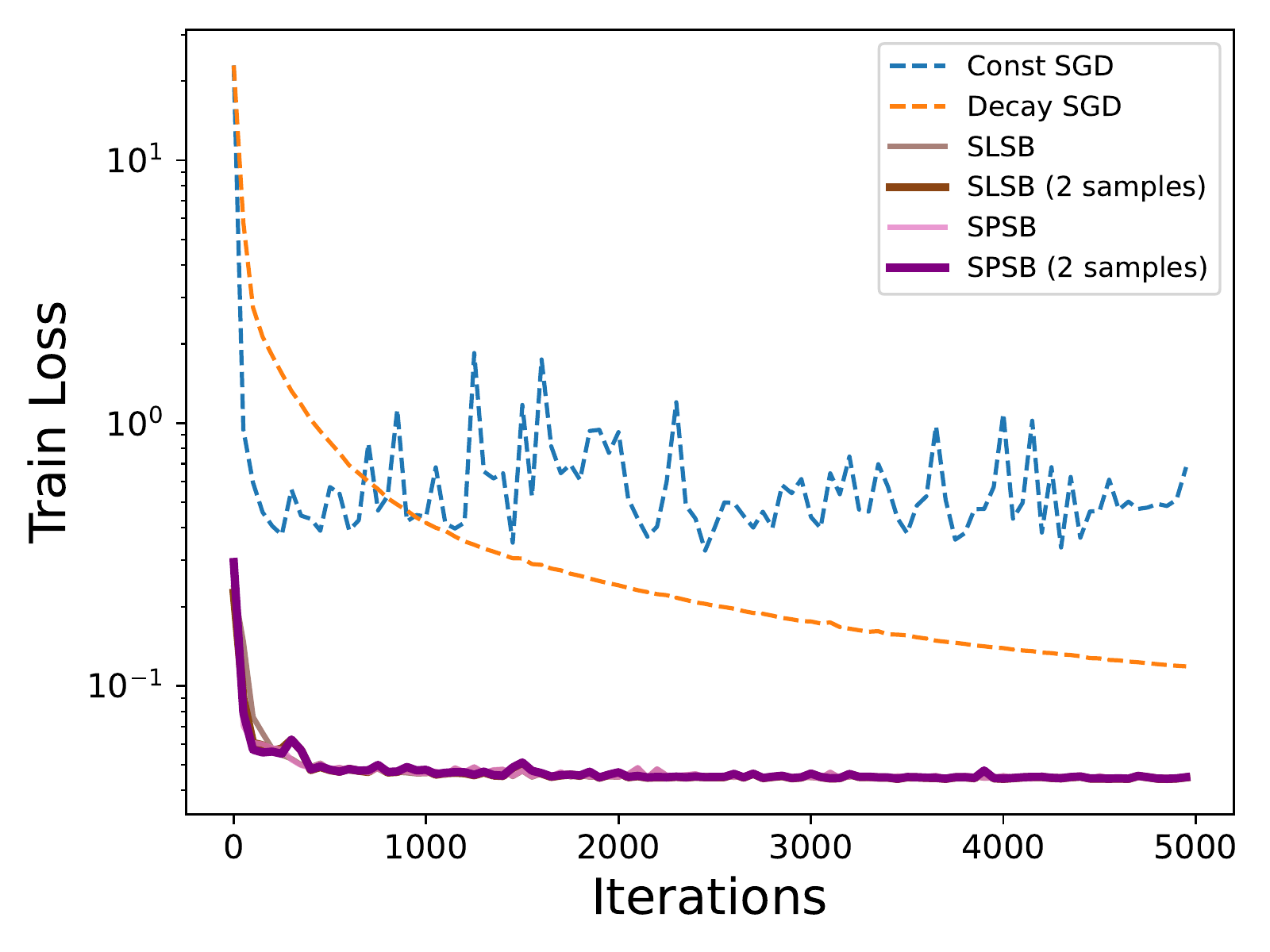}
    \label{fig:train_loss_convex_demo}
    \caption{Train loss}
    \end{subfigure}\hfill
\begin{subfigure}{0.49\textwidth}
 \centering
 \includegraphics[width=1.0\linewidth]{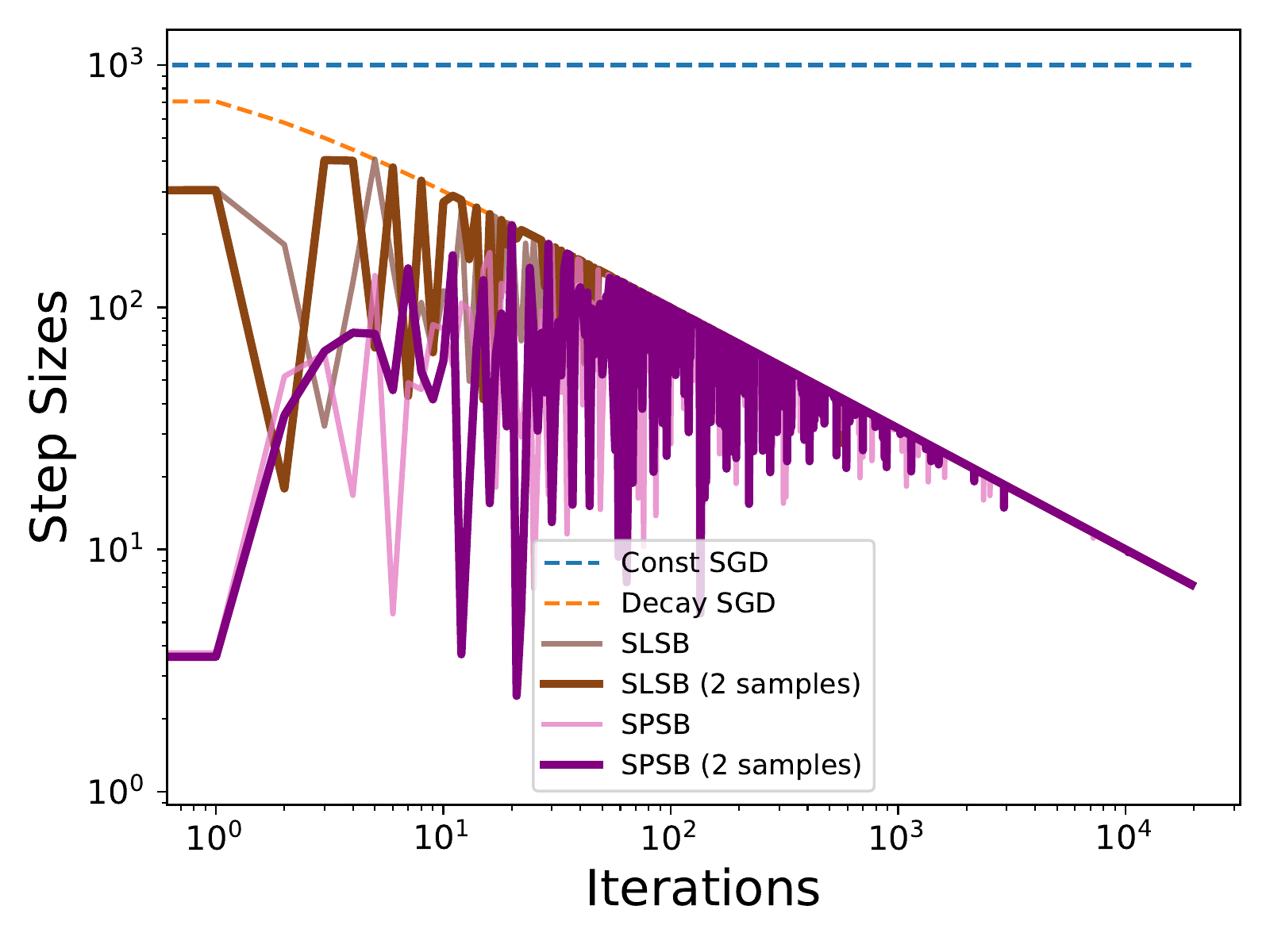}
 \label{fig:step_size_convex_demo}
 \caption{Step size}
\end{subfigure}
\caption{Binary linear classification on w8a dataset using logistic loss \cite{chang2011libsvm}. Train loss (left) and step size (right) against iterations. We choose $\gamma_{b,0} = 1000$ for all algorithms. The upper bound for either \spsb or \slsb decays as $\gamma_{b,k} = \frac{\gamma_{b,0}}{\sqrt{k+1}}$. For decaying-step SGD, the learning rate schedule is $\frac{\gamma_{b,0}}{\sqrt{k+1}}$.}
\label{fig:app_single_level}
\end{figure}

\begin{figure}[!h]
  \centering
  \includegraphics[width=0.75\linewidth]{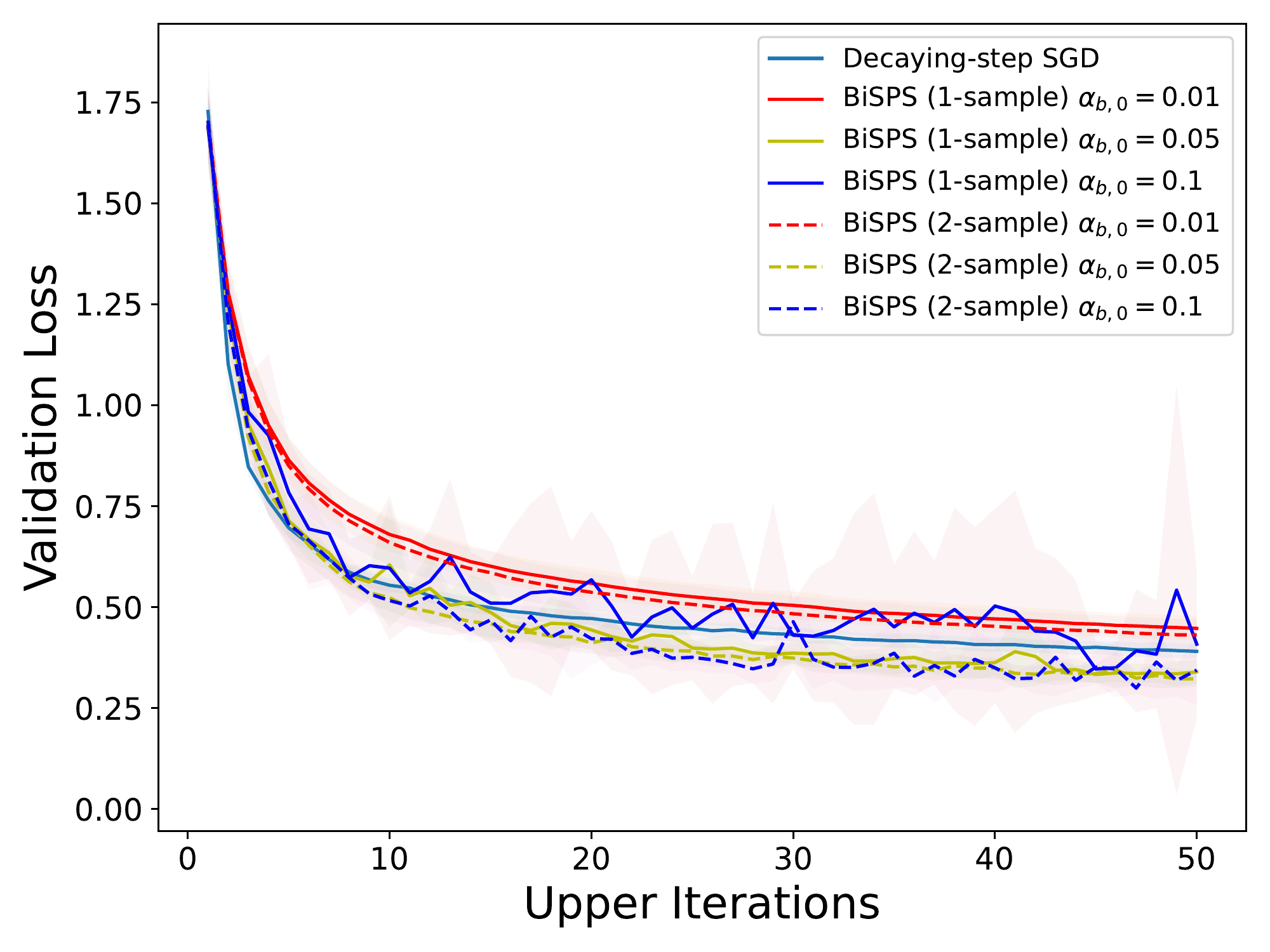}
  \caption{Comparison between BiSPS (2-samples), BiSPS (1-sample) and decaying-step SGD. Experiments are based on hyper-representation learning. For either version of BiSPS, the lower-level learning rate ($\beta_k$) is fixed at $10$. The hypergradient is computed using conjugate gradient \cite{grazzi2020iteration}.}
  \label{fig:sample_hyper}
\end{figure}

\begin{figure}[!h]
  \centering
  \includegraphics[width=0.75\linewidth]{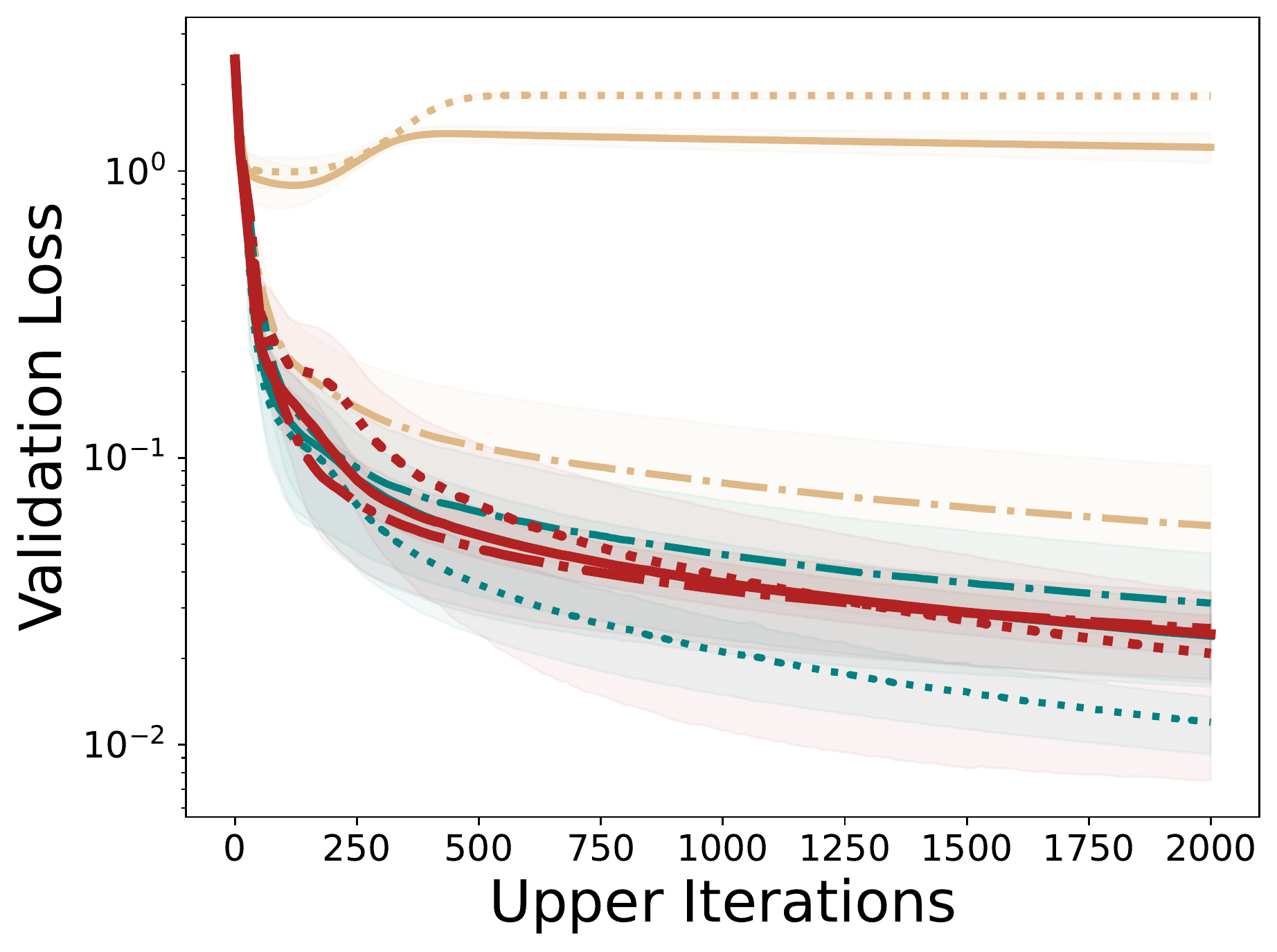}
  \centering
  \includegraphics[width=0.80\linewidth]{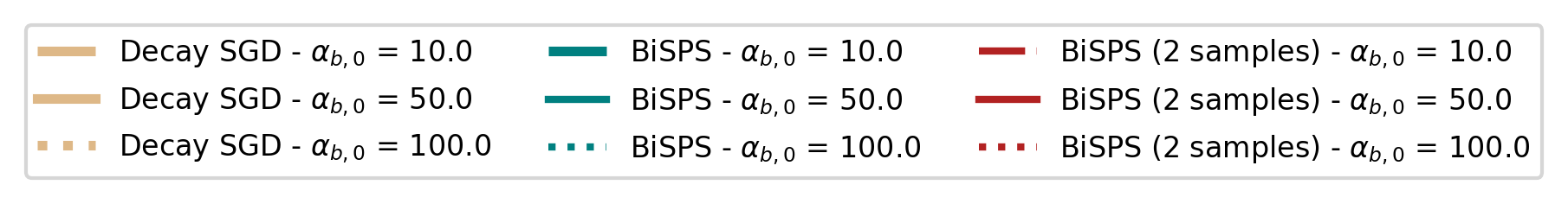}
  \caption{Comparison between BiSPS (2-samples), BiSPS (1-sample) and decaying-step SGD. Experiments are based on data distillation. For either version of BiSPS, the lower-level learning rate ($\beta_k$) is fixed at $10^{-4}$. The Inverse Hessian in \eqref{eq:hyperg} is treated as the Identity when computing the hypergradient \cite{lorraine2020optimizing}.}
  \label{fig:mnist_sample}
\end{figure}


\end{document}